\documentclass[onefignum,onetabnum]{siamart251216}

\usepackage[utf8]{inputenc}
\usepackage[T2A]{fontenc}
\usepackage{amsmath,amssymb,amscd,amsopn}
\usepackage{mathtools}
\usepackage{algorithm}
\usepackage{algpseudocode}
\usepackage{algorithmicx}
\usepackage{booktabs}
\usepackage{multirow}
\usepackage{graphicx}
\usepackage{epstopdf}

\newtheorem{remark}{Remark}

\ifpdf
\hypersetup{
  pdftitle={Riemannian Optimization for a Class of Low-Parametric Orthogonal Matrices},
  pdfauthor={Ali Aliev, Maxim Rakhuba}
}
\fi

\headers{Riemannian Optimization for \texorpdfstring{\ensuremath{\mathcal{GS}}}{GS} Orthogonal Matrices}{A. Aliev, M. Rakhuba}

\title{Riemannian Structure and Optimization for a Class of Low-Parametric Orthogonal Matrices}

\author{Ali Aliev\thanks{HSE University (\email{alievali0278@gmail.com}).}
\and Maxim Rakhuba\thanks{HSE University.}}

\begin{document}

\maketitle

\begin{abstract}
In this paper, we are concerned with matrices formed by block-diagonal factors interleaved with fixed permutations -- a flexible family of structured matrices. This class has recently drawn interest in deep learning architectures for its balanced expressivity-efficiency trade-off, yet efficient computational strategies for working with it remain to be found. We approach this problem through Riemannian geometry and examine under what conditions this class admits a smooth manifold structure. For the practically important case of orthogonal two-factor matrices, we derive the essential Riemannian tools and propose efficient algorithms for their implementation. The algorithms leverage automatic differentiation, support parameter sharing within each factor, and avoid explicit dense matrix construction. We test them within the Riemannian optimization framework on the best matrix approximation problem and for parameter-efficient fine-tuning of large language models. Beyond the two-factor setting, we study the geometric and matrix-theoretic properties of factorizations with a larger number of block-diagonal factors.  
\end{abstract}

\begin{keywords}
Riemannian optimization, structured orthogonal matrices, matrix factorizations, Riemannian geometry, parameter sharing
\end{keywords}

\begin{AMS}
15A23, 68T07, 15B10, 53B20, 65K10
\end{AMS}

\section{Introduction}
Structured matrix approximations are essential in numerical linear algebra and large-scale computing. They provide a practical way to balance an operator's expressiveness with its computational cost. Classical structures, such as Toeplitz \cite{Gray2006}, circulant \cite{Davis1979}, and Cauchy matrices \cite{Pan2001Structured}, allow for nearly linear matrix-vector multiplications. There are more recent examples, including butterfly matrix factorizations ~\cite{li2015butterfly, candes2009fast} that have a rich history, originally arising from the structure of fast unitary transforms like the Fast Fourier Transform (FFT) \cite{Parker1995}. 

While well-established in numerical analysis, butterfly matrix factorizations have recently gained popularity in deep learning for model compression \cite{Dao2019, Dao22}. A natural generalization of these butterfly-based and block-diagonal constructions is the \textit{Group-and-Shuffle} (\(\mathcal{GS}\)) family \cite{GS}. This class unifies several previous approaches, such as Monarch matrices \cite{Dao22} and block-butterfly factorizations \cite{Dao2019, Parker1995}, by parameterizing matrices as alternating sequences of block-diagonal factors and specific permutations.

Euclidean optimization of \(\mathcal{GS}\)-matrices ignores their geometry, which remains mainly unstudied despite empirical success (e.g., orthogonal fine-tuning \cite{GS}). In contrast, Riemannian optimization methods explicitly exploit the manifold geometry of the feasible set, offering a widely adopted framework for constrained problems \cite{Absil2008}.

This work utilizes a definition expressed in an alternative notation, introducing a minor formal distinction from the original work \cite{GS}.
\begin{definition} \label{GS_main_definition}
A matrix \( A \) is said to be in the class \( \mathcal{A}_m(P_{m-1}, \ldots, P_1) \) if it can be factorized as
\[
A = B_m P_{m-1} B_{m-1} \dots P_1 B_1,
\]
where:
each \( B_i \) is a block-diagonal matrix with \( k_i \) blocks of size \( b_i^1 \times b_i^2 \), each \( P_i \) is a fixed permutation matrix, and the dimensions are compatible according to the condition \(b_i^1 \cdot k_i = b_{i+1}^2 \cdot k_{i+1}.\)
\end{definition}
In what follows, we will simply write \(\mathcal{A}_m\) for brevity. Analogous sets \(\mathcal{A}_m^{\mathrm{orth}}\) can be introduced by imposing the additional condition that the blocks in each \(B_i\) be orthogonal. Similarly, we can define \(\mathcal{A}_m^{\mathrm{SO}}, \mathcal{A}_m^{\times}\), \(\mathcal{A}_m^{\mathrm{unit}}\), and \(\mathcal{A}_m^{\mathrm{SU}}\) corresponding to special orthogonal, invertible, unitary, and special unitary blocks in \(B_i\), respectively.  

In this paper, we focus on the case \(m = 2\). This setting satisfies the structural property \(\mathcal{A}_2^{\mathrm{orth}} = \mathcal{A}_2 \cap \mathrm{O}(N)\) established in \cite[Theorem 1]{GS}, while also being empirically successful in modern deep learning and computationally friendly (see \cite{GS, OF, Dao22}). Indeed, although near-linear complexity is desirable in many classical settings, representations used in deep learning often benefit from avoiding recursive structures and excessive sequential operations, especially because of backpropagation through long computational graphs. Moreover, modern GPUs are highly optimized for dense matrix multiplication, and the product of several moderately sized matrices can be more efficient than dozens of multiplications involving much smaller factors. This effect is likely to become even more pronounced as GPU architectures and matrix-multiplication kernels continue to improve.

\paragraph{1.1. Contributions}
This paper provides a geometric and algorithmic analysis of the \(\mathcal{GS}\) matrix family. Our main contributions are:
\begin{enumerate}
    \item We show by explicit counterexamples that unconstrained block factorizations with rectangular or square blocks need not form smooth manifolds.
    \item For \(\mathcal{GS}\)-matrices with orthogonal blocks, whose smooth manifold structure was recently established, we derive practical first‑order Riemannian optimization algorithms.
    \item We propose to use automatic differentiation (AD) to compute the tangent space projection while naturally handling overlapping block supports, without materializing dense matrices. As an application of this technique, we consider an efficient Riemannian Gradient Descent (RGD) algorithm and apply it to a matrix approximation problem and fine-tuning of LLMs.
    \item We further extend our framework to support parameter sharing. This enables additional compression and speedup by adapting our efficient automatic differentiation mechanism.
    \item We study geodesics for the case \(m=2\) under additional assumptions on the blocks, and for \(m=3\) we establish a connection with the CS decomposition.
    \item We prove that \(\mathcal{A}_3^{\mathrm{orth}} = \mathcal{A}_3 \cap \mathrm{O}(N)\), and we discuss hierarchical and higher-order factorizations.
\end{enumerate}

\paragraph{1.2. Structure of the paper}
The rest of the paper is organized as follows. Section~\ref{sec:manifold_conditions} investigates the manifold properties of the \(\mathcal{GS}\) family, presenting counterexamples for rectangular and general square blocks. Section \ref{sec:optimization} focuses on the orthogonal case: it derives the Riemannian geometry, details the retraction map in Section \ref{subsec:retraction}, and describes the tangent space projection in Section \ref{subsec:proj_tangent}. Section \ref{subsec:ad_implementation} provides the efficient gradient computation via automatic differentiation, while Section \ref{subsec:sharing} develops the shared-parameter version of the projection and AD formulas. Finally, Section~\ref{sec:experiments} presents numerical experiments that validate our framework. In the appendix, geodesics are studied under certain conditions on the blocks, along with the specific case \(m = 3\), for which interesting properties are revealed. Additionally, the invertible case is briefly discussed.

\paragraph{Related Work}
Structured matrices with subquadratic matrix-vector multiplication costs (such as sparse, low-rank, and fast transforms) are classic topics in numerical linear algebra \cite{Higham2002}. Butterfly matrices \cite{Parker1995} were originally used to describe fast algorithms like the FFT. Recently, they have been used to replace dense layers in neural networks \cite{Sindhwani2015, Dao2019}. Monarch matrices \cite{Dao22} combine block-diagonal factors with permutations, and the \(\mathcal{GS}\) class \cite{GS} generalizes these ideas by allowing flexible block sizes and varying numbers of factors. 

Optimization over constrained spaces, such as orthogonal groups, can be carried out using manifold optimization techniques \cite{Absil2008}. Matrix manifolds help preserve gradient norms and control Lipschitz constants \cite{Li2019, Singla2021}. Recently, orthogonal parameterizations have been applied to Parameter-Efficient Fine-Tuning (PEFT). While additive low-rank updates (like LoRA \cite{Hu2022}) are popular, multiplicative orthogonal updates (like OFT \cite{Qiu2023} and BOFT \cite{Liu2024}) have shown better results in preserving the pretrained structure of weight matrices.

\section{Conditions for the Smooth Manifold Structure of \texorpdfstring{\(\mathcal{GS}\)}{GS}-matrices} \label{sec:manifold_conditions}

It is natural to assume that for each admissible block type, the structured parametrization of the $\mathcal{GS}$ family yields a smooth manifold. However, in this section, we show that this is not a given: without specific constraints, the general set $\mathcal{A}_m$ fails to form a smooth manifold. We demonstrate this for $m=2$ by relying on the following structural property of $\mathcal{GS}$ block multiplication.

\begin{proposition}[Proposition 1 in \cite{GS}] \label{prop:gs_expansion}
Let $A = B_2 P B_1$ where the right factor $B_1$ consists of $k_1$ blocks of size $b_1 \times b_1$,
the left factor $B_2$ consists of $k_2$ blocks of size $b_2 \times b_2$, 
and $P$ is a permutation matrix with index map $\sigma$.
Denote by $\{v_i^\top\}$ the rows of the blocks of $B_1$ and by $\{u_j\}$ the columns of the blocks of $B_2$, 
both in consecutive order. 
Then $A$ can be written as a block matrix with $k_2 \times k_1$ blocks, where the block $A_{\alpha,\beta}$ is given by
\begin{equation} \label{eq:outer_product_sum}
    A_{\alpha,\beta} = \sum_{\substack{\lfloor \sigma(i)/b_2 \rfloor = \alpha \\[2pt] \lfloor i/b_1 \rfloor = \beta}} u_{\sigma(i)} v_i^\top .
\end{equation}
(Zero‑indexing is used for simplicity.)
\end{proposition}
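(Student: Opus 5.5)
The plan is to write $A = B_2 P B_1$ as a sum of rank-one terms and then read off which block of $A$ each term falls into. The convention is the one implicit in the statement: $P$ sends the standard basis vector $e_i$ to $e_{\sigma(i)}$, so $P = \sum_i e_{\sigma(i)} e_i^\top$ with zero-indexing throughout.

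First I use the block-diagonal structure of the factors. Because $B_1$ is block-diagonal with $k_1$ blocks of size $b_1\times b_1$, its $i$-th row (for $i = 0,\dots,N-1$, where $N = k_1 b_1 = k_2 b_2$) is the row $v_i^\top$ of the block with index $\lfloor i/b_1\rfloor$, placed in the column positions $\lfloor i/b_1\rfloor b_1,\dots,(\lfloor i/b_1\rfloor+1)b_1-1$ and zero elsewhere. I denote this padded row by $\tilde v_i^\top$, so that $B_1 = \sum_i e_i \tilde v_i^\top$. In the same way, the $j$-th column of $B_2$ is $u_j$ embedded in the row positions of block $\lfloor j/b_2\rfloor$; I call it $\tilde u_j$, so $B_2 = \sum_j \tilde u_j e_j^\top$.

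Next I multiply out:
\[
A = \sum_{j}\sum_{i} \tilde u_j\, e_j^\top P e_i\, \tilde v_i^\top = \sum_i \tilde u_{\sigma(i)} \tilde v_i^\top ,
\]
since $e_j^\top P e_i = \delta_{j,\sigma(i)}$. Each summand $\tilde u_{\sigma(i)}\tilde v_i^\top$ is nonzero only in row block $\lfloor \sigma(i)/b_2\rfloor$ and column block $\lfloor i/b_1\rfloor$. On that block, of size $b_2\times b_1$, it equals $u_{\sigma(i)} v_i^\top$. Grouping the summands by the block they occupy gives exactly \eqref{eq:outer_product_sum} for the $(\alpha,\beta)$ block, among the $k_2\times k_1$ blocks.

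I do not expect a real obstacle here. The one step needing care is fixing the permutation convention, $P e_i = e_{\sigma(i)}$ versus its inverse. If $\sigma$ is meant the other way round, the same computation goes through with $\sigma$ replaced by $\sigma^{-1}$. The other point to state carefully is the block-index bookkeeping, in particular that the row and column partitions of $A$ are those induced by $B_2$ and $B_1$ respectively.
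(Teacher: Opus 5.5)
Your proof is correct and complete. You use the convention $Pe_i=e_{\sigma(i)}$, which is exactly the paper's convention for $P_{(b,k)}$. The expansion $A=\sum_i \tilde u_{\sigma(i)}\tilde v_i^\top$ then shows that each term lives only in block $(\lfloor\sigma(i)/b_2\rfloor,\lfloor i/b_1\rfloor)$, which yields the formula directly.

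The paper has no proof of its own to compare against, since it imports the statement from \cite{GS}, and your column--row expansion is the natural argument. It also covers the rectangular-block counterexample the paper applies the formula to: there, read $b_2$ as the column width of the blocks of $B_2$ and $b_1$ as the row height of the blocks of $B_1$.
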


We explicitly apply this to the Perfect Shuffle permutation $P_{(b, k)}$ matrices~\cite{GL}.

\begin{definition}
Let \(N = b\cdot k\). For a zero-based index
\(i = r\cdot b+s\), with \(0\le r < k\) and \(0\le s < b\), define the perfect shuffle permutation by
\[
\sigma_{(b,k)}(r \cdot b+s) = s \cdot k+r.
\]
Let \(P_{(b,k)}\in\mathbb{R}^{N\times N}\) be the corresponding
permutation matrix, defined by $P_{(b,k)}e_i=e_{\sigma_{(b,k)}(i)}.$ The inverse permutation is \(P_{(b,k)}^{-1} = P_{(b,k)}^\top = P_{(k,b)}\).
\end{definition}

\subsection{Arbitrary Rectangular and Square Blocks}
We first show by an explicit counterexample that allowing arbitrary rectangular and even arbitrary square blocks does not, in general, yield a smooth manifold.

Set $k_L = k_R = 2$. Let $B_1$ consist of two $2 \times 3$ blocks with row vectors $u_1, u_2 \in \mathbb{R}^{1 \times 3}$ (block 0) and $v_1, v_2 \in \mathbb{R}^{1 \times 3}$ (block 1). Let $B_2$ consist of two $3 \times 2$ blocks with column vectors $x_1, x_2 \in \mathbb{R}^{3 \times 1}$ (block 0) and $y_1, y_2 \in \mathbb{R}^{3 \times 1}$ (block 1). 

Here, $P$ is the $4 \times 4$ Perfect Shuffle $P_{(2,2)}$. By its reshaping definition, it maps indices $(0, 1, 2, 3)$ to $\sigma = (0, 2, 1, 3)$. 
Let us evaluate block $A_{0,0}$ ($k_1=0, k_2=0$). The condition $\lfloor i/2 \rfloor = 0$ restricts $i \in \{0, 1\}$. The second condition $\lfloor \sigma(i)/2 \rfloor = 0$ is satisfied only by $i=0$ (since $\sigma(0)=0$, whereas $\sigma(1)=2$). Thus, the sum in \eqref{eq:outer_product_sum} collapses to a single term: $A_{0,0} = u_{\sigma(0)} v_0^\top = x_1 u_1$.
Applying this logic to all blocks yields exactly:
\begin{equation}
    A = \begin{pmatrix} x_1 u_1 & x_2 v_1 \\ y_1 u_2 & y_2 v_2 \end{pmatrix}.
\end{equation}
Each of the four blocks is a single outer product. Since the parameter vectors are unconstrained, they can be arbitrary or zero, meaning both rank 1 and rank 0 are attainable. Thus, $A$ is parameterized in the Cartesian product of four sets of $3 \times 3$ matrices of rank $\le 1$.

This rank collapse also occurs with square blocks. Let $B_1$ have $k_R = 2$ square $3 \times 3$ blocks with rows $u_1, u_2, u_3$ and $v_1, v_2, v_3 \in \mathbb{R}^{1 \times 3}$. Let $B_2$ have $k_L = 3$ blocks of size $2 \times 2$ with columns $x_1, x_2$, $y_1, y_2$, and $z_1, z_2 \in \mathbb{R}^{2 \times 1}$. 

Here, $P$ is the $6 \times 6$ Perfect Shuffle $P_{(3,2)}$, which maps $i \in \{0,\dots,5\}$ to $\sigma(i) \in \{0, 2, 4, 1, 3, 5\}$. We evaluate the conditions $\lfloor \sigma(i)/2 \rfloor = k_1 \in \{0,1,2\}$ and $\lfloor i/3 \rfloor = k_2 \in \{0,1\}$. 
For example, for the block $(k_1, k_2) = (1, 0)$, the condition $\lfloor i/3 \rfloor = 0$ implies $i \in \{0,1,2\}$. Their corresponding $\sigma(i)$ are $\{0, 2, 4\}$. Among these, only $\sigma(1)=2$ satisfies $\lfloor \sigma(i)/2 \rfloor = 1$. Thus, exactly $i=1$ is selected, yielding $A_{1,0} = u_2 v_1^\top = y_1 u_2$. Evaluating all six $(k_1, k_2)$, we derive:
\begin{equation}
    A = \begin{pmatrix}
    x_1 u_1 & x_2 v_1 \\
    y_1 u_2 & y_2 v_2 \\
    z_1 u_3 & z_2 v_3
    \end{pmatrix} \in \mathbb{R}^{6 \times 6}.
\end{equation}
Again, each of the six $2 \times 3$ blocks is a single outer product, has rank at most one.

In both cases, $\mathcal{A}_2$ is a Cartesian product of bounded-rank sets. These sets fail to be smooth manifolds. A proof proceeds by contradiction, using the results of \cite{Olikier2026}; the details are left to the reader.

Throughout the rest of this work, we consider only square blocks.

\subsection{Unitary or Orthogonal blocks}

For $m=2$ orthogonal and unitary factors give smooth embedded manifolds for every fixed permutation. This can be proved using Lie group action theory; details can be found in \cite{OF}. We briefly recall the argument for orthogonal blocks. Let
\[
\mathcal{B}_1 = \mathrm{O}(n_1) \times \dots \times \mathrm{O}(n_{l_1}), \qquad
\mathcal{B}_2 = \mathrm{O}(m_1) \times \cdots \times \mathrm{O}(m_{l_2})
\]
and set \(G = \mathcal{B}_2 \times \mathcal{B}_1\). Note that \(G\) is a compact Lie group. Observe that the orthogonality condition ensures that the transpose operation preserves both the block-diagonal structure and orthogonality. This allows us to define a Lie group action as follows:
\[
\Phi: (\mathcal{B}_2 \times \mathcal{B}_1) \times M \to M, \qquad \Phi\left((B_2, B_1), x\right) = g \cdot x = B_2 x B_1^{\top}.
\]
Since the orbit of the action of a compact Lie group on a manifold (either \(\mathrm{GL}_N(\mathbb{R})\) or \(\mathrm{O}(N)\)) yields a closed embedded submanifold (see \cite[Theorem 2.3]{GOV1993} or \cite[Corollary 21.6 \& Problem 21-17]{Lee}), we obtain the desired result. 

For \(m = 3\), there is considerable freedom in selecting block sizes and permutation matrices. Appendix \ref{appendix:dimension_m3} focuses on the case of Perfect Shuffle permutation matrices and relates it to the classical Cosine-Sine (CS) decomposition.

Invertible blocks (instead of orthogonal) are also of interest; see Appendix \ref{appendix:invertible} for a brief discussion.

\section{Riemannian Optimization Framework for Orthogonal Blocks}
\label{sec:optimization}

We develop an efficient Riemannian optimization framework for \(\mathcal{GS}\)-orthogonal matrices, including retraction, tangent projection, automatic differentiation (AD), and parameter sharing.  
Recall that for minimizing an objective function \(f\) on a smooth manifold \(\mathcal{M}\), the Riemannian gradient is \(\operatorname{grad} f(X) = \operatorname{P}_{T_X\mathcal{M}}(\nabla f(X))\), where \(\operatorname{P}_{T_X\mathcal{M}}\) is the orthogonal projection onto the tangent space. Parameter updates are performed using a retraction --- a smooth map \( R : T\mathcal{M} \to \mathcal{M} \) satisfying \( R_X(0_X) = X \) and \( DR_X(0_X) = \text{id}_{T_X\mathcal{M}} \) (see \cite[Definition 4.1.1]{Absil2008}) — which ensures that the updated point remains on the manifold, e.g., \( X_{k+1} = R_{X_k}(-\tau_k \operatorname{grad} f(X_k)) \) in the GD case.
To avoid forming the full gradient, we use an AD trick: introducing an auxiliary function on the tangent space and differentiating at zero yields the Riemannian gradient without dense matrix operations.

\subsection{Projection onto the Tangent Space}  \label{subsec:proj_tangent} 

While the smoothness of the manifold holds for any fixed permutation matrix, in the remainder of this paper we specifically take \(P_1\) to be the perfect shuffle permutation \(P_{(b_1, k_1)}\) (unless the context indicates otherwise), due to its notable properties,  that are particularly beneficial in the context of neural networks, as discussed in \cite{GS, OF, Dao22}. Moreover, such a permutation maximizes the manifold dimension for fixed block sizes satisfying \(k_2 \ge b_1\) ~\cite[Appendix B]{OF}.

We seek the projection onto the tangent space. For this purpose, we will employ the following property of such permutation matrices. 
\begin{lemma}[\cite{Dao22, GL}] \label{similarity_PS}
Let \(P\) be the Perfect Shuffle permutation matrix of order \(N = p \cdot q\). For any matrices \(U\) and \(W\) of sizes \(q \times q\) and \(p \times p\), respectively, the following identity holds
\[
P (U \otimes W) P^{\top} = W \otimes U.
\]
\end{lemma}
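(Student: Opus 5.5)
The plan is to verify the identity on a basis of elementary tensors, using the index definition of the perfect shuffle. First I fix the convention: I take \(P = P_{(p,q)}\), so that \(N = p\cdot q\). A zero-based index \(i = r\cdot p + s\) with \(0\le r<q\), \(0\le s<p\) is sent to \(\sigma_{(p,q)}(i) = s\cdot q + r\). With the standard Kronecker ordering (outer factor first), \(e_{rp+s} = e_r\otimes e_s\) with \(e_r\in\mathbb{R}^q\), \(e_s\in\mathbb{R}^p\). Likewise \(e_{sq+r} = e_s\otimes e_r\) with \(e_s\in\mathbb{R}^p\), \(e_r\in\mathbb{R}^q\). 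The definition \(P e_i = e_{\sigma(i)}\) therefore reads
\[
P(e_r\otimes e_s) = e_s\otimes e_r .
\]
By bilinearity of \(\otimes\), this gives \(P(x\otimes y) = y\otimes x\) for all \(x\in\mathbb{R}^q\), \(y\in\mathbb{R}^p\).

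Next I take the transpose. Since \(P\) is a permutation matrix, \(P^\top = P^{-1}\), which is the map \(y\otimes x\mapsto x\otimes y\) from \(\mathbb{R}^p\otimes\mathbb{R}^q\) to \(\mathbb{R}^q\otimes\mathbb{R}^p\). This agrees with the stated fact \(P_{(p,q)}^\top = P_{(q,p)}\).

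The main computation evaluates both sides on the basis vectors \(e_s\otimes e_r\) of \(\mathbb{R}^N\), viewed as \(\mathbb{R}^p\otimes\mathbb{R}^q\). It uses only \((U\otimes W)(x\otimes y) = Ux\otimes Wy\):
\[
P(U\otimes W)P^\top(e_s\otimes e_r) = P(U\otimes W)(e_r\otimes e_s) = P(Ue_r\otimes We_s) = We_s\otimes Ue_r = (W\otimes U)(e_s\otimes e_r).
\]
Both sides are linear maps that agree on a basis, so they are equal.

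I do not expect a genuine obstacle here. The only delicate point is bookkeeping: which of \(P_{(p,q)}\) and \(P_{(q,p)}\) is meant, and which factor of the Kronecker product is the outer one. With the opposite choice, the same argument goes through with the roles of \(p\) and \(q\) swapped, since \(P_{(q,p)} = P_{(p,q)}^\top\). So the plan is to state the convention explicitly at the start, as above, and check it against the dimensions in the statement: \(U\) is \(q\times q\) and acts on the outer factor of \(U\otimes W\).
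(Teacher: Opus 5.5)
Your proof is correct. You fix \(P=P_{(p,q)}\), identify \(e_{rp+s}=e_r\otimes e_s\), and check \(P(U\otimes W)P^\top\) against \(W\otimes U\) on the basis \(e_s\otimes e_r\). This is the standard verification of this commutation-matrix identity. The paper gives no proof of its own here and only cites \cite{Dao22, GL}.

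Your choice of convention matters when \(p\neq q\), and it is the one the paper relies on later. It gives \(P_1=P_{(b_1,k_1)}\) in Section~\ref{subsec:proj_tangent}. It also gives \(P^\top(A\otimes B)P=B\otimes A\) for \(A\) of size \(p\times p\), which is the form used in the proof of Corollary~\ref{corollary:similarity_PS}.
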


\begin{corollary} \label{corollary:similarity_PS}
Let \(Q = \operatorname{diag}(U_1, \dots, U_p)\) be a block diagonal matrix with blocks \(U_k\) of size \(q \times q\). Consider the similarity transformation
\[
M = P^{\top} Q P,
\]
where \(P\) is the perfect shuffle matrix from Lemma \ref{similarity_PS}. Then
\[
M = \begin{pmatrix}
D_{11} & \cdots & D_{1q} \\
\vdots & \ddots & \vdots \\
D_{q1} & \cdots & D_{qq}
\end{pmatrix}, \qquad
D_{ij} = \operatorname{diag}(u_{ij}^{(1)}, u_{ij}^{(2)}, \dots, u_{ij}^{(p)}).
\]
\end{corollary}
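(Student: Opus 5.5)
Write $Q$ as a sum of Kronecker products and apply Lemma~\ref{similarity_PS} term by term. With $E_{kk}\in\mathbb{R}^{p\times p}$ the diagonal matrix unit,
\[
Q=\operatorname{diag}(U_1,\dots,U_p)=\sum_{k=1}^{p} E_{kk}\otimes U_k ,
\]
so the factor $W=E_{kk}$ is $p\times p$ and the factor $U=U_k$ is $q\times q$. Expanding further, $U_k=\sum_{i,j=1}^{q} u^{(k)}_{ij}\,e_ie_j^\top$, where $e_ie_j^\top$ is the $q\times q$ matrix unit. Hence
\[
Q=\sum_{k=1}^{p}\sum_{i,j=1}^{q} u^{(k)}_{ij}\,\bigl(E_{kk}\otimes e_ie_j^\top\bigr).
\]
By linearity of $X\mapsto P^\top XP$ it suffices to compute $P^\top(E_{kk}\otimes e_ie_j^\top)P$.

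Lemma~\ref{similarity_PS} is stated as $P(U\otimes W)P^\top=W\otimes U$, and we need the conjugation in the other direction. Put $X=W\otimes U$ in the lemma and multiply by $P^\top$ on the left and $P$ on the right. This gives $P^\top(W\otimes U)P=U\otimes W$, with $U$ of size $q\times q$ and $W$ of size $p\times p$. Our terms have exactly the form $W\otimes U$ with $W=E_{kk}$ and $U=e_ie_j^\top$, so
\[
P^\top\bigl(E_{kk}\otimes e_ie_j^\top\bigr)P=e_ie_j^\top\otimes E_{kk}.
\]
Summing over $k,i,j$ yields
\[
M=\sum_{i,j=1}^{q} e_ie_j^\top\otimes\Bigl(\sum_{k=1}^{p}u^{(k)}_{ij}E_{kk}\Bigr)
=\sum_{i,j=1}^{q} e_ie_j^\top\otimes \operatorname{diag}\bigl(u^{(1)}_{ij},\dots,u^{(p)}_{ij}\bigr).
\]

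Now read off the block structure. A sum $\sum_{i,j} e_ie_j^\top\otimes D_{ij}$ with $D_{ij}\in\mathbb{R}^{p\times p}$ is the $q\times q$ block matrix whose $(i,j)$ block is $D_{ij}$. This is precisely the claimed form, with $D_{ij}=\operatorname{diag}(u^{(1)}_{ij},\dots,u^{(p)}_{ij})$.

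The main obstacle is bookkeeping. The swap identity has to be applied in the correct direction ($P$ versus $P^\top$), which is why the inversion step above is needed. We must also check that the perfect shuffle of order $N=pq$ in the lemma carries the same dimension roles (first factor $q\times q$, second $p\times p$) as in the corollary. If the paper's convention happened to be the reverse, the same computation with the roles of $p$ and $q$ exchanged, or equivalently using $P^{-1}=P^\top$, gives the identical result.
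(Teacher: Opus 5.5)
Your proof is correct and follows essentially the paper's argument: write $Q=\sum_{k}E_{kk}\otimes U_k$, swap the Kronecker factors via Lemma~\ref{similarity_PS}, and read off the $(i,j)$ block $\sum_k u^{(k)}_{ij}E_{kk}$. The differences are cosmetic. You obtain $P^\top(W\otimes U)P=U\otimes W$ by conjugating the lemma's identity, whereas the paper applies the lemma to $P^\top$ with the roles of $p$ and $q$ swapped. Your further expansion of $U_k$ into matrix units is unnecessary, since the lemma already holds for arbitrary $U_k$.
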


\begin{proof}
Let \(E_{kk}\) denote the \(p \times p\) matrix with \(1\) at the \((k,k)\) entry and zeros elsewhere. Note that \(P^\top\) is also a perfect shuffle permutation matrix. Applying Lemma \ref{similarity_PS} with \(P^\top\) in place of \(P\) yields \(P^\top (U \otimes W) P = W \otimes U\) for any matrices \(U,W\) of appropriate sizes. Using this, we obtain
\[
M = \sum_{k=1}^p P^\top (E_{kk} \otimes U_k) P = \sum_{k=1}^p U_k \otimes E_{kk}.
\]
The term \(U_k \otimes E_{kk}\) is a \(q \times q\) block matrix whose \((i,j)\)-th block is \(u_{ij}^{(k)} E_{kk}\). Summing over \(k\), the \((i,j)\)-th block of \(M\) becomes
\[
D_{ij} = \sum_{k=1}^p u_{ij}^{(k)} E_{kk} = \operatorname{diag}(u_{ij}^{(1)}, u_{ij}^{(2)}, \dots, u_{ij}^{(p)}),
\]
which completes the proof.
\end{proof}

For projecting Euclidean gradients onto the tangent space \(T_X \mathcal{A}_2^{\mathrm{orth}}\), we will use the standard fact (see, e.g., \cite[Example 3.5.3]{Absil2008}) that the tangent space to the orthogonal group is given by:
\[
T_{U}\mathrm{O}_{n} = \{Z = U\Omega : \Omega^{T} = -\Omega\} = U\mathcal{S}_{\mathrm{skew}}(n),
\]  
where \(\mathcal{S}_{\mathrm{skew}}(n)\) denotes the set of all skew‑symmetric \(n\times n\) matrices.

We introduce the space of block‑diagonal skew‑symmetric matrices (associated with \(B_i\)), denoted \(\mathfrak{so}_{\text{block}}(b_i, k_i)\). Similarly, let $\mathcal{P}_{\text{bdiag}(b_i, k_i)}$ denote the orthogonal projection operator that extracts the $k_i$ block-diagonal components of size $b_i \times b_i$ from an arbitrary matrix, setting all off-block-diagonal entries to zero.

Now let \(C_i\) be a block‑diagonal skew‑symmetric matrix. Then the tangent vectors to \(\mathcal{B}_i\) are precisely the matrices \(B_i C_i\). For convenience, define \( \Omega_i = B_i C_i B_i^\top \iff B_i C_i = \Omega_i B_i \). Note that \(\Omega_i\) is also block-diagonal with skew-symmetric blocks (\(\Omega_i^\top = (B_i C_i B_i^\top)^\top = B_i (-C_i) B_i^\top = -\Omega_i\)).
Given a Euclidean gradient \(Z\), the projection is found by solving
\begin{equation*}
\begin{split}
\Pi_X(Z) &= \operatorname*{argmin}_{\xi \in T_X \mathcal{GS}} \| Z - \xi \|_F^2 \\
&= \operatorname*{argmin}_{C_1, C_2} \| Z - (B_2 C_2 P_1 B_1 + B_2 P_1 B_1 C_1) \|_F^2 \\
&= \operatorname*{argmin}_{C_1, C_2} \| B_2^\top Z B_1^\top - (C_2 P_1 + P_1 \Omega_1) \|_F^2 \\
&= \operatorname*{argmin}_{C_1, C_2} \| \widetilde{Z} - (C_2 + \underbrace{P_1 \Omega_1 P_1^\top}_{\widehat{\Omega}_1}) \|_F^2, \quad \text{where } \widetilde{Z} := B_2^\top Z B_1^\top P_1^\top.
\end{split}
\end{equation*}
Using the decomposition \(\widetilde{Z} = \operatorname{sym}(\widetilde{Z}) + \operatorname{skew}(\widetilde{Z})\), the objective simplifies:
\begin{equation*}
\begin{split}
\| \widetilde{Z} - (\widehat{\Omega}_1 + C_2) \|_F^2 &= \| \operatorname{sym}(\widetilde{Z}) + \operatorname{skew}(\widetilde{Z}) - (\widehat{\Omega}_1 + C_2) \|_F^2 \\
&= \|\operatorname{sym}(\widetilde{Z})\|_F^2 + \|\operatorname{skew}(\widetilde{Z}) - (\widehat{\Omega}_1 + C_2)\|_F^2,
\end{split}
\end{equation*}
since symmetric and skew-symmetric matrices are orthogonal with respect to the Frobenius inner product, and both \(\widehat{\Omega}_1\) and \(C_2\) are skew-symmetric. Consequently, it suffices to solve
\begin{equation*}
\min_{\substack{C_2 \in \mathfrak{so}_{\text{block}}(b_2, k_2) \\ \Omega_1 \in \mathfrak{so}_{\text{block}}(b_1, k_1)}} \Big\| \widehat{Z} - \left(C_2 + \widehat{\Omega}_1 \right) \Big\|_F^2, \quad \widehat{Z} := \operatorname{skew}(\widetilde{Z}).
\end{equation*}

Let us partition \(\widehat{Z}\) into blocks according to the splitting of \(B_2\). Since \(\Omega_1\) is block-diagonal with skew-symmetric blocks, by choosing \(P_1\) as a Perfect Shuffle matrix and using Lemma \ref{similarity_PS}, we obtain that \(P_1 \Omega_1 P_1^\top\) is a block matrix with blocks of size \(k_1 \times k_1\) (there are \(b_1 \times b_1\) such blocks in total), where each block is diagonal. Moreover, skew-symmetry implies that the diagonal blocks on the main diagonal have zero entries, meaning the diagonal blocks are identically zero matrices. This allows us to rewrite our minimization task as follows:
\begin{equation} \label{minimization_tangent_project}
\min_{\substack{C_2, \Omega_1}} \left\|
\begin{pmatrix}
\widehat{Z}^{(11)} - C_2^{(1)} & \dots & \widehat{Z}^{(1 k_2)} \\
\vdots & \ddots & \vdots \\
\widehat{Z}^{(k_2 1)} & \dots & \widehat{Z}^{(k_2 k_2)} - C_{2}^{(k_2)}
\end{pmatrix}
- 
\begin{pmatrix}
0 & \widehat{\Omega}_1^{(1 2)} & \dots & \widehat{\Omega}_1^{(1 b_1)} \\
\vdots & 0 & \dots & \vdots \\
\vdots & \vdots & \ddots & \vdots \\
\widehat{\Omega}_1^{(b_1 1)} & \dots & \dots & 0
\end{pmatrix} \right\|_F^2,   
\end{equation}
where \(\widehat{\Omega}_1^{(\alpha\beta)} = \operatorname{diag}\left(\omega_1^{(\alpha\beta)},\dots,\omega_{k_1}^{(\alpha\beta)}\right),
\quad \omega_i^{(\beta\alpha)} = -\omega_i^{(\alpha\beta)}.\)
The matrices \(C_2\) and \(\Omega_1\) are skew-symmetric. We showed earlier:
\[ \Omega_1^\top = -\Omega_1 \iff (P_1 \Omega_1 P_1^\top)^\top = P_1 \Omega_1^\top P_1^\top = -P_1 \Omega_1 P_1^\top. \]
Thus, the minimization with respect to skew-symmetric matrices \((C_2, \Omega_1)\) is equivalent to the minimization with respect to skew-symmetric matrices \((C_2, \widehat{\Omega}_1)\) under the skew-symmetry constraints. The structure of the solution depends on the relationship between the block count \( k_1 \) and the block size \( b_2 \).

\paragraph{Case 1: Disjoint Supports (\( k_1 \ge b_2 \))}

In this case the matrix \(\widehat{\Omega}_1\) does not affect the diagonal blocks, and the matrix \(C_2\) does not affect the off-diagonal blocks; therefore, we can independently optimize first the diagonal blocks. More formally
\begin{equation*}
\begin{split}
\| \widehat{Z} - (C_2 + \widehat{\Omega}_1) \|_F^2 &= \| \widehat{Z} \|_F^2 - 2 \langle \widehat{Z}, C_2 + \widehat{\Omega}_1 \rangle + \| C_2 + \widehat{\Omega}_1 \|_F^2 \\
&= \| \widehat{Z} \|_F^2 - 2 \langle \widehat{Z}, C_2 \rangle - 2 \langle \widehat{Z}, \widehat{\Omega}_1 \rangle + \left( \|C_2\|_F^2 + \|\widehat{\Omega}_1\|_F^2 + 2 \langle C_2, \widehat{\Omega}_1 \rangle \right).
\end{split}
\end{equation*}

A crucial observation is that the subspaces of the parameters are orthogonal, i.e., \(\langle C_2, \widehat{\Omega}_1 \rangle = 0\). Indeed, \(C_2\) is strictly block-diagonal with blocks of size \(b_2\). Under the dimension constraint \(k_1 \geq b_2\) (which is equivalent to $k_2 \geq b_1$), the matrix \(\widehat{\Omega}_1\) has zero entries in the off-diagonal positions of the \(k_2 \times k_2\) blocks. The only possible overlap of non-zero patterns is on the main diagonal, but since both matrices are skew-symmetric, their diagonals are strictly zero. Thus, the matrices have disjoint supports, implying:
\[
\langle C_2, \widehat{\Omega}_1 \rangle = \sum_{i,j} (C_2)_{ij} (\widehat{\Omega}_1)_{ij} = 0.
\]
The optimization problem therefore decouples into two independent sub-problems (by completing the squares for each variable):
\begin{equation*}
\begin{split}
\| \widehat{Z} - (C_2 + \widehat{\Omega}_1) \|_F^2 &= \left( \|C_2\|_F^2 - 2 \langle \widehat{Z}, C_2 \rangle \right) + \left( \|\widehat{\Omega}_1\|_F^2 - 2 \langle \widehat{Z}, \widehat{\Omega}_1 \rangle \right) + \| \widehat{Z} \|_F^2 \\
&= \| \widehat{Z} - C_2 \|_F^2 + \| \widehat{Z} - \widehat{\Omega}_1 \|_F^2 - \|\widehat{Z}\|_F^2.
\end{split}
\end{equation*}
We can rewrite the optimal points in a slightly different form. For \(C_2\), we simply project \(\widehat{Z}\) onto the subspace of block-diagonal matrices with block size \(b_2\):
\begin{equation*}
C_2 = \mathcal{P}_{\text{bdiag}(b_2, k_2)}(\widehat{Z}).
\end{equation*}

To find \(\Omega_1\), we consider the term \(\| \widehat{Z} - P_1 \Omega_1 P_1^\top \|_F^2\). Exploiting the invariance of the Frobenius norm under orthogonal transformations, we apply the similarity transformation defined by \(P_1^\top (\cdot) P_1\):
\begin{equation*}
\| \widehat{Z} - P_1 \Omega_1 P_1^\top \|_F^2 = \| P_1^\top \widehat{Z} P_1 - P_1^\top (P_1 \Omega_1 P_1^\top) P_1 \|_F^2 = \| P_1^\top \widehat{Z} P_1 - \Omega_1 \|_F^2.
\end{equation*}
Since \(\Omega_1\) is required to be block-diagonal with block size \(b_1\), the solution is given explicitly by projecting the permuted matrix:
\begin{equation*}
\Omega_1 = \mathcal{P}_{\text{bdiag}(b_1, k_1)} \left( P_1^\top \widehat{Z} P_1 \right).
\end{equation*}

Observe that in \cite{OF}, the same condition that yields the disjoint support case is exactly the one that maximizes the orbit dimension and minimizes (more precisely, nullifies) the dimension of the stabilizer. Moreover, \cite{OF} uses this condition to provide some empirical justification for finding geodesics in the space of \(\mathcal{GS}\) matrices with blocks from the special orthogonal group. In Appendix~\ref{appendix:geodesics}, we present a rigorous argument for this approach.

\paragraph{Case 2: Overlapping Supports (\( k_1 < b_2 \))}
In the case where \(k_1 < b_2\), the structural supports of the parametric matrices \(C_2\) and \(\widehat{\Omega}_1 := P_1 \Omega_1 P_1^\top\) overlap. To properly project the gradient \(\widehat{Z}\) onto the tangent space, we solve the minimization problem \eqref{minimization_tangent_project} index-wise based on the respective supports.

Let \(\mathcal{S}_{C}\) be the set of matrix indices \((i,j)\) where \(C_2\) can be non-zero, and let \(\mathcal{S}_{\Omega}\) be the support of \(\widehat{\Omega}_1\). The explicit set of overlapping indices is given by their intersection: \(\mathcal{S}_{\text{overlap}} = \mathcal{S}_{C} \cap \mathcal{S}_{\Omega}\).

The optimization task decouples per index \((i,j)\). For indices \((i,j)\) in the set \(\mathcal{S}_{\Omega}\) but not in \(\mathcal{S}_{C}\), the parameter \((C_2)_{ij}\) is forced to zero, so the solution is simply \((\widehat{\Omega}_1)_{ij} = \widehat{Z}_{ij}\). For indices \((i,j)\) in the set \(\mathcal{S}_{C}\) but not in \(\mathcal{S}_{\Omega}\), the parameter \((\widehat{\Omega}_1)_{ij}\) is forced to zero, so the solution is \((C_2)_{ij} = \widehat{Z}_{ij}\). For overlapping indices where both parameters are free, we face the underdetermined equation \((C_2)_{ij} + (\widehat{\Omega}_1)_{ij} = \widehat{Z}_{ij}\).

Among the parameter pairs representing this same orthogonal projection, we select the pair of minimum product Frobenius norm:
\[
\min_{c,\omega} (c^2 + \omega^2) \quad \text{subject to} \quad c + \omega = \widehat{Z}_{ij},
\]
which yields the symmetric solution \((C_2)_{ij} = (\widehat{\Omega}_1)_{ij} = \frac{1}{2}\widehat{Z}_{ij}\).

\begin{lemma}\label{lemma:overlap_indices}
Let \(P_1\) be the Perfect Shuffle permutation corresponding to the factorization \(N = k_1 \cdot b_1 = k_2 \cdot b_2\). Assuming \(0\)-based indexing for the rows and columns, the set of non-diagonal overlapping indices where both \(C_2\) and \(\widehat{\Omega}_1\) can be non-zero is explicitly given by:
\[ \mathcal{S}_{\mathrm{overlap}} = \left\{ (i, j) \in \{0, \dots, N-1\}^2 \;\Big|\; i \neq j, \;\; \lfloor i / b_2 \rfloor = \lfloor j / b_2 \rfloor, \; \text{and} \; i \equiv j \pmod{k_1} \right\}. \]
\end{lemma}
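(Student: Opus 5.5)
The plan is to describe each support separately and then intersect them. The support $\mathcal{S}_C$ is immediate: $C_2$ is block-diagonal with $k_2$ blocks of size $b_2\times b_2$, so $(i,j)\in\mathcal{S}_C$ iff $\lfloor i/b_2\rfloor=\lfloor j/b_2\rfloor$. Since the blocks of $C_2$ are skew-symmetric, the diagonal entries vanish. This is why the statement restricts to $i\neq j$.

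The main work is identifying $\mathcal{S}_\Omega$, the support of $\widehat{\Omega}_1=P_1\Omega_1P_1^\top$. Here $\Omega_1$ is block-diagonal with $k_1$ blocks of size $b_1$. I would avoid the Kronecker form and compute directly with the index map. Since $P_1e_i=e_{\sigma(i)}$, we have $(P_1\Omega_1P_1^\top)_{\sigma(a),\sigma(c)}=(\Omega_1)_{a,c}$. Take $P_1=P_{(b_1,k_1)}$ and write $a=r b_1+s$ and $c=r' b_1+s'$ with $0\le r,r'<k_1$ and $0\le s,s'<b_1$. Then $(\Omega_1)_{a,c}$ can be nonzero only when $r=r'$, that is, when $a$ and $c$ lie in the same $b_1$-block. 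The images are $\sigma(a)=s k_1+r$ and $\sigma(c)=s' k_1+r$. Setting $i=\sigma(a)$ and $j=\sigma(c)$ gives $i\bmod k_1=j\bmod k_1=r$.

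Conversely, suppose $i\equiv j \pmod{k_1}$. Write $i=sk_1+r$ and $j=s'k_1+r$. Then $i,j$ are the images of $a=rb_1+s$ and $c=rb_1+s'$, which lie in the same $b_1$-block. So $(i,j)$ is a position that $\widehat\Omega_1$ can occupy freely. This gives $\mathcal{S}_\Omega=\{(i,j): i\equiv j \pmod{k_1}\}$, with diagonal entries again forced to zero. This agrees with Corollary~\ref{corollary:similarity_PS}: the condition corresponds to $k_1\times k_1$ blocks that are diagonal.

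Intersecting $\mathcal{S}_C$ and $\mathcal{S}_\Omega$ and removing the diagonal yields exactly the stated set.

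The only delicate point I expect is the bookkeeping of conventions. One must check whether $P_1$ or $P_1^\top$ realizes $\sigma$, that is, whether the relevant map is $\sigma_{(b_1,k_1)}$ or its inverse $\sigma_{(k_1,b_1)}$. If the inverse were used, the congruence would come out modulo $b_1$ rather than modulo $k_1$. I would settle this by verifying directly from $P_1e_i=e_{\sigma(i)}$, which gives $(P_1MP_1^\top)_{\sigma(a)\sigma(c)}=M_{ac}$. As a further check, I would confirm the result against the Case 1 claim: when $k_1\ge b_2$, two distinct indices in the same $b_2$-block differ by less than $k_1$, so they cannot be congruent modulo $k_1$. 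The overlap set is then empty, as Case 1 asserts.
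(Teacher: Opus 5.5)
Your proposal is correct and follows the same structure as the paper's proof. The support of $C_2$ comes from its block-diagonal form, the support of $\widehat{\Omega}_1 = P_1\Omega_1P_1^\top$ is characterised by $i\equiv j \pmod{k_1}$, and the overlap is their off-diagonal intersection. The difference is in how the congruence is obtained: you derive it in both directions by chasing indices through $\sigma_{(b_1,k_1)}$, while the paper reads it off from Corollary~\ref{corollary:similarity_PS} and only asserts the translation into a congruence. Your version therefore makes that step rigorous and explicitly settles the $P_1$ versus $P_1^\top$ convention.
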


\begin{proof}
The first condition, \(\lfloor i / b_2 \rfloor = \lfloor j / b_2 \rfloor\), follows directly from the definition of \(C_2\), which is strictly block-diagonal with block size \(b_2\).

For the second condition, we apply Corollary \ref{corollary:similarity_PS}. Recall that \(\Omega_1\) is block-diagonal with \(k_1\) blocks of size \(b_1\). The similarity transformation \(\widehat{\Omega}_1 = P_1 \Omega_1 P_1^\top\) rearranges these dense blocks. According to Corollary \ref{corollary:similarity_PS}, the resulting matrix \(\widehat{\Omega}_1\) consists of blocks that are strictly diagonal. Geometrically, this implies that non-zero entries of \(\widehat{\Omega}_1\) occur only when the row and column indices correspond to the same offset within the shuffled block structure. For the Perfect Shuffle defined on \(N = k_1 \cdot b_1\), this structural invariant translates algebraically to the congruence \(i \equiv j \pmod{k_1}\).

Note that if \(k_1 \ge b_2\), the conditions \(|i - j| \le b_2 - 1\) (from the first constraint) and \(i \equiv j \pmod{k_1}\) (from the second) can only be satisfied simultaneously if \(i = j\). Since we require \(i \neq j\) for skew-symmetric matrices, \(\mathcal{S}_{\mathrm{overlap}}\) is empty, aligning with the disjoint support case. When \(k_1 < b_2\), the intersection is non-empty and consists precisely of indices bounded by the block size \(b_2\) and separated by a stride of~\(k_1\).
\end{proof}

\subsection{Riemannian Gradient via Automatic Differentiation}
\label{subsec:ad_implementation}
As shown, for example in \cite{NRO}, the Riemannian gradient on fixed-rank matrix or tensor-train manifolds can be computed efficiently using Automatic Differentiation (AD) without explicitly forming the full Euclidean gradient or manually implementing the projection operators. The core idea is to define an auxiliary function on the tangent space parameters and differentiate it. This leads to a black-box implementation compatible with automatic differentiation packages, thereby making Riemannian AD readily applicable to deep learning tasks.
Let us show that we can apply this approach efficiently in our context.

\paragraph{Case $k_1 \geq b_2$}
We formally derive the gradient of the auxiliary function \(g(\Omega_1, C_2)\). Let \(\nabla f(X)\) denote the Euclidean gradient of \(f\) at \(X\), defined such that the differential is \(\mathrm{d}f = \langle \nabla f(X), \mathrm{d}X \rangle\), where \(\langle A, B \rangle = \operatorname{tr}(A^\top B)\).

Consider the auxiliary function \(g(\Omega_1, C_2) = f(X + \xi(\Omega_1, C_2))\). The tangent vector parameterized by the skew-symmetric block-diagonal matrices is:
\begin{equation*}
\xi(\Omega_1, C_2) = B_2 C_2 P_1 B_1 + B_2 P_1 \Omega_1 B_1.
\end{equation*}
This decomposition forms an isometry from the parameter space to the tangent space, i.e., \(\|\xi\|_F^2 = \|\Omega_1\|_F^2 + \|C_2\|_F^2\). The cross-term vanishes because \(\operatorname{tr}(C_2^\top P_1 \Omega_1 P_1^\top) = 0\), as established in Section~\ref{subsec:proj_tangent}.

To find the partial gradients evaluated at zero, we take the differential of \(\xi\) with respect to the parameters \(C_2\) and \(\Omega_1\):
\begin{equation*}
\mathrm{d}\xi = B_2 \mathrm{d}C_2 P_1 B_1 + B_2 P_1 \mathrm{d}\Omega_1 B_1.
\end{equation*}
Substituting this into the differential of the objective function yields:
\begin{equation} \label{eq:differential_f}
\mathrm{d}g = \langle \nabla f(X), B_2 \mathrm{d}C_2 P_1 B_1 \rangle + \langle \nabla f(X), B_2 P_1 \mathrm{d}\Omega_1 B_1 \rangle.
\end{equation}
By the definition of the gradient, \(\mathrm{d}g = \langle \nabla_{C_2} g, \mathrm{d}C_2 \rangle + \langle \nabla_{\Omega_1} g, \mathrm{d}\Omega_1 \rangle\). We isolate \(\mathrm{d}C_2\) and \(\mathrm{d}\Omega_1\) using the adjoint property of the Frobenius inner product (\(\langle A, BCD \rangle = \langle B^\top A D^\top, C \rangle\)). For the first term in \eqref{eq:differential_f}:
\begin{equation*}
\langle \nabla f(X), B_2 \mathrm{d}C_2 P_1 B_1 \rangle = \langle B_2^\top \nabla f(X) B_1^\top P_1^\top, \mathrm{d}C_2 \rangle.
\end{equation*}
Thus, $\nabla_{C_2} g(0, 0) = B_2^\top \nabla f(X) B_1^\top P_1^\top$. For the second term in \eqref{eq:differential_f}:
\begin{equation*}
\langle \nabla f(X), B_2 P_1 \mathrm{d}\Omega_1 B_1 \rangle = \langle P_1^\top B_2^\top \nabla f(X) B_1^\top, \mathrm{d}\Omega_1 \rangle.
\end{equation*}
Thus, $\nabla_{\Omega_1} g(0, 0) = P_1^\top B_2^\top \nabla f(X) B_1^\top$.

Notice the direct relationship between these partial gradients and the auxiliary matrix $\widetilde{Z} := B_2^\top \nabla f(X) B_1^\top P_1^\top$ derived in the projection step. We immediately have:
\begin{equation*}
\nabla_{C_2} g(0,0) = \widetilde{Z}, \quad \text{and} \quad \nabla_{\Omega_1} g(0,0) =  P_1^\top \underbrace{\left( B_2^\top \nabla f(X) B_1^\top P_1^\top \right)}_{\widetilde{Z}} P_1 = P_1^\top \widetilde{Z} P_1.
\end{equation*}

To obtain the final Riemannian gradient directions on the parameter spaces, we apply the skew-symmetric block-diagonal projection. To project onto the tangent space, we need to find \(\mathcal{P}_{\text{bdiag}(b_2, k_2)}(\widehat{Z})\), which, in terms of the gradient of the auxiliary function, means \(\mathcal{P}_{\text{bdiag}(b_2, k_2)}(\operatorname{skew}(\nabla_{C_2} g (0,0)))\), and we need to find \( \Omega_1 = \mathcal{P}_{\text{bdiag}(b_1, k_1)} \left( P_1^\top \widehat{Z} P_1 \right) \). To rewrite the final expression in terms of the auxiliary function, we note that for an arbitrary matrix \(A\), there exist two equivalent forms for the skew-symmetric projection. The first variant computes the projection directly
\[
\Omega_{\text{manual}} = P_1^\top \operatorname{skew}(A) P_1 = P_1^\top \left( \frac{A - A^\top}{2} \right) P_1 = \frac{1}{2} \left( P_1^\top A P_1 - P_1^\top A^\top P_1 \right).
\]
The second variant first performs the similarity transformation \(G = P_1^\top A P_1\) and then applies the skew operator
\[
\Omega_{\text{AD}} = \operatorname{skew}(G) = \frac{G - G^\top}{2}.
\]
Substituting the expression for \(G\) confirms their equality
\[
\Omega_{\text{AD}} = \frac{1}{2} \left( (P_1^\top A P_1) - (P_1^\top A P_1)^\top \right) = \frac{1}{2} \left( P_1^\top A P_1 - P_1^\top A^\top P_1 \right) = \Omega_{\text{manual}}.
\]
Note also that
\begin{align*}
\mathcal{P}_{\text{bdiag}(b_i, k_i)} \left( \operatorname{skew}(A) \right) &= \mathcal{P}_{\text{bdiag}(b_i, k_i)} \left( \frac{A - A^T}{2} \right) \\
&= \frac{\mathcal{P}_{\text{bdiag}(b_i, k_i)}(A) - \mathcal{P}_{\text{bdiag}(b_i, k_i)}(A^T)}{2} \\
&=\operatorname{skew} \left( \mathcal{P}_{\text{bdiag}(b_i, k_i)}(A) \right).
\end{align*}
This confirms that computing the Euclidean gradient of the AD auxiliary function and simply projecting it block-wise recovers the exact Riemannian gradient.

\paragraph{Case \(k_1 < b_2\)}
We have found that all elements behave as in the case \( k_1 \geq b_2 \), except for those in the overlap between \( \widehat{\Omega}_1 \) and \( C_2 \). For these overlapping elements, a simple division by two is required.

Since the set of overlapping elements is predetermined (i.e., deterministic), we can reuse the AD trick as before --- without constructing full matrices, which ensures efficiency. The necessary step is to compute the indices of the overlapping elements and halve their values, thereby obtaining the correct projection onto the tangent space. To correct this efficiently without constructing full matrices, we use the precomputed mask \( \mathbf{M} \) from Lemma \ref{lemma:overlap_indices}. The Riemannian gradient updates are modified as follows:
\[
\dot{C}_2 \leftarrow \dot{C}_2 - \tfrac{1}{2} M \odot \dot{C}_2,
\qquad
\dot{\Omega}_1 \leftarrow \dot{\Omega}_1 - \tfrac{1}{2} (P_1^\top M P_1) \odot \dot{\Omega}_1.
\]
This operation is an element-wise multiplication by a sparse tensor, which is computationally inexpensive.

\subsection{Sharing} \label{subsec:sharing}
In many computational architectures, including deep learning models, it is highly desirable to impose a parameter-sharing constraint to reduce the number of parameters. In our case, this requires multiple blocks within the factors \(B_1\) or \(B_2\) to be identical. Let us formally define this constraint and derive the corresponding Riemannian projection.

Let \(u_1 \leq k_1\) and \(u_2 \leq k_2\) be the number of \textit{unique} blocks for the right and left factors, respectively. We define surjective mapping functions \(\tau_1: \{1, \dots, k_1\} \to \{1, \dots, u_1\}\) and \(\tau_2: \{1, \dots, k_2\} \to \{1, \dots, u_2\}\), which map the block index in the full matrix to the index of the unique parameter.

The manifold of \(\mathcal{GS}\)-matrices with sharing is a submanifold of the original space, where the factors \(B_1\) and \(B_2\) are constrained such that \(B_1^{(i)} = B_1^{(j)}\) if \(\tau_1(i) = \tau_1(j)\) (and similarly for \(B_2\)). Consequently, the tangent vectors must satisfy the same structural constraints.
The tangent parameters \(\Omega_1\) and \(C_2\) are now constructed from sets of unique parameters \(\{\widetilde{\Omega}^{(p)}\}_{p=1}^{u_1}\) and \(\{\widetilde{C}^{(q)}\}_{q=1}^{u_2}\)
\begin{equation*}
\Omega_1^{(i)} = \widetilde{\Omega}^{(\tau_1(i))}, \quad C_2^{(j)} = \widetilde{C}^{(\tau_2(j))}.
\end{equation*}

\begin{remark}
In our sharing method, blocks may be identical across left and right matrices, but a single block cannot be shared between both \(B_1\) and $B_2$.
\end{remark}
Smoothness of the manifold can be established analogously to the case without sharing. Retraction also does not require significant modifications: it acts blockwise, and its proof is analogous to the case without the sharing mechanism.

\subsubsection{Projection with Sharing Constraints}
We project a Euclidean gradient \(Z\) onto the tangent space with parameter sharing, optimizing over unique parameters \(\{\widetilde{C}^{(q)}\}\) and \(\{\widetilde{\Omega}^{(p)}\}\)
\begin{equation*}
\min_{\substack{\{\widetilde{C}\}, \{\widetilde{\Omega}\}}} \left\| \widehat{Z} - \left(C_{\text{full}}(\{\widetilde{C}\}) + \widehat{\Omega}_{\text{full}}(\{\widetilde{\Omega}\}) \right) \right\|_F^2,
\end{equation*}
where \(\widehat{Z} := \operatorname{skew}(B_2^\top Z B_1^\top P_1^\top)\) was derived in the previous section.
Assuming the dimension constraint \(k_1 \geq b_2\) holds, the subspaces of \(C\) and \(\widehat{\Omega}\) remain orthogonal (disjoint support of non-zero elements), allowing us to decouple the minimization problem into
\begin{equation*}
\label{eq:shared_obj}
\sum_{j=1}^{k_2} \big\| \widehat{Z}_{jj} - C_{\text{full}}^{(j)} \big\|_F^2 + \sum_{i=1}^{k_1} \big\| G_{ii} - \Omega_{\text{full}}^{(i)} \big\|_F^2,
\end{equation*}
where \(G = P_1^\top \widehat{Z} P_1\), and \(\widehat{Z}_{jj}\) denote the diagonal blocks of \(\widehat{Z}\). Substituting the sharing constraints, the first term (for \(C\)) becomes
\begin{equation*}
\sum_{j=1}^{k_2} \big\| \widehat{Z}_{jj} - \widetilde{C}^{(\tau_2(j))} \big\|_F^2 = \sum_{q=1}^{u_2} \sum_{j \in \mathcal{J}_q} \big\| \widehat{Z}_{jj} - \widetilde{C}^{(q)} \big\|_F^2,
\end{equation*}
where \(\mathcal{J}_q = \{j \mid \tau_2(j) = q\}\) is the set of block indices sharing the \(q\)-th parameter.
Minimizing this quadratic form with respect to the unique parameter \(\widetilde{C}^{(q)}\) yields the arithmetic mean of the corresponding blocks of the gradient \(\widehat{Z}\)
\begin{equation*}
\widetilde{C}^{(q)} = \frac{1}{|\mathcal{J}_q|} \sum_{j \in \mathcal{J}_q} \widehat{Z}_{jj}.
\end{equation*}
Similarly, for the unique parameters of \(\Omega\)
\begin{equation*}
\widetilde{\Omega}^{(p)} = \frac{1}{|\mathcal{I}_p|} \sum_{i \in \mathcal{I}_p} G_{ii}, \quad \text{where } \mathcal{I}_p = \{i \mid \tau_1(i) = p\}.
\end{equation*}

\subsubsection{AD Implementation for Shared Parameters}
Now let us adapt AD for shared parameters. We treat the optimization with shared parameters as a composition of two functions: first, a sharing map that sends the unique parameters \(\widetilde{\mathbf{C}}, \widetilde{\mathbf{\Omega}}\) to the full block‑diagonal matrices \(C_{\text{full}}, \Omega_{\text{full}}\); second, the objective \(F(C_{\text{full}}, \Omega_{\text{full}}) = f(X + \xi(C_{\text{full}}, \Omega_{\text{full}}))\). The auxiliary function is then defined as 
\[g(\widetilde{\mathbf{C}}, \widetilde{\mathbf{\Omega}}) = F(C_{\text{full}}(\widetilde{\mathbf{C}}), \Omega_{\text{full}}(\widetilde{\mathbf{\Omega}})).\]
Applying the multivariable chain rule to find the gradient for a unique  \(\widetilde{C}^{(q)}\):
\begin{equation*}
\nabla_{\widetilde{C}^{(q)}} g = \sum_{j=1}^{k_2} \frac{\partial F}{\partial C_{\text{full}}^{(j)}} \cdot \frac{\partial C_{\text{full}}^{(j)}}{\partial \widetilde{C}^{(q)}}.
\end{equation*}
The partial derivative \(\frac{\partial C_{\text{full}}^{(j)}}{\partial \widetilde{C}^{(q)}}\) is the identity matrix \(I\) if block \(j\) is a copy of parameter \(q\) (i.e., \(j \in \mathcal{J}_q\)), and zero otherwise. The term \(\frac{\partial F}{\partial C_{\text{full}}^{(j)}}\) denotes the gradient of the objective with respect to the \(j\)-th block of the full matrix, treated as an independent variable. In Section \ref{subsec:proj_tangent}, we proved that for independent blocks, this gradient is the diagonal block of the matrix \(\widetilde{Z} = B_2^\top (\nabla_X f) B_1^\top P_1^\top\). Consequently, \(\frac{\partial F}{\partial C_{\text{full}}^{(j)}} = \widetilde{Z}_{jj}\).
Substituting these into the sum
\begin{equation}
\nabla_{\widetilde{C}^{(q)}} g = \sum_{j \in \mathcal{J}_q} \widetilde{Z}_{jj}.
\end{equation}

Similarly, for a unique parameter \(\widetilde{\Omega}^{(p)}\), we apply the chain rule. Recall that the gradient of the objective with respect to a block of the inner factor \(\Omega_{\text{full}}^{(i)}\) is given by the diagonal block of the rotated matrix \(G = P_1^\top \widetilde{Z} P_1\). Specifically, \(\frac{\partial F}{\partial \Omega_{\text{full}}^{(i)}} = G_{ii}\). Summing over all instances sharing the parameter \(p\):
\begin{equation*}
\nabla_{\widetilde{\Omega}^{(p)}} g = \sum_{i \in \mathcal{I}_p} \frac{\partial F}{\partial \Omega_{\text{full}}^{(i)}} = \sum_{i \in \mathcal{I}_p} G_{ii}.
\end{equation*}

We now reconcile the AD result (sum of gradients) with the analytic projection (average of gradients). 
The AD engine computes the dense gradient sum \(\sum \widetilde{Z}_{jj}\). To obtain the Riemannian gradient \(\dot{\widetilde{C}}^{(q)}\), we apply the skew-symmetric projection (which is a linear operator)
\[
\dot{\widetilde{C}}^{(q)} = \operatorname{skew}\left( \nabla_{\widetilde{C}^{(q)}} g \right) = \operatorname{skew}\left( \sum_{j \in \mathcal{J}_q} \widetilde{Z}_{jj} \right) = \sum_{j \in \mathcal{J}_q} \operatorname{skew}(\widetilde{Z}_{jj}).
\]
Since \(\widehat{Z}_{jj} = \operatorname{skew}(\widetilde{Z}_{jj})\), we have \(\dot{\widetilde{C}}^{(q)}_{\text{AD}} = \sum_{j \in \mathcal{J}_q} \widehat{Z}_{jj}.\) Comparing this with the analytic solution, which is the arithmetic mean \(\widetilde{C}^{(q)}_{\text{analytic}} = \frac{1}{|\mathcal{J}_q|} \sum_{j \in \mathcal{J}_q} \widehat{Z}_{jj}\), we obtain \(\dot{\widetilde{C}}^{(q)}_{\text{AD}} = |\mathcal{J}_q| \cdot \widetilde{C}^{(q)}_{\text{analytic}}. \) Similarly, we obtain the following equation \(\dot{\widetilde{\Omega}}^{(p)}_{\text{AD}} = |\mathcal{I}_p| \cdot \widetilde{\Omega}^{(p)}_{\text{analytic}}.\) Therefore, with parameter sharing, AD returns sums over all tied copies. The Riemannian gradient is obtained by dividing each shared block gradient by its multiplicity.

\begin{remark}
With parameter sharing and overlapping supports (\(k_1 < b_2\)), AD requires a minor correction. It sums gradients over shared instances; for indices in the overlap set \(\mathcal{S}_{\mathrm{overlap}}\), projection halves the gradient.

However, for arbitrary sharing patterns the element-wise $1/2$ mask is no longer providing minimal norm solution. In such cases, extracting the exact overlap components requires solving a small, precomputable least-squares problem
\[
\min_{\{\widetilde{C}\}, \{\widetilde{\Omega}\}} \sum_{(i,j) \in \mathcal{S}_{\text{overlap}}} \left\| \widehat{Z}_{ij} - \widetilde{C}^{(\tau_2(i))} - \widetilde{\Omega}^{(\tau_1(j))} \right\|_F^2.
\]
\end{remark}

\subsection{Retraction} \label{subsec:retraction} 
In this subsection, we show that the block-wise retraction functions as a valid retraction onto the manifold \(\mathcal{A}_2^{\mathrm{orth}}\).

Let \( X = B_2 P_1 B_1 \in \mathcal{A}_2^{\mathrm{orth}} \), and let \(\xi \in T_X \mathcal{A}_2^{\mathrm{orth}}\) be a tangent vector. By taking derivatives of the parameterized mapping \(\Phi(B_2, B_1) = B_2 P_1 B_1\), any tangent vector can be expressed as \(\xi = \eta_2 P_1 B_1 + B_2 P_1 \eta_1\), where \(\eta_i \in T_{B_i} \mathcal{B}_i\) are tangent vectors to the block-diagonal orthogonal factors. 

Let \(\mathrm{Retr}_{B_i}(\eta_i)\) be a standard retraction onto \(\mathcal{B}_i\), applied block-wise. We define the retraction \(\gamma_X(\xi, t)\) onto the manifold \(\mathcal{A}_2^{\mathrm{orth}}\) along the tangent direction $\xi$ as follows:
\begin{equation*}
\begin{split}
\gamma_X(\xi, t) &:= \gamma_X(\xi(\eta_1, \eta_2), t) = \Phi\left(\mathrm{Retr}_{B_2}(t\eta_2),\,\mathrm{Retr}_{B_1}(t\eta_1)\right) \\
&= \left(B_2 + t\eta_2 + \mathcal{O}(t^2)\right) P_1 \left(B_1 + t\eta_1 + \mathcal{O}(t^2)\right) \\
&= B_2 P_1 B_1 + t (\eta_2 P_1 B_1 + B_2 P_1 \eta_1) + \mathcal{O}(t^2) \\
&= X + t\xi + \mathcal{O}(t^2).
\end{split}
\end{equation*}
By definition, \(\gamma_X(\xi, 0) = X\) and \(\frac{\mathrm{d}}{\mathrm{d}t}\gamma_X(\xi, t) \big|_{t=0} = \xi\). To establish that this is a retraction, it remains to verify that \(\gamma_X(\xi,t)\) is a well-defined and smooth function of \(\xi\). For this, it remains to ensure that the resulting matrix is independent of the choice of factor tangent vectors representing \(\xi\).

When \(k_1 \ge b_2\), the pair of vectors
\((\eta_2,\eta_1)\) representing a given \(\xi\) is unique (and smooth). This follows from the preservation of vector norms (see Appendix~\ref{appendix:geodesics}), which ensures injectivity; surjectivity then follows since the tangent spaces have the same dimension (see \cite[Appendix B]{OF}). Hence, the constructed map serves as a valid retraction \cite[Definition 4.1.1]{Absil2008}.  Alternatively, one can interpret this as follows: since the blockwise retractions provide a first‑order (or higher, if applicable) approximation to the corresponding blockwise exponential maps (geodesics), and \(\Phi\) is a local isometry (see Appendix~\ref{appendix:geodesics}) and smooth, the composition  
\(\Phi\!\left(\mathrm{Retr}_{B_2}(t\eta_2),\,\mathrm{Retr}_{B_1}(t\eta_1)\right)\)  
approximates the exponential map (geodesics) on \(\mathcal{A}_2^{\mathrm{orth}}\). Consequently, \(\Phi\!\left(\mathrm{Retr}_{B_2}(t\eta_2),\,\mathrm{Retr}_{B_1}(t\eta_1)\right)\) is indeed a retraction.

\begin{remark} \label{rem:orthogonal_equivariance}
Note that since \(\Omega_1 = B_1 C_1 B_1^\top\) and the retractions we use (exponential, Cayley, polar) are analytic and satisfy \(f(B^\top \Omega B) = B^\top f(\Omega) B\) for any orthogonal \(B\), the update  
\(B_1^{\text{new}} = \operatorname{Retr}(\Omega_1) \cdot B_1^{\text{old}}\) is exactly equivalent to the standard right‑translation update \(B_1^{\text{new}} = B_1^{\text{old}} \cdot \operatorname{Retr}(C_1)\). We work directly with \(\Omega_1\) to avoid the extra matrix multiplications needed to recover \(C_1\) at every step.    
\end{remark}

For the case \(k_1 < b_2\), the construction suffers from an ambiguity. The significance of our prior choice of the minimum-norm solution becomes apparent at this stage: it ensures uniqueness of the resulting map \( \Phi\!\left(\operatorname{Retr}_{B_2}(t\eta_2),\,\operatorname{Retr}_{B_1}(t\eta_1)\right)\). Indeed, recall that $\xi = B_2 C_2 P_1 B_1 + B_2 P_1 B_1 C_1 = \Omega_2 B_2 P_1 B_1 + B_2 P_1 B_1 C_1 = \Omega_2 X + X C_1.$ Now define $L_X : \mathfrak{so}_{\text{block}}(b_2, k_2) \times \mathfrak{so}_{\text{block}}(b_1, k_1) \longrightarrow T_X \mathcal{A}_2^{\text{orth}},$ $L_X(\Omega_2, C_1) = \Omega_2 X + X C_1$ and observe that the minimum-norm problem
\[
\min_{L_X(\Omega_2,C_1)=\xi} \left(
\|\Omega_2\|_F^2+\|C_1\|_F^2 \right)
\]
has a nonempty feasible set and a strictly convex objective, so its minimizer \((\Omega_2^\ast, C_1^\ast)=L_X^\dagger \xi\) is uniquely determined by \((X,\xi)\). For the blockwise retractions considered in remark \ref{rem:orthogonal_equivariance}, orthogonal equivariance gives
\[
\operatorname{Retr}_{B_2}(t\Omega_2^\ast B_2)
=
\operatorname{Retr}_{I}(t\Omega_2^\ast)B_2,
\qquad
\operatorname{Retr}_{B_1}(tB_1C_1^\ast)
=
B_1\operatorname{Retr}_{I}(tC_1^\ast).
\]
Therefore, for any factorization \(X=B_2P_1B_1\),
\[
\Phi\!\left(\operatorname{Retr}_{B_2}(t\eta_2^\ast),
\operatorname{Retr}_{B_1}(t\eta_1^\ast)\right)
=
\operatorname{Retr}_{I}(t\Omega_2^\ast)\,
X\, \operatorname{Retr}_{I}(tC_1^\ast).
\]
Moreover, \(L_X\) is surjective onto 
\(T_X\mathcal{A}_2^{\mathrm{orth}}\).
Hence \(L_X^\dagger = L_X^\ast(L_X L_X^\ast)^{-1}\) depends smoothly on \(X\), because \(L_X\) does and \(L_X L_X^\ast\) remains invertible, and therefore \(\gamma_X(\xi,t)\) is smooth in \((X,\xi,t)\).
The resulting matrix is also independent of the particular factorization of \(X\) and hence the retraction
\[
\gamma_X(\xi,t)
=
\Phi\!\left(\mathrm{Retr}_{B_2}(t\eta_2^\ast),\,\mathrm{Retr}_{B_1}(t\eta_1^\ast)\right)
\]
is well-defined as a function of \(\xi\) alone (given fixed $X$).

\subsection{Complexity Analysis}

We analyze the computational and storage complexity of the proposed Riemannian optimization framework. Let $\mathcal{C}_f$ denote the arithmetic complexity of evaluating the objective function $f(A)$ using the block-diagonal factors $B_1$ and $B_2$. For each factor $i \in \{1, 2\}$, let $k_i$ be the number of blocks of size $b_i \times b_i$ (such that $k_i b_i = N$), and let $u_i \le k_i$ be the number of unique blocks under the parameter sharing constraint.

\begin{proposition}\label{prop:time_complexity}
The total arithmetic complexity for a single Riemannian optimization step is
\[ \mathcal{C}_{\mathrm{time}} = \mathcal{O}\left( \mathcal{C}_f + N(b_1 + b_2) + u_1 b_1^3 + u_2 b_2^3 \right). \]
\end{proposition}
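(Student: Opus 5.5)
We will bound the cost of one Riemannian gradient step by walking through its phases in execution order and charging each to one of the four terms in the statement. The phases are: (i) the forward and backward pass computing the partial gradients of the auxiliary function $g$ at zero; (ii) the projection of these gradients onto the skew-symmetric block-diagonal structure, including sharing and overlap corrections; and (iii) the blockwise retraction and the reassembly of the factors.

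\textbf{Phase (i).} By the construction of Section~\ref{subsec:ad_implementation}, $g(\Omega_1,C_2)=f(X+\xi(\Omega_1,C_2))$. For a fixed $X$, however, it is cheaper to evaluate $f$ through the factors $(I+C_2)B_2P_1(I+\Omega_1)B_1$, or through the linearization, because these agree to first order at zero. Evaluating $g$ therefore reduces to evaluating $f$ with factors perturbed by block-diagonal matrices. The perturbation costs $\mathcal{O}(k_1 b_1^2+k_2b_2^2)=\mathcal{O}(N(b_1+b_2))$ extra flops per application of a block-diagonal matrix, and applying $P_1$ is a free index permutation. By the cheap-gradient principle of reverse-mode AD, the gradient costs a constant multiple of the forward evaluation, so this phase costs $\mathcal{O}(\mathcal{C}_f+N(b_1+b_2))$. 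In the sharing case, the chain-rule derivation of Section~\ref{subsec:sharing} shows that AD simply accumulates the $k_i$ block gradients into $u_i$ buffers, at cost $\mathcal{O}(k_ib_i^2)=\mathcal{O}(Nb_i)$.

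\textbf{Phase (ii).} The output of phase (i) consists of $u_i$ blocks of size $b_i\times b_i$. Three operations act on them:
\begin{itemize}
\item taking $\operatorname{skew}$ costs $\mathcal{O}(b_i^2)$ per block;
\item dividing by the multiplicities $|\mathcal{J}_q|,|\mathcal{I}_p|$ costs the same;
\item applying the precomputed overlap mask of Lemma~\ref{lemma:overlap_indices}, together with its permuted version $P_1^\top MP_1$, costs at most $\mathcal{O}(Nb_i)$ elementwise operations over the block-diagonal support.
\end{itemize}
Hence this phase costs $\mathcal{O}(N(b_1+b_2))$.

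\textbf{Phase (iii).} Remark~\ref{rem:orthogonal_equivariance} gives the update $B_i^{\text{new}}=\operatorname{Retr}(\Omega_i)B_i$, performed per unique block. Each block requires one retraction evaluation (matrix exponential, Cayley transform, or polar/QR), which takes $\mathcal{O}(b_i^3)$ flops, and one $b_i\times b_i$ matrix product, also $\mathcal{O}(b_i^3)$. With sharing, only the $u_i$ distinct blocks are updated, giving $\mathcal{O}(u_1b_1^3+u_2b_2^3)$. Summing the three phases yields the claimed bound.

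\textbf{Main obstacle.} The delicate step is phase (i): justifying that the AD pass costs $\mathcal{O}(\mathcal{C}_f)$ plus lower-order terms, and never forms dense $N\times N$ matrices such as $\widetilde Z$. I would argue this via the Baur--Strassen/cheap-gradient theorem applied to the computational graph of $f$ composed with the factor parametrization. The argument must note explicitly that the only operations introduced beyond those in $f$ are block-diagonal products and permutations, and that these are exactly the $N(b_1+b_2)$ term.
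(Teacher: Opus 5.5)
Your proposal is correct and follows the paper's three-stage accounting: the AD gradient costs $\mathcal{O}(\mathcal{C}_f)$ by the cheap-gradient principle; the skew, block-diagonal extraction and mask operations are elementwise and bounded by the $k_1b_1^2+k_2b_2^2=N(b_1+b_2)$ nonzeros; and the $u_i$ unique blocks each need an $\mathcal{O}(b_i^3)$ retraction. One small slip, which does not affect the count: with the paper's convention $\xi=B_2C_2P_1B_1+B_2P_1\Omega_1B_1$, the factorization agreeing with $X+\xi$ to first order is $B_2(I+C_2)P_1(I+\Omega_1)B_1$, whereas your $(I+C_2)B_2P_1(I+\Omega_1)B_1$ corresponds to the left-translated generator $\Omega_2=B_2C_2B_2^\top$ (consistent with your phase (iii) update, but not literally with $\xi$).
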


\begin{proof}
A single optimization step consists of three main stages: computing the Euclidean gradients, projecting them onto the tangent space, and applying the retraction.

Note that both computing the Euclidean gradients and using automatic differentiation require \(\mathcal{O}(\mathcal{C}_f)\) operations (see \cite[Section 18.9]{Tyrtyshnikov1997}.) Second, projecting onto the tangent space further requires extracting the block‑diagonal part, taking the skew‑symmetric components, and applying the overlap correction mask from the AD results (permutation matrix multiplications may also be involved, but they cost \(\mathcal{O}(N)\)). Since these are element-wise operations bounded by the number of non-zero entries in the block-diagonal matrices, the cost is proportional to $k_1 b_1^2 + k_2 b_2^2 = N(b_1 + b_2)$. Thus, the tangent space projection requires $\mathcal{O}(N(b_1 + b_2))$ operations.

Finally, the optimizer must map the tangent vectors back to the manifold. We compute block-wise retraction for the $u_i$ unique blocks. Retracting a block of size $b_i \times b_i$ costs $\mathcal{O}(b_i^3)$. Hence, the retraction step costs $\mathcal{O}(u_1 b_1^3 + u_2 b_2^3)$. Summing the gradient computation cost and the manifold overhead gives the stated bound.
\end{proof}

\begin{proposition} \label{prop:space_complexity}
Let $\mathcal{S}_f$ be the complexity required to evaluate $f(A)$ (including intermediate activations). The total memory footprint of the optimizer is $\mathcal{C}_{\mathrm{space}} = \mathcal{O}(\mathcal{S}_f + u_1 b_1^2 + u_2 b_2^2)$. The initialization of the overlap mask requires $\mathcal{O}(N \cdot \max(b_1, b_2))$ integer operations and incurs a negligible $\mathcal{O}(b_2^2)$ storage overhead.
\end{proposition}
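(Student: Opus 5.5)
The proof will mirror that of Proposition~\ref{prop:time_complexity}: count the memory used by each stage of one optimization step and add the counts. I will treat the two claims separately, first the memory footprint of the optimizer and then the cost of building the overlap mask from Lemma~\ref{lemma:overlap_indices}.

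\textbf{Memory footprint.} First, evaluating $f$ and back-propagating through it with automatic differentiation uses $\mathcal{O}(\mathcal{S}_f)$ memory. This holds because reverse-mode AD stores only the forward activations plus a constant-factor overhead, and the auxiliary function $g$ never forms $X+\xi$ as a dense matrix: the factors are applied blockwise. Second, the persistent optimizer state consists of the unique blocks of $B_1$ and $B_2$ and their tangent parameters $\widetilde{\Omega}^{(p)}$ and $\widetilde{C}^{(q)}$. This state takes $\mathcal{O}(u_1 b_1^2 + u_2 b_2^2)$ memory. The full block-diagonal matrices $C_{\mathrm{full}}$ and $\Omega_{\mathrm{full}}$ are only index-gathered views of these unique blocks and need not be stored. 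The perfect shuffle $P_1$ is applied implicitly as a reshape/transpose, so no extra matrix is kept. Third, the blockwise retraction works on one $b_i\times b_i$ block at a time and can overwrite it, so it needs $\mathcal{O}(b_i^2)$ workspace per block, which is already within the state bound.

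\textbf{Overlap mask.} By Lemma~\ref{lemma:overlap_indices}, membership of $(i,j)$ in $\mathcal{S}_{\mathrm{overlap}}$ depends only on two things: the pair's position inside a diagonal $b_2\times b_2$ block, and the congruence $i\equiv j \pmod{k_1}$. Since $k_1 \mid N$ and every block starts at a multiple of $b_2$, I need to check whether $b_2\lfloor i/b_2\rfloor \equiv 0 \pmod{k_1}$, that is, whether the block offset affects the congruence. The safe formulation is to write $i = \alpha b_2 + s$ and $j = \alpha b_2 + t$, so that $i\equiv j \iff s\equiv t \pmod{k_1}$. The condition is therefore independent of the block index $\alpha$, and a single $b_2\times b_2$ Boolean pattern suffices. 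This gives the $\mathcal{O}(b_2^2)$ storage claim. The mask $P_1^\top M P_1$ used for $\dot{\Omega}_1$ is never stored; it is obtained by the same index remapping.

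For the operation count I will describe the initialization as scanning, for every row index $i\in\{0,\dots,N-1\}$, the candidate columns $j$ in its $C_2$-block (at most $b_2$ of them) and its $\widehat{\Omega}_1$ support (at most $b_1$ of them, by Corollary~\ref{corollary:similarity_PS}). Each candidate is tested with $\mathcal{O}(1)$ integer divisions and mods. This costs $\mathcal{O}(N\max(b_1,b_2))$, matching the statement. It is an upper bound, and the scan is needed anyway to locate the corresponding entries of the $\Omega_1$ blocks.

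The main obstacle is the $\alpha$-independence step behind the $\mathcal{O}(b_2^2)$ storage. I will handle it through the reduction $i-j = s-t$ shown above. Everything else is routine bookkeeping, and summing the terms gives $\mathcal{C}_{\mathrm{space}} = \mathcal{O}(\mathcal{S}_f + u_1 b_1^2 + u_2 b_2^2)$.
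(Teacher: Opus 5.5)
Your proposal is correct and follows essentially the same accounting as the paper's proof. Both arguments store only the $u_i$ unique $b_i\times b_i$ blocks, build the mask once by scanning each of the $N$ indices over $\mathcal{O}(\max(b_1,b_2))$ candidate columns, and store it compactly in $\mathcal{O}(b_2^2)$ because it repeats across the diagonal $b_2$-blocks. You go further than the paper by explicitly justifying that repetition (within a block, $i-j=s-t$, so the congruence does not depend on the block index) and by accounting for the AD and retraction workspace, both of which the paper simply asserts.
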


\begin{proof}
The manifold degrees of freedom require storing only the $u_i$ unique blocks of size $b_i \times b_i$. This requires $\mathcal{O}(u_1 b_1^2 + u_2 b_2^2)$ storage to the memory footprint. Furthermore, the tangent space projection mask $\mathbf{M}$ (derived in Lemma \ref{lemma:overlap_indices}) is evaluated exactly once prior to training. Constructing it involves verifying the congruence $i \equiv j \pmod{k_1}$ strictly within the local $b_2 \times b_2$ block neighborhoods. By evaluating this condition for the $N$ diagonal elements within a bounding radius of $\max(b_1, b_2)$, the mask generation is limited to $\mathcal{O}(N \cdot \max(b_1, b_2))$ fast integer modulo operations. Because the mask repeats across blocks, the resulting constant Boolean tensor is stored compactly with a size bounded by $\mathcal{O}(b_2^2)$, adding almost zero persistent memory overhead during the iterative training process.
\end{proof}

\subsection{Fixed Boundary Permutations}
In practical applications, the \(\mathcal{GS}\) structure often includes fixed permutation matrices at the input and output boundaries
\begin{equation}
\label{eq:boundary_perm}
A = P_{\text{out}} \left( B_m P_{m-1} \dots B_1 \right) P_{\text{in}},
\end{equation}
where \(P_{\text{in}}\) and \(P_{\text{out}}\) are constant (non-trainable) permutation matrices. For instance, one may initialize \(P_{\text{in}} = I\), all block-diagonal factors \(B_i = I\), and set \(P_{\text{out}} = P_1^\top\) (the inner permutation). With this choice the overall transformation \(A\) starts as the identity, providing a natural initialization for neural network training. Notice that this also enables us to learn the Kronecker product \( U \otimes V \), where \( U \) and \( V \) are fully shared across the corresponding block‑diagonal factor blocks, using Lemma \ref{similarity_PS}.

Since permutations are orthogonal matrices, the mapping \(\Psi(X) = P_{\text{out}} X P_{\text{in}}\) defines an isometry of the ambient Euclidean space with respect to the Frobenius metric. Consequently, the Riemannian optimization framework requires only minimal adjustments:
\begin{enumerate}
    \item It is straightforward to show that the retraction remains blockwise.
    \item Computational Cost: Applying a fixed permutation only requires reindexing the entries and has complexity linear in the size of the input tensor.
    \item Let \(G_{\text{full}} = \nabla_A f\) be the Euclidean gradient of the objective with respect to the full matrix \(A\). By definition of the Frobenius inner product, the gradient \(G_{\text{core}} = \nabla_X f\) satisfies:
    \[ \langle G_{\text{full}}, \mathrm{d}A \rangle = \operatorname{tr}\left( G_{\text{full}}^\top P_{\text{out}} \mathrm{d}X P_{\text{in}} \right). \]
    Using the cyclic property of the trace and the orthogonality of permutation matrices, we obtain:
    \[ \operatorname{tr}\left( P_{\text{in}} G_{\text{full}}^\top P_{\text{out}} \mathrm{d}X \right) = \langle P_{\text{out}}^\top G_{\text{full}} P_{\text{in}}^\top, \mathrm{d}X \rangle. \]
    Therefore, the exact Euclidean gradient with respect to the core matrix \(X\) is simply:
    \[ G_{\text{core}} = P_{\text{out}}^\top G_{\text{full}} P_{\text{in}}^\top. \]
    Computationally, this implies that we only need to apply inverse permutations (memory reordering, \(\mathcal{O}(MN)\) complexity) to the incoming gradient before passing it to our Riemannian optimization logic.
    \item By the same reasoning, the Riemannian gradient should be left‑multiplied by \(P_{\text{out}}^\top\) and right‑multiplied by \(P_{\text{in}}^\top\).
\end{enumerate}

\subsection{Vector transport}
Many optimizers (e.g., SGD with momentum) require accumulating gradient information over time, which necessitates transporting vectors from the tangent space at the current point to the tangent space at the next point. For embedded submanifolds of a Euclidean space, the orthogonal projection of a tangent vector from the tangent space of one point onto the tangent space of another serves as a natural and rigorously justified vector transport (see \cite[Section 8.1.3]{Absil2008}).

We now need to compute the orthogonal projection of a tangent vector efficiently, without forming the full matrices. In what follows, we rely on the following lemma, which is a direct consequence of block matrix multiplication rules; we omit its proof for brevity.

\begin{lemma} \label{lemma:block_diag_proj}
If a matrix \(A\) is strictly block-diagonal with \(k_i\) blocks of size \(b_i \times b_i\), then for any compatible matrix \(B\),
\[
\mathcal{P}_{\mathrm{bdiag}(b_i, k_i)}(A \cdot B) = A \cdot \mathcal{P}_{\mathrm{bdiag}(b_i, k_i)}(B),
\]
and analogously
\[
\mathcal{P}_{\mathrm{bdiag}(b_i, k_i)}(B \cdot A) = \mathcal{P}_{\mathrm{bdiag}(b_i, k_i)}(B) \cdot A.
\] 
\end{lemma}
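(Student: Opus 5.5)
The plan is to prove both identities by direct block-matrix computation. The only fact needed is that multiplying by a block-diagonal matrix acts block-row by block-row (on the left) or block-column by block-column (on the right).

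Partition $B$ conformally into a $k_i\times k_i$ array of $b_i\times b_i$ blocks $B^{(\alpha\beta)}$, and write $A=\operatorname{diag}(A^{(1)},\dots,A^{(k_i)})$. By the block multiplication rule,
\[
(AB)^{(\alpha\beta)}=\sum_{\gamma}A^{(\alpha\gamma)}B^{(\gamma\beta)}=A^{(\alpha)}B^{(\alpha\beta)},
\]
because $A^{(\alpha\gamma)}=0$ whenever $\gamma\neq\alpha$. The operator $\mathcal{P}_{\mathrm{bdiag}(b_i,k_i)}$ keeps exactly the blocks with $\alpha=\beta$ and zeroes out the others. Therefore the $(\alpha,\beta)$ block of $\mathcal{P}_{\mathrm{bdiag}(b_i,k_i)}(AB)$ equals $\delta_{\alpha\beta}A^{(\alpha)}B^{(\alpha\alpha)}$.

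Next compute the right-hand side in the same way. Write $\mathcal{P}_{\mathrm{bdiag}(b_i,k_i)}(B)=\operatorname{diag}(B^{(11)},\dots,B^{(k_ik_i)})$. Its product with $A$ is block diagonal, with $\alpha$-th diagonal block $A^{(\alpha)}B^{(\alpha\alpha)}$. This matches the left-hand side block by block, which proves the first identity.

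The second identity follows symmetrically, since $(BA)^{(\alpha\beta)}=B^{(\alpha\beta)}A^{(\beta)}$. Alternatively, transpose the first identity and use two facts: $\mathcal{P}_{\mathrm{bdiag}}$ commutes with transposition, and $A^\top$ is again block diagonal with the same block partition.

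There is no real obstacle here. The only point that needs care is that the block partition of $B$ must be the same $(b_i,k_i)$ partition as that of $A$. This is what "compatible" means in the statement, and it is what makes the off-diagonal blocks of $A$ vanish in the sum above.
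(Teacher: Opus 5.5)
Your proof is correct and follows the same route the paper indicates: the paper omits the proof and calls the lemma a direct consequence of block matrix multiplication rules, which is exactly the blockwise computation you give. Your transpose-based derivation of the second identity is also valid.
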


According to Section \ref{subsec:proj_tangent}, to project any ambient vector \( Z \) onto \( T_{X_{k+1}} \mathcal{A}_2^{\mathrm{orth}} \), we first compute the auxiliary matrix \( \widetilde{Z} = \widetilde{B}_2^\top Z \widetilde{B}_1^\top P_1^\top \), extract the corresponding block diagonals from it (and from its permuted counterpart), and then apply the skew-symmetric projection along with the overlap mask (if needed).

For a tangent momentum vector \( Z = B_2 C_2 P_1 B_1 + B_2 P_1 \Omega_1 B_1 \in T_{X_{k}} \mathcal{A}_2^{\mathrm{orth}} \), we substitute it into the projection formula:
\[
\widetilde{Z} = \widetilde{B}_2^\top (B_2 C_2 P_1 B_1 + B_2 P_1 \Omega_1 B_1) \widetilde{B}_1^\top P_1^\top.
\]
By introducing the block-diagonal transition matrices \( W_2 = \widetilde{B}_2^\top B_2 \) and \( W_1 = B_1 \widetilde{B}_1^\top \), the expression simplifies to:
\[
\widetilde{Z} = W_2 C_2 P_1 W_1 P_1^\top + W_2 P_1 \Omega_1 W_1 P_1^\top.
\]
Notice that the parameters of the transported vector, denoted as \(\widetilde{C}_2\) and \(\widetilde{\Omega}_1\), are obtained by projecting \(\widetilde{Z}\) and \(P_1^\top \widetilde{Z} P_1\), respectively. Since the transition matrices \(W_i\) and the generators \(C_2, \Omega_1\) are strictly block-diagonal, we can apply Lemma \ref{lemma:block_diag_proj} to factor them outside the projection operators.

To simplify the notation, we introduce two structure-preserving index permutation operators:
\[
\Pi_{1 \to 2}(M) := \mathcal{P}_{\mathrm{bdiag}(b_2, k_2)}(P_1 M P_1^\top), \quad \Pi_{2 \to 1}(K) := \mathcal{P}_{\mathrm{bdiag}(b_1, k_1)}(P_1^\top K P_1).
\]
Using Lemma \ref{lemma:block_diag_proj} to extract the block-diagonal components \(\widehat{C}_2\) and \(\widehat{\Omega}_1\), the projection perfectly decouples into block-wise multiplications:
\begin{align*}
\widehat{C}_2 &= \mathcal{P}_{\mathrm{bdiag}(b_2, k_2)}(\widetilde{Z}) = W_2 C_2 \Pi_{1 \to 2}(W_1) + W_2 \Pi_{1 \to 2}(\Omega_1 W_1), \\
\widehat{\Omega}_1 &= \mathcal{P}_{\mathrm{bdiag}(b_1, k_1)}(P_1^\top \widetilde{Z} P_1) = \Pi_{2 \to 1}(W_2 C_2) W_1 + \Pi_{2 \to 1}(W_2) \Omega_1 W_1.
\end{align*}
Note that the projection involves block-diagonal matrix multiplication, which is efficiently performed blockwise. For \(\Pi_{1 \to 2}(M)\) and \(\Pi_{2 \to 1}(K)\), under the assumption \(k_1 \geq b_2\) we may simply apply Corollary \ref{corollary:similarity_PS}, which explicitly reveals that the resulting block-diagonal matrices are strictly diagonal. They are constructed exclusively from the permuted diagonal entries of the original matrices:
\[
\Pi_{1 \to 2}(M) = P_1 \operatorname{diag}(M) P_1^\top \quad \text{and} \quad \Pi_{2 \to 1}(K) = P_1^\top \operatorname{diag}(K) P_1,
\]
where \(\operatorname{diag}(\cdot)\) is the operator that sets all off-diagonal elements to zero. Finally, the valid transported tangent vector parameters are recovered by enforcing skew-symmetry: \(\widetilde{C}_2 = \operatorname{skew}(\widehat{C}_2)\) and \(\widetilde{\Omega}_1 = \operatorname{skew}(\widehat{\Omega}_1)\). In the overlap scenario, an extra mask would be required.

Note that after a retraction step we have \(\widetilde{B}_1 = S_\Omega B_1\) and 
\(\widetilde{B}_2 = B_2 S_C\).  The transition matrices introduced in the projection
therefore analytically simplify to \(W_1 = B_1 \widetilde{B}_1^\top = B_1 (S_\Omega B_1)^\top = S_\Omega^\top\) and \(W_2 = \widetilde{B}_2^\top B_2 = (B_2 S_C)^\top B_2 = S_C^\top\).
The operators \(\Pi_{1\to2}\) and \(\Pi_{2\to1}\) act only on block‑diagonal matrices,
hence the whole transport reduces to a sequence of block‑diagonal multiplications. Consequently, the arithmetic cost is
\(\mathcal{O}(u_1 b_1^3 + u_2 b_2^3)\),
while the temporary memory overhead remains \(\mathcal{O}(u_1 b_1^2 + u_2 b_2^2)\) numbers.

The Riemannian framework for the orthogonal case generalizes to several other compact Lie groups, requiring only minor adjustments; we provide a brief discussion in Appendix~\ref{appendix:compact_groups}.

\section{Experiments} \label{sec:experiments}

To validate our theoretical results, we conducted several numerical experiments within our proposed training framework.
For convenience, we implemented a PyTorch library that supports Riemannian gradient descent with the following features: various retractions (exponential, Cayley, polar) --- all of which are valid retractions (see \cite[Example 4.1.2]{Absil2008}); overlap correction (if needed); as well as orthogonal blocks and complex‑valued cases. The library is available at \url{https://github.com/alialiev78/torchgs}.

\subsection{Synthetic tasks} \label{subsec:synthetic_experiments}

First, we investigated model learning on synthetic tasks. We solve a \(\mathcal{GS}\) Procrustes problem: find the nearest \(\mathcal{GS}\)-orthogonal matrix to a given target, considering both the case where the target itself is a \(\mathcal{GS}\) matrix and the case where it is an arbitrary (non-\( \mathcal{GS} \)) matrix. The baseline uses a Cayley parametrization with Euclidean optimization (Adam/SGD), while our method uses Riemannian gradient descent on the manifold. For a fair comparison, each block in the baseline is constructed via the Cayley transform to ensure orthogonality, and the same grid is applied uniformly across all methods.

The results for the \(\mathcal{GS}\)-target case are presented in Figure~\ref{fig:gs_target}, and for the arbitrary target case in Figure~\ref{fig:arbitrary_target}. Our method achieves comparable or better accuracy than the Cayley transform baseline.

\begin{figure}[t]
    \centering
    \includegraphics[width=0.8\textwidth]{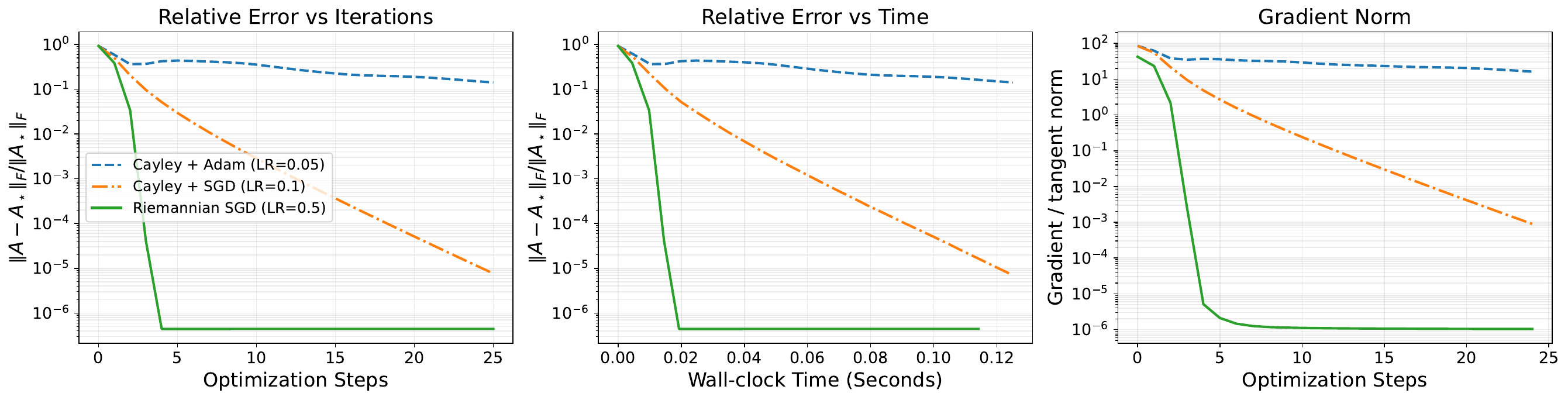}
    \caption{Performance on the $\mathcal{GS}$-target Procrustes problem (non‑overlap, $b_i = k_i = 32$). 
    Left: loss versus iterations; 
    center: loss versus time (seconds); 
    right: comparison of the Euclidean gradient norm (Cayley baseline) and the norm of the projected Riemannian gradient (our RGD).}
    \label{fig:gs_target}
\end{figure}

\begin{figure}[t]
    \centering
    \includegraphics[width=0.8\textwidth]{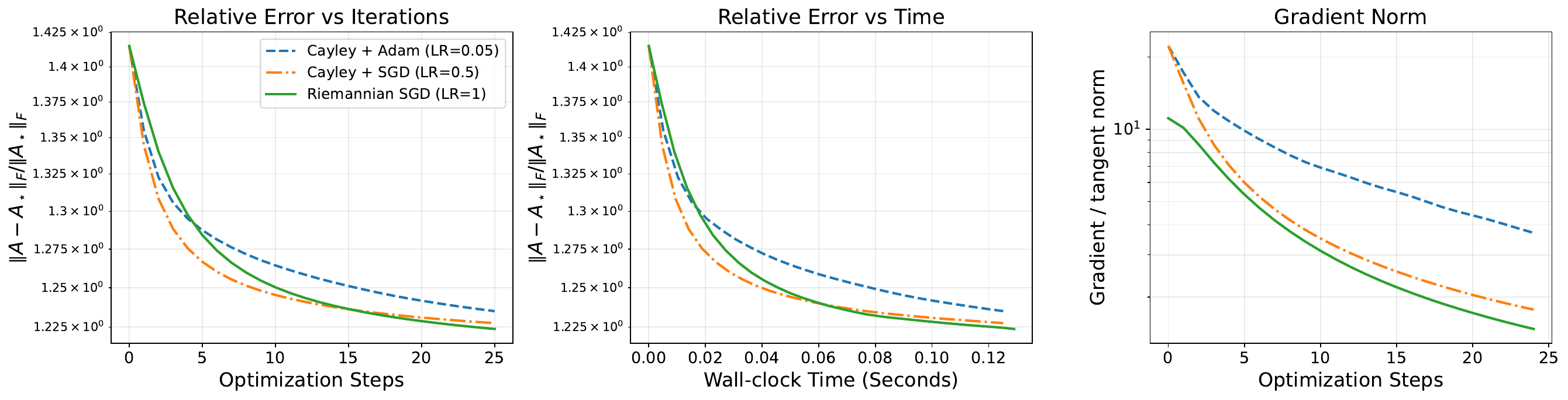}
    \caption{Performance on the arbitrary (non‑$\mathcal{GS}$) target (non‑overlap, $b_i = k_i = 32$). 
    Left: loss versus iterations; 
    center: loss versus time (seconds); 
    right: comparison of the Euclidean gradient norm (Cayley baseline) and the norm of the projected Riemannian gradient (our RGD).}
    \label{fig:arbitrary_target}
\end{figure}

We also conducted experiments with parameter sharing, overlap, and the complex case. The results and conclusions are qualitatively similar; for brevity, we omit them here, but they are available in the file \texttt{gs\_testing.ipynb} in our GitHub repository.

\subsection{Performance in Large-Scale Fine-Tuning} \label{subsec:bert_experiments}
To evaluate our manifold framework on large-scale tasks, we integrate $\mathcal{GS}$-orthogonal matrices into pre-trained architectures. For a frozen pre‑trained linear layer \(W\), we replace it with \(W_{\text{injected}} = A W\), where \(A = P_1^\top B_2 P_1 B_1\) is our $\mathcal{GS}$‑adapter. Additionally, to allow flexible scaling, we multiply the output by a learnable diagonal matrix \(S\): \(W_{\text{new}} = S W_{\text{injected}}\). To isolate the effect of \( S \), we use the same AdamW optimizer, learning rate (fixed $10^{-4}$ in all cases), schedule, and weight decay for \( S \) in both Euclidean and Riemannian setups. The \(\mathcal{GS}\) parts are trained identically, except that we add regularization to the Cayley transform in the Euclidean case, which is not needed in the Riemannian one since the manifold constraint already enforces stability.

We fine-tune RoBERTa on the GLUE benchmark using small block sizes \(b_i = 8\) (and \(k_i = 96\)) and compare against Euclidean baselines. Our overall setup follows the approach of \cite{Liu2024,GS}; further implementation details are available in those references or in our code repository. We used polar retraction (employing five Newton-Schulz iterations instead of exact computation). For SGD, we search over the learning rate grid \(\{1\times 10^{-3}, 5\times 10^{-3}, 1\times 10^{-2}, 5\times 10^{-2}, 8\times 10^{-2}, 1\times 10^{-1}\}\) with momentum \(\beta = 0.9\). For the AdamW optimizer, we search over \(\{1\times 10^{-4}, 5\times 10^{-4}, 8\times 10^{-4}, 1\times 10^{-3}, 2\times 10^{-3}, 5\times 10^{-3}\}\), with \(\beta = 0.95\) and keep the default PyTorch hyperparameters \(\beta_1 = 0.9, \beta_2 = 0.999, \epsilon = 10^{-8}\). The results are reported in Table~\ref{tab:glue_results}. Riemannian SGD outperforms Euclidean SGD in both performance and stability, though it may not quite reach Adam-level results.

\begin{table}[htbp]
\centering
\small
\caption{Results on the GLUE benchmark with RoBERTa-base. Pearson correlation is reported for STS-B, Matthew's correlation for CoLA, and accuracy for the other tasks. \# Params denotes the number of trainable parameters. Best results are in bold.}
\label{tab:glue_results}
\resizebox{\linewidth}{!}{
\begin{tabular}{lcccccccccc}
\toprule
\textbf{Method} & \textbf{\# Params} & \textbf{MNLI} & \textbf{SST-2} & \textbf{CoLA} & \textbf{QQP} & \textbf{QNLI} & \textbf{RTE} & \textbf{MRPC} & \textbf{STS-B} & \textbf{ALL} \\
\midrule
GSOFT$_{b=8}$ (SGD with momentum) & 1.38M & 86.42 & 93.46 & 51.76 & 86.01 & 87.31 & \textbf{81.23} & \underline{89.71} & 89.09 & 83.12 \\
GSOFT$_{b=8}$ (AdamW)              & 1.38M & \textbf{86.77} & \textbf{94.72} & \textbf{63.83} & \textbf{90.10} & \textbf{92.35} & \underline{79.42} & \textbf{90.44} & \textbf{90.70} & \textbf{86.04} \\
Riemannian SGD$_{b=8}$ (with momentum) & 1.38M & \underline{86.58} & \underline{94.38} & \underline{61.67} & \underline{89.02} & \underline{91.98} & 75.45 & 89.22 & \underline{90.52} & \underline{84.85} \\
\bottomrule
\end{tabular}
}
\end{table}

Note that the runtime gap between Euclidean and Riemannian optimization is further reduced here: since vector‑Jacobian products through frozen parameters dominate the computation, the overhead of manifold-constrained updates becomes small. In our experiments, across all tasks, Riemannian methods show a moderate average overhead of 5-7\% relative to Euclidean baselines.

For the smaller RoBERTa tasks, we conducted additional experiments using Riemannian SGD, exploring a broader range of configurations and block sizes. The detailed experimental settings and corresponding results are reported in Table~\ref{tab:rsgd_light_tasks}.

\begin{table}[htbp]
\centering
\scriptsize
\caption{All configurations use \(N=b_1k_1=b_2k_2=768\). 
\# Unique params denotes the number of independent trainable parameters after parameter sharing, whereas \# Full GS params denotes the number of parameters in the corresponding $\mathcal{GS}$ representation before sharing. RoBERTa-base contains approximately 125M parameters in total. Best results per column are in \textbf{bold}, second best are \underline{underlined}.}
\label{tab:rsgd_light_tasks}
\resizebox{\linewidth}{!}{%
\begin{tabular}{lccccccc}
\toprule
\textbf{Method} & \shortstack{\textbf{\# Unique $\mathcal{GS}$}\\\textbf{params}} & 
\shortstack{\textbf{\# Full $\mathcal{GS}$}\\\textbf{params}} & \textbf{CoLA} & \textbf{MRPC} & \textbf{RTE} & \textbf{STS-B} & \textbf{Avg.} \\
\midrule
\multicolumn{7}{c}{\textit{Standard (no sharing)}} \\
\midrule
$b_1{=}8,\ k_1{=}96,\ b_2{=}8,\ k_2{=}96$ & 1.38M & 1.38M & 61.67 & 89.22 & 75.45 & 90.52 & 79.22 \\
$b_1{=}16,\ k_1{=}48,\ b_2{=}32,\ k_2{=}24$ & 2.85M & 2.85M & 61.40 & 90.20 & 77.26 & 90.75 & 79.90 \\
$b_1{=}24,\ k_1{=}32,\ b_2{=}24,\ k_2{=}32$ & 2.85M & 2.85M & 60.57 & 89.22 & 76.90 & 90.68 & 79.34 \\
$b_1{=}32,\ k_1{=}24,\ b_2{=}32,\ k_2{=}24$ & 3.59M & 3.59M & \textbf{65.56} & 90.20 & \textbf{81.95} & \underline{91.04} & \textbf{82.19} \\
$b_1{=}48,\ k_1{=}16,\ b_2{=}24,\ k_2{=}32$ & 3.96M & 3.96M & 62.66 & \textbf{91.42} & 79.42 & 90.79 & \underline{81.07} \\
\midrule
\multicolumn{7}{c}{\textit{Share $B_1$}} \\
\midrule
$b_1{=}8,\ k_1{=}96,\ b_2{=}8,\ k_2{=}96$ & 1.01M & 1.38M & 61.07 & 89.22 & 75.45 & 90.10 & 78.96 \\
$b_1{=}16,\ k_1{=}48,\ b_2{=}32,\ k_2{=}24$ & 2.13M & 2.85M & 61.85 & \underline{90.93} & 76.90 & 90.59 & 80.07 \\
$b_1{=}24,\ k_1{=}32,\ b_2{=}24,\ k_2{=}32$ & 1.78M & 2.85M & 60.41 & 89.71 & 77.26 & 90.58 & 79.49 \\
$b_1{=}32,\ k_1{=}24,\ b_2{=}32,\ k_2{=}24$ & 2.17M & 3.59M & 62.57 & 90.44 & 78.34 & 90.64 & 80.50 \\
$b_1{=}48,\ k_1{=}16,\ b_2{=}24,\ k_2{=}32$ & 1.88M & 3.96M & 62.91 & 89.95 & \underline{80.51} & 90.71 & 81.02 \\
\midrule
\multicolumn{7}{c}{\textit{Share $B_2$}} \\
\midrule
$b_1{=}8,\ k_1{=}96,\ b_2{=}8,\ k_2{=}96$ & 1.01M & 1.38M & 63.07 & 89.46 & 77.26 & 90.18 & 79.99 \\
$b_1{=}16,\ k_1{=}48,\ b_2{=}32,\ k_2{=}24$ & 1.44M & 2.85M & \underline{63.77} & 89.46 & 77.26 & 90.57 & 80.27 \\
$b_1{=}24,\ k_1{=}32,\ b_2{=}24,\ k_2{=}32$ & 1.78M & 2.85M & 62.61 & 89.46 & 77.98 & 90.40 & 80.11 \\
$b_1{=}32,\ k_1{=}24,\ b_2{=}32,\ k_2{=}24$ & 2.17M & 3.59M & 62.83 & 89.95 & 78.34 & 90.81 & 80.48 \\
$b_1{=}48,\ k_1{=}16,\ b_2{=}24,\ k_2{=}32$ & 2.89M & 3.96M & 62.88 & 89.95 & 79.06 & \textbf{91.09} & 80.75 \\
\midrule
\multicolumn{7}{c}{\textit{Share both $B_1$ and $B_2$}} \\
\midrule
$b_1{=}8,\ k_1{=}96,\ b_2{=}8,\ k_2{=}96$ & 0.65M & 1.38M & 49.66 & 80.64 & 66.06 & 87.80 & 71.04 \\
$b_1{=}16,\ k_1{=}48,\ b_2{=}32,\ k_2{=}24$ & 0.72M & 2.85M & 60.34 & 87.50 & 76.17 & 90.42 & 78.61 \\
$b_1{=}24,\ k_1{=}32,\ b_2{=}24,\ k_2{=}32$ & 0.71M & 2.85M & 58.59 & 87.50 & 75.45 & 90.47 & 78.00 \\
$b_1{=}32,\ k_1{=}24,\ b_2{=}32,\ k_2{=}24$ & 0.76M & 3.59M & 61.10 & 88.73 & 78.70 & 90.78 & 79.83 \\
$b_1{=}48,\ k_1{=}16,\ b_2{=}24,\ k_2{=}32$ & 0.81M & 3.96M & 59.50 & 88.73 & 77.26 & 90.66 & 79.04 \\
\bottomrule
\end{tabular}%
}
\end{table}

\section{Conclusion}
We have studied the geometric properties of \( \mathcal{GS} \) matrices. For the case of orthogonal blocks, we present a robust framework for Riemannian optimization on \( \mathcal{GS} \)-manifolds. We enable efficient gradient computations via automatic differentiation. The complexity analysis confirms that this structured approach incurs limited overhead compared to standard training. Furthermore, we have obtained certain results for higher-order factorizations. Future work includes extending to higher‑order optimizers to outperform AdamW, handling different block constraints, and a more detailed investigation of factorizations with more than two block‑diagonal factors.

\appendix
\section{\texorpdfstring{\(\mathcal{G}\mathcal{S}\) decomposition with orthogonal blocks for \(m=3\)}{GS decomposition with orthogonal blocks for m=3}} \label{appendix:dimension_m3}

In this section of the appendix, we analyze the expressivity of the set \(\mathcal{A}_3^{\text{orth}}\), focusing on the special case involving Perfect Shuffle matrices.

First, we show that, for arbitrary permutation matrices \(P_2\) and \(P_1\) (and the block sizes in \(B_i\) are allowed to differ), $\mathcal{A}_3 \cap \mathrm{O}(N) = \mathcal{A}_3^{\mathrm{orth}}.$ To proceed, we need the following lemma, which allows us to absorb the permutation matrices into the \(B_i\).

\begin{lemma}\label{lem:diag-commute}
Let \(\{1,\dots,N\}=\bigsqcup_{\alpha=1}^{k}J_\alpha\) be a partition,
\(n_\alpha=|J_\alpha|\), and
\[
D=\operatorname{diag}(\lambda_1 I_{n_1},\dots,\lambda_k I_{n_k}),
\qquad \lambda_\alpha\neq\lambda_\beta \text{ for }\alpha\neq\beta.
\]
Then \(XD = DX \iff \)  \(X\) is block-diagonal with respect to this partition.
\end{lemma}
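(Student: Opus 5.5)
The argument is entrywise. Write $X=(x_{ij})$ and let $\alpha(i)$ denote the index of the part $J_\alpha$ containing $i$. Since $D$ is diagonal with $D_{ii}=\lambda_{\alpha(i)}$, the $(i,j)$ entries of the two products are
\[
(XD)_{ij}=x_{ij}\lambda_{\alpha(j)}, \qquad (DX)_{ij}=\lambda_{\alpha(i)}x_{ij}.
\]
Hence $XD=DX$ holds if and only if
\[
x_{ij}\bigl(\lambda_{\alpha(i)}-\lambda_{\alpha(j)}\bigr)=0 \qquad \text{for all } i,j.
\]
This single identity drives both directions.

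For ($\Rightarrow$), take any pair $(i,j)$ with $\alpha(i)\neq\alpha(j)$. The eigenvalues are pairwise distinct, so $\lambda_{\alpha(i)}-\lambda_{\alpha(j)}\neq 0$ and therefore $x_{ij}=0$. The only surviving entries have $i,j$ in the same part, which is exactly the statement that $X$ is block-diagonal with respect to the partition.

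For ($\Leftarrow$), suppose $X$ is block-diagonal. Every nonzero entry $x_{ij}$ then has $\alpha(i)=\alpha(j)$, so the factor $\lambda_{\alpha(i)}-\lambda_{\alpha(j)}$ vanishes. The identity holds for every $(i,j)$, and hence $XD=DX$.

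The partition need not consist of consecutive indices, so "block-diagonal" here means $x_{ij}=0$ whenever $i,j$ lie in different parts. The entrywise argument does not depend on the ordering, so no permutation step is needed. If one prefers the conventional picture, conjugate by a permutation $\Pi$ that makes each $J_\alpha$ contiguous. Then $\Pi^\top D\Pi=\operatorname{diag}(\lambda_1 I_{n_1},\dots,\lambda_k I_{n_k})$ in the usual sense, and the same computation applies blockwise. In that form, $\lambda_\alpha X_{\alpha\beta}=X_{\alpha\beta}\lambda_\beta$ forces $X_{\alpha\beta}=0$ for $\alpha\neq\beta$. Nothing here is a real obstacle; the one point of care is that distinctness of the $\lambda_\alpha$ is used only in the forward direction.
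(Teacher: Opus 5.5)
Your proof is correct and matches the paper's argument. The paper performs the same computation at the block level, comparing $(XD)_{\alpha\beta}=\lambda_\beta X_{\alpha\beta}$ with $(DX)_{\alpha\beta}=\lambda_\alpha X_{\alpha\beta}$; your entrywise version refines this and, as you note, does not require the parts $J_\alpha$ to be contiguous.
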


\begin{proof}
Write \(X=(X_{\alpha\beta})\) in blocks of sizes \(n_\alpha\times n_\beta\).
Since \(D\) is block-diagonal with scalar blocks,
\[
(XD)_{\alpha\beta}=\lambda_\beta X_{\alpha\beta},
\qquad
(DX)_{\alpha\beta}=\lambda_\alpha X_{\alpha\beta}.
\]
Thus \(XD = DX \iff (\lambda_\alpha-\lambda_\beta)X_{\alpha\beta}=0\) for all
\(\alpha,\beta\). For \(\alpha\neq\beta\) we have \(\lambda_\alpha\neq\lambda_\beta\),
hence \(X_{\alpha\beta}=0\). The converse is also immediate: $(\lambda_\alpha I) X_{\alpha\alpha} = \lambda_\alpha X_{\alpha\alpha} = X_{\alpha\alpha} (\lambda_\alpha I)$.
\end{proof}

Using the standard QR factorization of a matrix, one can extract a permutation matrix from the orthogonal factor. We need to obtain a modified factorization with respect to a block-diagonal partitioning.

\begin{corollary}\label{cor:modified-QR}
Let \(\mathcal{B}\) be the set of block-diagonal invertible matrices with respect to the partition above, and let \(P\) a permutation matrix. Then for every $B \in \mathcal{B}$ there exist \(U\in\mathcal B\cap\mathrm{O}(N)\) and an invertible upper-triangular \(R\) such that $BP=UPR$. Similarly, there exist \(U\in\mathcal B\cap\mathrm{O}(N)\) and an invertible upper-triangular \(R\) such that $PB=RPU.$
\end{corollary}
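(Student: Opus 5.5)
The plan is to conjugate $B$ by $P$, so that the permutation is moved to one side. After conjugation $B$ becomes block-diagonal with respect to a \emph{permuted} partition whose parts need not be contiguous. We then perform a QR (respectively RQ) factorization separately on each principal submatrix indexed by a part of that partition, and check that the resulting triangular factors assemble into a globally upper-triangular matrix.

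For the first identity, let $\sigma$ be the index map of $P$ and set $B' := P^\top B P$, so that $BP = PB'$.
\begin{enumerate}
\item Since $B$ is block-diagonal with respect to $\{J_\alpha\}$, the matrix $B'$ satisfies $B'_{ij}=0$ unless $i,j$ lie in the same part $J'_\alpha := \sigma^{-1}(J_\alpha)$. In other words, $B'$ is block-diagonal with respect to the partition $\{J'_\alpha\}$, up to the fact that the parts may be interleaved.
\item For each $\alpha$, list the indices of $J'_\alpha$ in increasing order and form the principal submatrix $B'[J'_\alpha,J'_\alpha]$. It is invertible, because $B'$ is invertible and is a direct sum of these submatrices after a simultaneous row and column permutation.
\item Take its QR factorization $B'[J'_\alpha,J'_\alpha] = Q_\alpha R_\alpha$, with $Q_\alpha$ orthogonal and $R_\alpha$ upper triangular with nonzero diagonal.
\item Define $U'$ to act as $Q_\alpha$ on the coordinates $J'_\alpha$. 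This $U'$ is orthogonal and block-diagonal with respect to $\{J'_\alpha\}$, and $R := U'^\top B'$ is given on each $J'_\alpha\times J'_\alpha$ by $R_\alpha$ and vanishes elsewhere.
\end{enumerate}

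The key step, and the only real subtlety, is to check that $R$ is upper triangular in the \emph{global} index order. Let $R_{ij}\neq 0$ with $i,j\in J'_\alpha$. Then the position of $i$ in the increasing enumeration of $J'_\alpha$ is at most that of $j$. Since that enumeration is order-preserving, this gives $i\le j$. Entries with $i,j$ in different parts are zero. The diagonal entries are the diagonal entries of the $R_\alpha$, hence nonzero, so $R$ is invertible.

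Finally set $U := P U' P^\top$. Conjugating back by $P$ maps the partition $\{J'_\alpha\}$ to $\{J_\alpha\}$, so $U$ is orthogonal and block-diagonal with respect to $\{J_\alpha\}$, that is, $U\in\mathcal B\cap \mathrm O(N)$. Then
\[
BP = PB' = PU'R = UPR .
\]

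The second identity is proved symmetrically, using RQ instead of QR factorizations.
\begin{enumerate}
\item Set $B'' := PBP^\top$. It is block-diagonal with respect to the parts $\sigma(J_\alpha)$.
\item On each part, again ordered increasingly, factor $B''[\sigma(J_\alpha),\sigma(J_\alpha)] = R_\alpha Q_\alpha$ with $R_\alpha$ upper triangular and invertible and $Q_\alpha$ orthogonal. This is the RQ decomposition, obtainable from QR of the transpose with the order of indices reversed.
\item Assemble these into $B'' = R\,U''$. The same order-preservation argument shows that $R$ is upper triangular and invertible.
\item Set $U := P^\top U'' P \in \mathcal B\cap\mathrm O(N)$. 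Then
\[
PB = B''P = R\,U''P = R\,P\,U .
\]
\end{enumerate}

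The main obstacle is purely combinatorial: making sure that triangularity within each non-contiguous part, taken in its induced order, coincides with triangularity in the global order. The increasing enumeration of each part handles exactly this.
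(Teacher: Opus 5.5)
Your proof is correct, but it takes a different route from the paper's.

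The paper never splits $BP$ into blocks. It takes one ordinary QR factorization $BP=VR$ of the whole matrix and uses the diagonal matrix $D$ of Lemma~\ref{lem:diag-commute}, which has distinct scalars on the parts. From this it derives $V^\top D V = R\widehat D R^{-1}$ with $\widehat D = P^\top D P$. A matrix that is both symmetric and upper triangular is diagonal, and comparing diagonals gives $V^\top D V=\widehat D$. Hence $U:=VP^\top$ commutes with $D$ and is block-diagonal by the lemma. The RQ case is the mirror image.

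You instead build the factorization by hand. You conjugate by $P$ and run QR (resp.\ RQ) on each principal submatrix indexed by a permuted part, listed in increasing order. You then check that triangularity within each non-contiguous part implies triangularity in the global order, because that enumeration is order-preserving. All steps check out, including the index bookkeeping $(P^\top BP)_{ij}=B_{\sigma(i)\sigma(j)}$ and the final $BP=PU'R=UPR$, $PB=RU''P=RPU$.

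What each approach buys:
\begin{itemize}
\item The paper's argument is shorter and reuses the preceding lemma, which is why the result is labelled a corollary. It also shows that the orthogonal factor of \emph{any} global QR factorization of $BP$ automatically has the form $UP$.
\item Your argument is self-contained and constructive, and it only needs block-sized QR/RQ factorizations. It also exhibits structure the paper does not display: your $R$ is itself block-diagonal with respect to the permuted partition.
\end{itemize}
Since the QR factorization of an invertible matrix is unique up to a diagonal sign matrix, the two constructions produce the same factors up to signs.
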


\begin{proof}
Choose \(D\) as in Lemma~\ref{lem:diag-commute}. Since \(B\in\mathcal B\), we have \(BD=DB\). Apply the ordinary QR factorization to \(BP\):
\(BP=VR\) with \(V\in\mathrm{O}(N)\) and \(R\) invertible and upper triangular. Put \(\widehat{D}:=P^\top DP\); the conjugation by a permutation matrix acts on a diagonal matrix by permuting its diagonal entries, and therefore \(\widehat{D}\) is diagonal as well. From \(BD=DB\),
\[
D(BP)=B(DP)=BP\,(P^\top DP)=(BP)\widehat{D},
\]
hence \(DVR=VR\widehat{D}\) and \(V^\top DV=R\widehat{D}R^{-1}\). The left-hand side is symmetric, the right-hand side is upper triangular, so both are diagonal. Since \(R,\widehat{D},R^{-1}\) are upper triangular,
\[
\operatorname{diag}(R\widehat{D}R^{-1}) = \operatorname{diag}(\widehat{D}),
\]
so \(R\widehat{D}R^{-1} = \widehat{D}\) and \(V^\top DV = \widehat{D}\).
Set \(U := V P^\top\). Then \(U \in \mathrm{O}(N)\) and
\[
U^\top D U = P V^\top D V P^\top = P\widehat{D}P^\top = D,
\]
so \(DU = UD\) and, by Lemma~\ref{lem:diag-commute}, \(U \in \mathcal B\).
Finally \(V = UP\), hence \(BP = UPR\).

Now we prove an analogous statement for the RQ decomposition. Apply the ordinary RQ factorization to \(PB\): \(PB = RV\) with \(V\in\mathrm{O}(N)\) and \(R\) invertible and upper triangular. Put \(\widehat{D} := PDP^\top\). From \(BD = DB\), we obtain $(PB)D=\widehat{D}(PB),$ hence \(RVD=\widehat{D}RV\) and \(VDV^\top = R^{-1}\widehat{D}R\). Again the left side is symmetric and the right side upper triangular, so
\(R^{-1}\widehat{D}R = \widehat{D}\) and \(VDV^\top = \widehat{D}\). Set \(U := P^\top V\). Then \(U \in \mathrm{O}(N)\) and $UDU^\top = P^\top VDV^\top P = D,$
so \(UD = DU\) and hence \(U \in \mathcal B\). Finally \(V = PU\), hence \(PB = RPU\).
\end{proof}

\begin{proposition} \label{prop:m3}
Let \(P_1,P_2\) be arbitrary fixed permutation matrices and let \(\mathcal{B}_i\) denote the sets of block-diagonal matrices with the prescribed square block sizes (the block sizes in \(B_i\) can be different). Then \(\mathcal A_3\cap\mathrm{O}(N) = \mathcal A_3^{\mathrm{orth}}\).
\end{proposition}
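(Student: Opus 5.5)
The plan is to ``peel off'' orthogonal block-diagonal factors from both ends of $A=B_3P_2B_2P_1B_1$ using Corollary~\ref{cor:modified-QR}. Then I would show that what remains in the middle is forced to be an orthogonal block-diagonal matrix. The inclusion $\mathcal A_3^{\mathrm{orth}}\subseteq\mathcal A_3\cap\mathrm{O}(N)$ is trivial. For the converse, first note that if $A$ is orthogonal then it is invertible, so each square factor $B_i$ is invertible. Hence each $B_i$ lies in the set $\mathcal B_i$ of invertible block-diagonal matrices, and the corollary applies to it.

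The first step is to apply the QR part of Corollary~\ref{cor:modified-QR} on the left, $B_3P_2=U_3P_2R_3$, and the RQ part on the right, $P_1B_1=R_1P_1U_1$. Here $U_3\in\mathcal B_3\cap\mathrm{O}(N)$, $U_1\in\mathcal B_1\cap\mathrm{O}(N)$, and $R_1,R_3$ are invertible upper triangular. This gives
\[
A=U_3P_2\,(R_3B_2R_1)\,P_1U_1,
\]
so $Q:=R_3B_2R_1=P_2^\top U_3^\top A\,U_1^\top P_1^\top$ is orthogonal. It therefore suffices to rewrite $Q$ as $V_2$, or more generally as $S\,V_2\,T$, where $V_2\in\mathcal B_2\cap\mathrm O(N)$ and $S$ and $T$ can be absorbed back into $U_3P_2$ and $P_1U_1$ without destroying their structure.

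To control $R_1$ and $R_3$, I would reuse the information extracted inside the proof of the corollary. There, $R_3$ commutes with $\widehat D_3=P_2^\top D_3P_2$ and $R_1$ commutes with $\widehat D_1=P_1D_1P_1^\top$, where $D_i$ are the separating diagonal matrices of Lemma~\ref{lem:diag-commute}. In the same spirit I would also split $B_2$ blockwise, $B_2=V_2T_2$, with $V_2\in\mathcal B_2\cap\mathrm O(N)$ and $T_2$ block-diagonal upper triangular. This yields $Q=R_3V_2(T_2R_1)$. Orthogonality of $Q$ then becomes the identity
\[
V_2^\top (R_3^\top R_3)V_2=(T_2R_1)^{-\top}(T_2R_1)^{-1}=\bigl((T_2R_1)(T_2R_1)^\top\bigr)^{-1}.
\]
The aim is to show that this forces $R_3$ and $T_2R_1$ to be diagonal with entries $\pm1$, or more weakly to be commuting factors that can be moved to the outer orthogonal factors. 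The natural tool is the argument already used in the corollary: a matrix that is simultaneously symmetric and upper triangular must be diagonal. I would combine this with Lemma~\ref{lem:diag-commute} for $D_2$ to transfer block-diagonality with respect to the partition of $B_2$.

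The main obstacle is precisely this middle step. The triangular factors produced on the left and on the right commute with different diagonal matrices, $P_2^\top D_3P_2$ and $P_1D_1P_1^\top$, neither of which is related to the partition of $B_2$. So $R_3B_2R_1$ is not a priori block-diagonal. What I would try first is to exploit the freedom in choosing the distinct eigenvalues $\lambda_\alpha$ in $D_1$, $D_2$ and $D_3$, together with the order of the QR factorization (conjugating by a suitable permutation before factorizing). The goal is to make the triangular factors simultaneously compatible with the partition of $B_2$, so that $R_3$ and $R_1$ can be chosen block-diagonal with respect to $\mathcal B_2$ and merged into $B_2$. If that works, $Q\in\mathcal B_2\cap\mathrm{O}(N)$ and $A=U_3P_2QP_1U_1\in\mathcal A_3^{\mathrm{orth}}$. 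If it fails, the fallback is to peel only one side, for instance $P_1B_1=R_1P_1U_1$. The remainder $B_3P_2(B_2R_1)P_1$ is then orthogonal, and I would try to reduce it to the two-factor statement $\mathcal A_2\cap\mathrm O(N)=\mathcal A_2^{\mathrm{orth}}$ from \cite[Theorem 1]{GS}. For this I would first show that $B_2R_1$ can be replaced by a block-diagonal matrix, again using the commutation of $R_1$ with $\widehat D_1$.
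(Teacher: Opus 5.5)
\textbf{There is a genuine gap.} Your reduction is the right one and is exactly the paper's. The QR part of Corollary~\ref{cor:modified-QR} applied to $B_3P_2$ and the RQ part applied to $P_1B_1$ give $A=U_3P_2(R_3B_2R_1)P_1U_1$ with $Q:=R_3B_2R_1$ orthogonal. The gap is the step you yourself flag as the main obstacle: you never show that $Q\in\mathcal B_2$, and none of the routes you sketch gets there.
\begin{itemize}
\item Choosing the $\lambda_\alpha$ cannot make $R_3,R_1$ block-diagonal with respect to $\mathcal B_2$. These factors are fixed up to signs by the QR/RQ factorizations of $B_3P_2$ and $P_1B_1$, and they do not depend on $D_i$. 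Already for $N=4$, $P_2=P_{(2,2)}$ and all blocks of size $2$, the entries $(R_3)_{13}$ and $(R_3)_{24}$ are multiples of inner products of the two columns of a block of $B_3$. They are therefore generically nonzero, and they couple different blocks of $\mathcal B_2$. You also do not specify a reordering before factorizing that would fix this.
\item The splitting $B_2=V_2T_2$ leads to $V_2^\top(R_3^\top R_3)V_2=\bigl((T_2R_1)(T_2R_1)^\top\bigr)^{-1}$. Neither side is triangular, so the ``symmetric and triangular implies diagonal'' trick has nothing to act on.
\item The fallback through the two-factor theorem needs $B_2R_1$ to be block-diagonal, which is the same unproved claim.
\end{itemize}

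\textbf{The missing idea} is that only the triangularity of $R_1,R_3$ matters, not any compatibility with the partition of $B_2$. The blocks of $\mathcal B_2$ are consecutive index sets. Hence every upper triangular matrix is block upper triangular with respect to that partition, and so is the block-diagonal $B_2$. Invertible block upper triangular matrices form a group, so both $Q=R_3B_2R_1$ and $Q^{-1}=R_1^{-1}B_2^{-1}R_3^{-1}$ are block upper triangular. But $Q^{-1}=Q^\top$ is block lower triangular. Therefore $Q^{-1}$, and hence $Q$, is block-diagonal. This gives $Q\in\mathcal B_2\cap\mathrm{O}(N)$ and $A=U_3P_2QP_1U_1\in\mathcal A_3^{\mathrm{orth}}$.

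This is precisely the block analogue of the argument from the corollary that you wanted to reuse. The difference is that it compares $Q^{-1}$ with $Q^\top$, rather than working with $R_3^\top R_3$. The commutation of $R_3$ and $R_1$ with $P_2^\top D_3P_2$ and $P_1D_1P_1^\top$, which you were worried about, plays no role.
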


\begin{proof}
The inclusion $\mathcal{A}_3^{\mathrm{orth}} \subseteq \mathcal{A}_3 \cap \mathrm{O}(N)$ is trivial. We want to prove the reverse inclusion, i.e., for any permutation matrices \(P_1\) and \(P_2\), if \(Q = B_3 P_2 B_2 P_1 B_1 \in \mathcal{A}_3 \cap \mathrm{O}(N),\) then \(Q\) admits a decomposition of the form $Q=U_3 P_2 U_2 P_1 U_1$, where each \(U_i \in \mathcal{B}_i^{\mathrm{orth}}\) is block‑diagonal with orthogonal diagonal blocks. To construct such a decomposition, we apply modified QR and RQ factorizations.

Observe that $\mathcal{A}_3\cap \mathrm{O}(N) = \mathcal{A}_3^{\mathrm{inv}} \cap \mathrm{O}(N).$ Indeed, if a block were non-invertible, the determinant of the whole matrix -- being \(\pm 1\) times the product of the block determinants -- would be \(0\), contradicting orthogonality. Apply the QR part of Corollary~\ref{cor:modified-QR} to \(B_3 P_2\) and the
RQ part to \(P_1 B_1\). This gives
\[
B_3P_2=U_3P_2R_3,
\qquad
P_1B_1=R_1P_1U_1,
\]
with \(U_3\in\mathcal B_3\cap\mathrm O(N)\),
\(U_1\in\mathcal B_1\cap\mathrm O(N)\) and \(R_3,R_1\) invertible upper
triangular. Substituting,
\[
Q=U_3P_2\,\bigl(R_3B_2R_1\bigr)\,P_1U_1.
\]
Set \(U_2:=R_3B_2R_1\). Since
\[
U_2=P_2^\top U_3^\top Q\,U_1^\top P_1^\top
\]
is a product of orthogonal matrices, \(U_2\in\mathrm O(N)\).

Moreover, upper-triangular matrices are block upper triangular with respect to any partition into consecutive blocks, in particular with respect to the partition defining \(\mathcal B_2\). Since the class of block upper-triangular matrices is closed under multiplication, \(U_2 = R_3B_2R_1\) is block upper triangular with respect to \(\mathcal B_2\). Its inverse is therefore also block upper triangular, while \(U_2^{-1} = U_2^\top\) is block lower triangular; hence
\(U_2^{-1}\), and therefore \(U_2\), is block diagonal. Consequently
\(U_2\in\mathcal B_2\cap\mathrm O(N)\), and
\[
Q=U_3P_2U_2P_1U_1\in\mathcal A_3^{\mathrm{orth}}.
\]
\end{proof}

We now turn to the important particular case of the decomposition with \(m = 3\). Let \( p, q \ge 2 \) be integers and define \( N = pq \). Consider the \(\mathcal{GS}\) factorization with \(m = 3\) factors, where \(P_1 = P\) and \(P_2 = P^\top\) for a Perfect Shuffle matrix \(P\), and let all blocks be orthogonal. Specifically,
\begin{equation} \label{eq:m3_decomposition}
M = B_3 P^\top B_2 P B_1,    
\end{equation}
with \( B_1 \) and \( B_3 \) block-diagonal matrices comprising \( q \) orthogonal blocks of size \( p \times p \):
\[
B_1 = \begin{pmatrix}
V_1 & & \\
& \ddots & \\
& & V_q
\end{pmatrix}, \qquad
B_3 = \begin{pmatrix}
W_1 & & \\
& \ddots & \\
& & W_q
\end{pmatrix},
\]
with \( V_i, W_i \in \mathrm{O}(p) \).

The middle factor \( B_2 \) is a block-diagonal matrix consisting of \( p \) orthogonal blocks of size \( q \times q \):
\[
B_2 = \begin{pmatrix}
U_1 & & \\
& \ddots & \\
& & U_p
\end{pmatrix},
\qquad U_k \in \mathrm{O}(q).
\]
Denote the entries of the \( k \)-th block by \( U_k = (u_{ij}^{(k)})_{i,j=1}^q \). The matrix \( P \) is the Perfect Shuffle permutation.

To evaluate the explicit form of \( M \), we first compute the central conjugate term \( M' = P^\top B_2 P \). Applying Corollary \ref{corollary:similarity_PS}, the similarity transformation by the Perfect Shuffle matrix rearranges the entries of \( B_2 \) into a \( q \times q \) grid of blocks, where each block is a \( p \times p \) diagonal matrix:
\[
M' = \begin{pmatrix}
D_{11} & D_{12} & \dots & D_{1q} \\
D_{21} & D_{22} & \dots & D_{2q} \\
\vdots & \vdots & \ddots & \vdots \\
D_{q1} & D_{q2} & \dots & D_{qq}
\end{pmatrix}, \quad \text{where } D_{ij} = \operatorname{diag}\left(u_{ij}^{(1)}, u_{ij}^{(2)}, \dots, u_{ij}^{(p)}\right).
\]

Next, we multiply the three factors \( M = B_3 M' B_1 \):
\begin{equation} \label{eq:m3_form}
M = \begin{pmatrix}
W_1 D_{11} V_1 & W_1 D_{12} V_2 & \dots & W_1 D_{1q} V_q \\
W_2 D_{21} V_1 & W_2 D_{22} V_2 & \dots & W_2 D_{2q} V_q \\
\vdots & \vdots & \ddots & \vdots \\
W_q D_{q1} V_1 & W_q D_{q2} V_2 & \dots & W_q D_{qq} V_q
\end{pmatrix}.  
\end{equation}
This explicit block structure provides immediate insight into the set's expressivity. Consider the specific case where \( q = 2 \) (for any \( p \ge 2 \)). The matrices \( U_k \in \mathrm{SO}(2) \) can be parameterized by:
\[ U_k = \begin{pmatrix} \cos\theta_k & -\sin\theta_k \\ \sin\theta_k & \cos\theta_k \end{pmatrix}. \]
Consequently, the diagonal matrices become \( D_{11} = D_{22} = \operatorname{diag}(\cos\theta_1, \dots, \cos\theta_p) := C \) and \( D_{21} = -D_{12} = \operatorname{diag}(\sin\theta_1, \dots, \sin\theta_p) := S \). Substituting this into \( M \) yields:
\[
M = \begin{pmatrix} W_1 & 0 \\ 0 & W_2 \end{pmatrix}
\begin{pmatrix} C & -S \\ S & C \end{pmatrix}
\begin{pmatrix} V_1 & 0 \\ 0 & V_2 \end{pmatrix}.
\]
This is exactly the classical Cosine-Sine (CS) decomposition \cite[Theorem 2.5.3]{GL}, which guarantees that any orthogonal matrix \( Q \in \mathrm{O}(2p) \) can be factorized as above. Consequently, for \( q = 2 \), the \(\mathcal{GS}\) factorization yields a smooth manifold, and we have \( \mathcal{A}_3^{\text{orth}} = \mathcal{A}_3 \cap \mathrm{O}(N) = \mathrm{O}(N) \).

\begin{remark} \label{rem:dimensional_deficiency_m3}
However, for \( p > 1 \) and \( q > 2 \), \( \mathcal{A}_3^{\text{orth}} \) is a proper subset of \( \mathrm{O}(N) \).  Indeed, the dimension of its parameter space is \( q \cdot \frac{p(p-1)}{2} + q \cdot \frac{p(p-1)}{2} + p \cdot \frac{q(q-1)}{2} = qp(p-1) + \frac{pq(q-1)}{2}\), and the dimension of its image cannot exceed that of its domain. Given that \(\dim \mathrm{O}(N) = \frac{pq(pq - 1)}{2}\), the dimensional deficiency is
\[
\begin{aligned}
\Delta &= \dim \mathrm{O}(N) - \max \dim \mathcal{A}_3^{\text{orth}} = \frac{pq(pq - 1) - 2qp(p - 1) - pq(q - 1)}{2} \\
&= \frac{pq\left(pq - 1 - 2(p - 1) - (q - 1)\right)}{2} =\frac{pq}{2}(p-1)(q-2).
\end{aligned}
\]
Because \(\Delta > 0\) for \( q > 2 \), the \( m = 3 \) decomposition lacks the degrees of freedom required to span the full orthogonal group.

\end{remark}

We now wish to prove that the decomposition \eqref{eq:m3_decomposition} does not always admit a manifold structure. There is, however, an important clarification: we are speaking about the induced topology, because an artificial manifold topology can always be imposed on any continuum-sized set. Indeed, one can choose a bijection from \( \mathcal{A}_3 \cap \mathrm{O}(N) = \mathcal{A}_3^{\text{orth}} \) to $ \mathbb{R}^s$; this allows any topology on \(\mathbb{R}^s\) to be transferred to \(\mathcal{A}_3^{\text{orth}}\) (by declaring a subset open iff its image in \(\mathbb{R}^s\) is open). The same obstruction arises not only for the topological manifold structure, but also for the smooth manifold structure. Thus, the issue of nonexistence of a (smooth) manifold structure only arises when the topology is required to be the one induced from the ambient space -- namely, the subspace topology inherited from \(\mathrm{O}(N)\) with its standard Euclidean topology (equivalently, the topology inherited from \(\mathbb{R}^{N^2}\)).  In this induced topology (see \cite[page 601]{Lee}), a subset \(\mathsf{U} \subseteq \mathcal{A}_3^{\text{orth}}\) is open if and only if there exists an open subset \(\mathsf{V} \subseteq \mathbb{R}^{N^2}\) such that \(\mathsf{U} = \mathsf{V} \cap \mathcal{A}_3^{\text{orth}}\). For simplicity, we restrict to the case \(p = 2\) and \(q \ge3 \) with $q \bmod 4 \in \{2,3\}$, in which we will show that \(\mathcal{A}_3^{\text{orth}}\) fails to be a manifold under the topology induced from \(\mathbb{R}^{N^2}\). To do so, we need to show that the definition from \cite[Chapter 1, pages 2-3]{Lee} fails for \(\mathcal{A}_3^{\text{orth}}\). To disprove the manifold property, it suffices to find a point \(M_0 \in \mathcal{A}_3^{\mathrm{orth}}\) that admits no open neighborhood
homeomorphic to an open subset of any Euclidean space.
Choose \(Q_0 = (q_{ij}) \in \mathrm{SO}(q)\) with \(q_{ij} \neq 0\) for all \(i,j\) (such a matrix exists for every \( q \geq 3 \), for instance, one may take \( Q_0 = e^{t\Omega} \), where \( \Omega \in \mathfrak{so}(q) \) has no zero off-diagonal entries and \( t > 0 \) is sufficiently small), and set
\[
M_0=P^\top\begin{pmatrix}Q_0&0\\0&Q_0\end{pmatrix}P \in \mathcal{A}_3^{\text{orth}}.
\]

Thus, using corollary \ref{corollary:similarity_PS}, we obtain \((M_0)_{ij}=q_{ij}I_2\). Now by definition $X \in \mathcal{A}_3^{\text{orth}} \cap \mathbb{B}_{\varepsilon}(M_0) \iff X $ has the form \eqref{eq:m3_form} and $\|X-M_0\|_F < \varepsilon$. After squaring and applying a block decomposition, we obtain the equivalent condition
\begin{equation} \label{eq:m3_epsilon_ball}
\sum_{i,j=1}^{q} \left\| W_i \begin{pmatrix} \tilde{a}_{ij} & 0 \\ 0 & \tilde{\tilde{b}}_{ij} \end{pmatrix} V_j - q_{ij} I_2 \right\|_F^2 < \varepsilon^2,
\end{equation}
where \(W_i, V_j \in \mathrm{O}(2)\), and \(A=(\tilde{a}_{ij})\in\mathrm{O}(q)\), \(B=(\tilde{\tilde{b}}_{ij})\in\mathrm{O}(q)\) are the two \(q\times q\) diagonal blocks of the middle factor. The remainder of the proof is rather technical and consists of finding a parametrization that allows us to solve the inequality. We will absorb reflections into the middle factors. The following elementary lemma shows that the resulting row and column sign changes preserve orthogonality.

\begin{lemma}\label{lem:row-column-signs}
Let \(C=(c_{ij})\in\mathrm{O}(q)\), and let \(\rho_i,\kappa_j\in\{\pm1\}\). If
\[
\widetilde c_{ij}=\rho_i\kappa_j c_{ij},
\]
then \(\widetilde C=(\widetilde c_{ij})\in\mathrm{O}(q)\).
\end{lemma}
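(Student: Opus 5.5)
The plan is to write the sign change as a product of matrices and then use the fact that orthogonal matrices form a group. Set
\[
R=\operatorname{diag}(\rho_1,\dots,\rho_q), \qquad K=\operatorname{diag}(\kappa_1,\dots,\kappa_q).
\]
Since \(\rho_i,\kappa_j\in\{\pm1\}\), both are diagonal matrices with entries \(\pm1\). Hence \(R^\top R=R^2=I\) and \(K^\top K=I\), so \(R,K\in\mathrm{O}(q)\).

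The first step is to check entrywise that \(\widetilde C = RCK\). Left multiplication by a diagonal matrix scales the \(i\)-th row by \(\rho_i\), and right multiplication scales the \(j\)-th column by \(\kappa_j\). Therefore \((RCK)_{ij}=\rho_i c_{ij}\kappa_j=\widetilde c_{ij}\).

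The second step is to conclude that \(\widetilde C\), being a product of three orthogonal matrices, is orthogonal. Explicitly,
\[
\widetilde C^\top \widetilde C = K^\top C^\top R^\top R C K = K^\top C^\top C K = K^\top K = I.
\]

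None of these steps poses a real obstacle, since the argument consists of the identification \(\widetilde C = RCK\) followed by closure of \(\mathrm{O}(q)\) under multiplication. The only point needing care is to use both \(R^\top R=I\) and \(K^\top K=I\) in the computation above; these hold precisely because the signs square to one. As a remark for later use, \(\det\widetilde C=\det C\cdot\prod_i\rho_i\prod_j\kappa_j\). This shows that such sign changes may move a matrix between \(\mathrm{SO}(q)\) and its complement, which is relevant when reflections are absorbed into the middle factors.
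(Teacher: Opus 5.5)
Your proof is correct and is the same as the paper's: the paper also writes $\widetilde C = D_\rho C D_\kappa$ with diagonal sign matrices and checks that $\widetilde C^\top \widetilde C = D_\kappa C^\top C D_\kappa = I_q$. Your remark $\det\widetilde C=\det C\cdot\prod_i\rho_i\prod_j\kappa_j$ is also correct, but the lemma does not need it.
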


\begin{proof}
Let \(D_\rho = \operatorname{diag}(\rho_1,\ldots,\rho_q)\) and \(D_\kappa = \operatorname{diag}(\kappa_1,\ldots,\kappa_q)\). Then $\widetilde C = D_\rho C D_\kappa,$
and hence
\[
\widetilde C^\top\widetilde C
=D_\kappa^\top C^\top D_\rho^\top D_\rho C D_\kappa
=D_\kappa C^\top C D_\kappa
=I_q.
\]
\end{proof}

We now proceed to disproving the existence of a manifold structure. Let
\[
R(\theta)=\begin{pmatrix}\cos\theta&-\sin\theta\\ \sin\theta&\cos\theta\end{pmatrix},
\qquad
F=\begin{pmatrix}1&0\\0&-1\end{pmatrix}, \quad F^2 = I_2.
\]
Note that, due to the periodicity of trigonometric functions, the parametrization of $\mathrm{SO}(2)$ by angles is not unique; we will choose convenient representatives later. The main role of the matrix \(F\) is to absorb the negative determinant. Indeed, it is easy to show that left or right multiplication by \(F\) is a diffeomorphism. Also define \(\tau_i, \delta_j \in \{1,2\}\); then we can write \(W_i = R(\tilde{\alpha}_i)F^{\tau_i}\) and \(V_j = F^{\delta_j}R(\tilde{\beta}_j)\). We define $\tilde{b}_{ij}=(-1)^{\tau_i+\delta_j}\tilde{\tilde{b}}_{ij}$. Note that, by lemma \ref{lem:row-column-signs}, the matrix $\tilde{B} = (\tilde{b}_{ij})$ is also orthogonal, hence \eqref{eq:m3_epsilon_ball} can be rewritten in the following form:
\[
\sum_{i,j=1}^{q} \left\| \begin{pmatrix} \tilde{a}_{ij} & 0 \\ 0 & \tilde{b}_{ij} \end{pmatrix} - q_{ij} R(-\tilde{\alpha}_i)  R(-\tilde{\beta}_j)\right\|_F^2 < \varepsilon^2,
\]
Introduce \(\tilde{\theta}_{ij}=\tilde{\alpha}_i+\tilde{\beta}_j\) and observe that $\tilde{\theta}_{ij} - \tilde{\theta}_{i1} - \tilde{\theta}_{1j} + \tilde{\theta}_{11} = 0$. Since rotations commute, \(R(-\tilde{\alpha}_i) R(-\tilde{\beta}_j) = R(-\tilde{\theta}_{ij})\). Hence the inequality \eqref{eq:m3_epsilon_ball}  becomes
\begin{equation}\label{eq:m3_ball_angles}
\sum_{i,j=1}^{q}\left[
(\tilde{a}_{ij}-q_{ij} \cos \tilde{\theta}_{ij})^2 + (\tilde b_{ij}-q_{ij}\cos \tilde{\theta}_{ij})^2 + 2q_{ij}^2 \sin^2 \tilde{\theta}_{ij} \right] < \varepsilon^2.
\end{equation}

Note that from \eqref{eq:m3_ball_angles} we also deduce that $2q_{ij}^2 \sin^2 \tilde{\theta}_{ij} < \varepsilon^2$ and use it to choose convenient representatives of the angles. Since \(q_{ij} \neq 0 \), $\sin^2 \tilde{\theta}_{ij} \to 0$ for every \(i,j\) if $\varepsilon \to 0$. Hence, for sufficiently small \(\varepsilon\) there exists a unique \(n_{ij}\in\mathbb Z\) such that
\[
\tilde{\theta}_{ij}= \pi n_{ij}+ \theta_{ij},\qquad |\theta_{ij}|< \frac{\pi}{4}.
\]
Using \(\tilde{\theta}_{ij} - \tilde{\theta}_{i1} -\tilde{\theta}_{1j} + \tilde{\theta}_{11}=0\), we obtain
\[
\pi\left(n_{ij}-n_{i1}-n_{1j}+n_{11}\right)
+\theta_{ij}-\theta_{i1}-\theta_{1j}+\theta_{11}=0.
\]
The second term has absolute value strictly smaller than \(\pi\), whereas the first term is an integer multiple of \(\pi\). Hence
\[
n_{ij}=n_{i1}+\underbrace{n_{1j}-n_{11}}_{\Delta_j}.
\]
We now choose new representatives of the angles:
\[
\alpha_i= \tilde{\alpha}_i - \pi n_{i1},\qquad
\beta_j = \tilde{\beta}_j - \pi \Delta_j,\qquad a_{ij} = (-1)^{n_{ij}} \tilde{a}_{ij}, \qquad
b_{ij} =(-1)^{n_{ij}} \tilde b_{ij}.
\]
Then
\[
\alpha_i+\beta_j
=\tilde{\theta}_{ij}-\pi n_{ij}
=\theta_{ij} \implies |\alpha_i+\beta_j|<\pi/4.
\]
Moreover,
\[
R(\alpha_i) = (-1)^{n_{i1}}R(\tilde{\alpha}_i),\qquad
R(\beta_j) = (-1)^{\Delta_j}R(\tilde{\beta}_j).
\]
Since \(n_{ij}=n_{i1}+\Delta_j\),
we have $(-1)^{n_{ij}}=(-1)^{n_{i1}}(-1)^{\Delta_j}.$ 
Moreover,
\[
\begin{aligned}
R(\tilde{\alpha}_i)
\begin{pmatrix}
\tilde a_{ij}&0\\
0&\tilde b_{ij}
\end{pmatrix}
R(\tilde{\beta}_j) &= (-1)^{n_{i1}+n_{ij}+\Delta_j}
R(\alpha_i)
\begin{pmatrix}
a_{ij}&0\\
0&b_{ij}
\end{pmatrix}
R(\beta_j) \\
&=
R(\alpha_i)
\begin{pmatrix}
a_{ij}&0\\
0&b_{ij}
\end{pmatrix}
R(\beta_j),
\end{aligned}
\]
because $n_{i1}+n_{ij}+\Delta_j = 2n_{i1}+2\Delta_j$.
It remains to verify that the new middle factors remain orthogonal. Let
\[
D_n:=\operatorname{diag}\left((-1)^{n_{11}},\ldots,(-1)^{n_{q1}}\right),
\qquad
D_\Delta:=\operatorname{diag}\left((-1)^{\Delta_1},\ldots,(-1)^{\Delta_q}\right).
\]
Since \(n_{ij}=n_{i1}+\Delta_j\), the new matrices $A=(a_{ij}), B=(b_{ij})$ satisfy $A = D_n\widetilde A D_\Delta, B = D_n\widetilde B D_\Delta,$
where \(\widetilde A=(\tilde a_{ij})\) and \(\widetilde B=(\tilde b_{ij})\). Hence Lemma~\ref{lem:row-column-signs} implies that $A,B\in\mathrm{O}(q).$ Consequently, \eqref{eq:m3_ball_angles} can now be written as
\begin{equation}\label{eq:m3_ball_angles_normalized}
\begin{aligned}
\varepsilon^2 &>
\sum_{i,j=1}^{q} \left\| W_i \begin{pmatrix} \tilde{a}_{ij} & 0 \\ 0 & \tilde{\tilde{b}}_{ij} \end{pmatrix} V_j - q_{ij} I_2 \right\|_F^2 =
\sum_{i,j=1}^{q} \left\| R(\tilde{\alpha}_i)  \begin{pmatrix} \tilde{a}_{ij} & 0 \\ 0 & \tilde{b}_{ij} \end{pmatrix} R(\tilde{\beta}_j) - q_{ij} I_2 \right\|_F^2 \\
&= \sum_{i,j=1}^{q} \left\| \begin{pmatrix} a_{ij} & 0 \\ 0 & b_{ij} \end{pmatrix} - q_{ij} R(-\theta_{ij})\right\|_F^2
\\
&=
\sum_{i,j=1}^{q}\left[
(a_{ij}-q_{ij}\cos\theta_{ij})^2
+(b_{ij}-q_{ij}\cos\theta_{ij})^2
+2q_{ij}^2\sin^2\theta_{ij}
\right] 
\end{aligned}
\end{equation}
with \(|\theta_{ij}|<\pi/4\). Observe that from  \eqref{eq:m3_ball_angles_normalized} we deduce that $|a_{ij}-q_{ij}\cos\theta_{ij}|<\varepsilon, |b_{ij}-q_{ij}\cos\theta_{ij}|<\varepsilon,$
and $2q_{ij}^2 \sin^2 \theta_{ij} < \varepsilon^2$. The latter inequality implies $\theta_{ij}\to 0$ as $\varepsilon\to 0$. Hence $a_{ij}\to q_{ij}, 
b_{ij}\to q_{ij}.$ Indeed, since $\sin^2\theta_{ij} =  1 - \cos^2\theta_{ij} \geq 1- \cos\theta_{ij}$:
\[
|a_{ij}-q_{ij}|
\le
|a_{ij}-q_{ij}\cos\theta_{ij}| + |q_{ij}|\,|1-\cos\theta_{ij}| < \varepsilon + 
|q_{ij}| \frac{\varepsilon^2}{2q_{ij}^2}
\]
and the right-hand side tends to zero as $\varepsilon\to 0$. The argument for \(b_{ij}\) is identical. Since \( Q_0 \in \mathrm{SO}(q) \) and \( A, B \) lie in an \( \varepsilon \)-neighbourhood of \( Q_0 \), for sufficiently small \( \varepsilon \) we may assume that \( A \) and \( B \) also belong to \( \mathrm{SO}(q) \). Expanding \eqref{eq:m3_ball_angles_normalized}, using orthogonality of matrices ($\sum_{i,j} a_{ij}^2=q, \sum_{i,j}b_{ij}^2=q, \sum_{i,j}q_{ij}^2=q$), we obtain
\begin{equation*}
\begin{aligned}
\|M-M_0\|_F^2 
&= \sum_{i,j=1}^{q} \left[ a_{ij}^2 + b_{ij}^2 + 2q_{ij}^2 - 2q_{ij}(a_{ij}+b_{ij})\cos\theta_{ij} \right] \\
&= 4q - 2\sum_{i,j=1}^{q} q_{ij}(a_{ij}+b_{ij})\cos\theta_{ij}.
\end{aligned}
\end{equation*}
From this expression, the origin of the contradiction becomes clear: the inequality depends on the sum of the angles \(\theta_{ij}\) and the only remaining step is to rigorously establish the connection with matrices of bounded rank. Let $d:=\dim\mathfrak{so}(q)=\frac{q(q-1)}{2}.$

By the standard local property of the exponential map for Lie groups \cite[Proposition~20.8(f)]{Lee} or \cite[Proposition~2.3]{GOV1993}, and since left multiplication by \(Q_0\) is a diffeomorphism, it follows that for a sufficiently small neighborhood of \(Q_0\) there exists a diffeomorphism (namely, the composition of the exponential map with left translation by \(Q_0\))
\[
\psi:\mathcal U\subset\mathfrak{so}(q)\longrightarrow
\mathcal V\subset\mathrm{SO}(q),
\qquad \psi(H):=Q_0e^H,
\]
from a neighborhood of \(0\) onto a neighborhood of \(Q_0\). Consequently, if \(A\) and \(B\) are sufficiently close to \(Q_0\), there exist unique small elements \(\tilde{c}, \tilde{z} \in \mathfrak{so}(q)\) such that \(A = \psi(\tilde{c})\) and \(B = \psi(\tilde{z})\). Applying the symmetric change of variables, we define small elements \(c := (\tilde{c}+\tilde{z})/2\) and \(z := (\tilde{c}-\tilde{z})/2\) in \(\mathfrak{so}(q)\), which satisfy \(A = \psi(c+z)\) and \(B = \psi(c-z)\). In particular, $z = 0 \iff A = B,$ and interchanging \(A\) and \(B\) corresponds exactly to \(z \mapsto -z\). Indeed, since $\psi$ is a diffeomorphism, \(A=\psi(c+z)\) and \(B=\psi(c-z)\), we have \(A=B \iff c+z=c-z\), i.e. \(z=0\). Since \(A=\psi(c+z)\) and \(B=\psi(c-z)\), swapping \(A\) and \(B\) gives \(\psi(c-z)\) and \(\psi(c+z)\), which is exactly the same as replacing \(z\) by \(-z\).

We now rewrite the outer angular parameters. Set $\ell_i:=\alpha_i+\beta_1, i=1,\ldots,q,$
and $r_1:=0, r_j:=\beta_j-\beta_1, j=2,\ldots,q.$ Then $\alpha_i=\ell_i-\beta_1, \beta_j = r_j+\beta_1$, and $\theta_{ij}=\alpha_i+\beta_j = \ell_i+r_j.$

Thus \(2q-1\) combinations of angles are described by \((\ell_1,\ldots,\ell_q,r_2,\ldots,r_q)\), while \(\beta_1\) is the only remaining free angular parameter, which we want to isolate. Note that in the new variables
\[
M_{ij}
= R(\ell_i) \left[R(-\beta_1)
\begin{pmatrix}
a_{ij}&0\\
0&b_{ij}
\end{pmatrix}
R(\beta_1) \right]R(r_j).
\]
Using $\begin{pmatrix}
a_{ij}&0\\
0&b_{ij}
\end{pmatrix}
= \frac{a_{ij}+b_{ij}}{2}I_2 + \frac{a_{ij}-b_{ij}}{2} F$, we obtain
\[
R(-\beta_1)
\begin{pmatrix}
a_{ij}&0\\
0&b_{ij}
\end{pmatrix}
R(\beta_1)
=
\frac{a_{ij}+b_{ij}}{2}I_2 + \frac{a_{ij}-b_{ij}}{2}\underbrace{R(-\beta_1) F R(\beta_1)}_{T},
\]
Note that 
$
T =
\left(\begin{smallmatrix}
\cos \beta_1 & \sin \beta_1 \\
-\sin \beta_1 & \cos \beta_1
\end{smallmatrix}\right)
\left(\begin{smallmatrix}
\cos \beta_1 & -\sin \beta_1 \\
-\sin \beta_1 & -\cos \beta_1
\end{smallmatrix}\right)
=
\left(\begin{smallmatrix}
\cos^2\beta_1 - \sin^2\beta_1 & -2\cos\beta_1\sin\beta_1 \\
-2\sin\beta_1\cos\beta_1 & \sin^2\beta_1 - \cos^2\beta_1
\end{smallmatrix}\right).
$

Using $u:=
\begin{pmatrix}
\cos(2\beta_1)\\
-\sin(2\beta_1)
\end{pmatrix}
\in S^1$ we rewrite $T$ as $T(u) = 
\begin{pmatrix}
u_1&u_2\\
u_2&-u_1
\end{pmatrix},$
and therefore
\begin{equation}\label{eq:m3_compact_param}
M_{ij}
=
R(\ell_i)
\left[
\frac{a_{ij}+b_{ij}}{2}I_2
+
\frac{a_{ij}-b_{ij}}{2}T(u)
\right]
R(r_j).
\end{equation}
The representation \eqref{eq:m3_compact_param} separates the part independent of the remaining angle \(\beta_1\) from the part on which this angle acts. In particular, the first term depends only on \(A+B\), whereas the second term couples the difference \(A-B\) with a single point \(u\in S^1\). When \(A = B\), the dependence on \(u\) disappears completely.

Fix a linear diffeomorphism (and isomorphism) 
$\mathfrak{so}(q)\simeq\mathbb R^d, d = \frac{q(q-1)}2.$
Choose \(\eta>0\) sufficiently small so that
$\overline{\mathbb{B}}_{2\eta}^d(0)\subset\mathcal U$. Then every \(A,B\) sufficiently close to \(Q_0\) can be written uniquely as $A = \psi(c+z), B = \psi(c-z),$ with \(c,z\in\mathbb R^d\), \(\|c\|,\|z\|<\eta\). Consider the compact parameter space
\[
K=
[-\eta,\eta]^{2q-1}
\times
\overline{\mathbb B}_\eta(0)
\times
\overline{\mathbb B}_\eta(0)
\times
S^1,
\]
where the first factor contains $(\ell_1,\ldots,\ell_q,r_2,\ldots,r_q),$ the next two factors correspond to \(c,z\in\mathbb R^d\) and the last factor corresponds to \(u\in S^1\). The map \(f:K\to\mathcal{A}_3^{\mathrm{orth}}\) is the parametrization
defined by \eqref{eq:m3_compact_param}, i.e. $f(\ell,r,c,z,u):= M, \, M_{ij}=R(\ell_i)\Bigl[\tfrac{a_{ij}+b_{ij}}{2}I_2
+\tfrac{a_{ij}-b_{ij}}{2}T(u)\Bigr]R(r_j),$
where \((a_{ij})=\psi(c+z)\) and \((b_{ij})=\psi(c-z)\).
Introduce
\[
g:K\to\mathbb{R}^{2q-1+d}\times\mathbb{R}^{2\times d},
\qquad
g(\ell,r,c,z,u):=(\ell,r,c,uz^\top).
\]
We will use the following standard facts from topology \cite[Lemma~A.52(b), Theorem~A.31]{Lee}: a continuous surjection from a compact space onto a Hausdorff space is a quotient map; and if two quotient maps have the same fibres, then the induced map between their images is a homeomorphism. Since \(u \in S^1\), we have \(\operatorname{rank}(u z^\top) \le 1\), so the image of the last component lies in the determinantal cone  $\mathcal{C}_{2, d} = \{ Z \in \mathbb{R}^{2 \times d} : \operatorname{rank} Z \le 1 \}.$ Using the coincidence of fibres, we will show that the resulting space is locally homeomorphic to a set that is not homeomorphic to Euclidean space. We claim that \(f\) and \(g\) have the same fibres, i.e.
\[
f(p)=f(p') \quad \Longleftrightarrow\quad g(p) = g(p'),\qquad p,p'\in K.
\]

Indeed, \(\|M_{ij}-q_{ij}I_2\|_F^2<\varepsilon^2\) and continuity of the determinant give \(\det M_{ij}\to q_{ij}^2\); hence for small \(\varepsilon\) each \(M_{ij}\) is invertible. Also we have
\[
q_{ij}^{-1}M_{ij} = R(\ell_i+r_j) \, S_{ij}, \qquad
S_{ij}:= R(-r_j)
\left[ \frac{a_{ij}+b_{ij}}{2q_{ij}} I_2+
\frac{a_{ij}-b_{ij}}{2q_{ij}}T(u)
\right]
R(r_j).
\]
Since \(a_{ij},b_{ij}\to q_{ij}\neq0\), for small \(\varepsilon\) we have \(a_{ij}/q_{ij}>0\), \(b_{ij}/q_{ij}>0\). Hence, \(S_{ij}\) is symmetric positive definite. This is the polar decomposition, so \(R(\ell_i+r_j)\) is uniquely determined by \(M_{ij}\). Since \(|\ell_i+r_j|<\pi/4\), the angle \(\ell_i+r_j\) is unique, i.e. \(\ell_i+r_j = \ell'_i+r'_j\). Combined with \(r_1 = r'_1 = 0\), this yields \(\ell=\ell'\) and \(r=r'\) whenever \(f(p)=f(p')\). It remains to consider the pairs \((c, uz^\top)\) and \((c', u'z'^\top)\).

Cancelling the outer rotations gives, for all \(i,j\),
\[
\tfrac{a_{ij}+b_{ij}}{2}I_2+\tfrac{a_{ij}-b_{ij}}{2}T(u)
=
\tfrac{a'_{ij}+b'_{ij}}{2}I_2+\tfrac{a'_{ij}-b'_{ij}}{2}T(u').
\]
Taking traces (using \(\operatorname{tr}T(u)=0\)) gives \(a_{ij}+b_{ij}=a'_{ij}+b'_{ij}\), i.e. \(A+B=A'+B'\); subtracting the scalar parts yields
\begin{equation}\label{eq:diff}
(a_{ij}-b_{ij})\,T(u)=(a'_{ij}-b'_{ij})\,T(u').
\end{equation}

If $A=B$, then $(a'_{ij}-b'_{ij})T(u')=0$ and since $T(u') \neq 0$ for all $u$, we have $A' = B'$. If \(A=B\) and \(A'=B'\), then \(z=z'=0\), and injectivity
of \(\psi\) gives \(c=c'\). Therefore
\(uz^\top = u'z'^\top= 0\), so \(g(p) = g(p')\).

Assume \(A\neq B\) and choose \((i_0,j_0)\) with
\(\mathsf{\Delta}_{i_0j_0}\neq 0\), where
\(\mathsf{\Delta}_{ij}:=a_{ij}-b_{ij}\) and
\(\mathsf{\Delta}'_{ij}:=a'_{ij}-b'_{ij}\). Then \eqref{eq:diff} gives \(T(u')=\lambda T(u)\) with
\(\lambda=\mathsf{\Delta}_{i_0j_0}/\mathsf{\Delta}'_{i_0j_0}\), and taking determinants,
\[
-1=\det T(u')=\lambda^2\det T(u)=-\lambda^2,
\]
so \(\lambda=\pm 1\). Injectivity of \(u\mapsto T(u)\) and \(T(u') = \pm T(u)\) gives \(u'= \pm u\); substituting back into \eqref{eq:diff} yields
\(\mathsf{\Delta}_{ij} = \pm \mathsf{\Delta}'_{ij}\), i.e. \(\mathsf{\Delta}' = \pm \mathsf{\Delta}\). With \(A+B=A'+B'\), this forces either \((A',B')=(A,B)\) or \((A',B')=(B,A)\). If \((A',B')=(A,B)\), then 
\(\psi(c'+z')=\psi(c+z)\) and \(\psi(c'-z')=\psi(c-z)\),
and since $\psi$ is a diffeomorphism, we conclude that \(c'+z'=c+z\) and \(c'-z'=c-z\), so \(c'=c\), \(z'=z\).
If \((A',B')=(B,A)\), then \(\psi(c'+z')=\psi(c-z)\) and \(\psi(c'-z')=\psi(c+z)\), hence \(c'+z'=c-z\) and \(c'-z'=c+z\), so \(c'=c\), \(z'=-z\). In the second case \(u'z'^\top=(-u)(-z)^\top=uz^\top\), so \(g(p)=g(p')\) holds in both cases.

Conversely, suppose \(g(p)=g(p')\). Then, by definition of \(g\), the first
three components coincide, i.e. \(\ell=\ell'\), \(r=r'\), \(c=c'\), and $uz^\top=u'z'^\top.$ From \(c = c'\) we have \(A = \psi(c+z)\), \(A' = \psi(c+z')\),
\(B = \psi(c-z)\), \(B' = \psi(c-z')\). If \(z=0\), then \(u'z'^\top= uz^\top=0\). Since \(u'\in S^1\), this forces \(z'=0\). Hence
\(A'=\psi(c)=A\) and \(B'=\psi(c)=B\), so \(a'_{ij}=a_{ij}\) and \(b'_{ij}=b_{ij}\). In the block formula \eqref{eq:m3_compact_param} the coefficient \(a_{ij}-b_{ij}\) vanishes (since \(A=B\)), so the
term \(T(u)\) drops out and \(M_{ij}\) is independent of \(u\). Therefore \(M'_{ij}=M_{ij}\) regardless of the values of \(u,u'\), and \(f(p)=f(p')\).

If \(z\neq 0\), then choosing an index \(k\) with \(z_k\neq 0\) and comparing the \(k\)-th columns gives \(z_k u=z'_k u'\). Taking norms yields \(z'_k=\pm z_k\), hence \(u'=\pm u\) and
\(z'=\pm z\) with the same sign, i.e.\ either \((u',z')=(u,z)\) or
\((u',z')=(-u,-z)\). In the first case \(A'=A\), \(B'=B\),
\(T(u')=T(u)\), so \(M'=M\). In the second case \(z'=-z\), hence
\(A'=B\), \(B'=A\), and \(T(u')=-T(u)\); then
\[
\frac{a'_{ij}+b'_{ij}}{2}=\frac{b_{ij}+a_{ij}}{2}, \qquad
\frac{a'_{ij}-b'_{ij}}{2}T(u')
=\frac{b_{ij}-a_{ij}}{2}\left(-T(u)\right)
=\frac{a_{ij}-b_{ij}}{2}T(u),
\]
so again \(M'_{ij}=M_{ij}\). Thus \(f(p)=f(p')\). Hence the fibres of \(f\) and \(g\) coincide.

Since \(K\) is compact, \(f\) and \(g\) are continuous, and \(f(K)\) and \(g(K)\) are Hausdorff, both
\(f:K\to f(K)\) and \(g:K \to g(K)\) are quotient maps
\cite[Lemma~A.52(b)]{Lee}. Since their fibres coincide,
\cite[Theorem~A.31]{Lee} yields a unique homeomorphism
\(h:f(K) \to g(K)\) satisfying \(h(f(p))=g(p)\) for every \(p \in K\). To pass from these compact models to actual neighborhoods, it remains to check that \(M_0\) and \(0\) are interior points relative to the corresponding spaces. As shown earlier, when \(M \to M_0\), we have \(A, B \to Q_0\) and \(\theta_{ij} \to 0\). Hence $\ell_i = \theta_{i1} \to 0, r_j = \theta_{1j}-\theta_{11} \to 0$. Since \(\psi^{-1}\) is continuous and \(\psi^{-1}(Q_0)= 0\), the convergences \(A\to Q_0\) and \(B\to Q_0\) imply \(c+z=\psi^{-1}(A)\to 0\) and \(c-z=\psi^{-1}(B)\to 0\), hence \(c,z\to 0\). Consequently, for sufficiently small \(\varepsilon>0\), every
\(M\in\mathcal A_3^{\mathrm{orth}}\cap\mathbb B_\varepsilon(M_0)\)
admits a normalized representation \(M=f(\ell,r,c,z,u)\) with $|\ell_i|,|r_j|<\eta, \|c\|,\|z\|<\eta.$
No additional restriction is needed for \(u\), since \(u\in S^1\) by construction and the entire sphere \(S^1\) is included in the last factor of \(K\). Hence $\mathcal{A}_3^{\mathrm{orth}} \cap \mathbb{B}_\varepsilon(M_0) \subset f(K)$. The left-hand side is an open neighborhood of \(M_0\) in the subspace topology of \(\mathcal{A}_3^{\mathrm{orth}}\) (by definition); thus \(M_0\) is an interior point of \(f(K)\) relative to \(\mathcal{A}_3^{\mathrm{orth}}\). To obtain the corresponding statement for \(g(K)\), observe that
\[
\left\{uz^\top:u\in S^1,\ \|z\|\le\eta\right\}
=
\mathcal C_{2,d}\cap\overline{\mathbb B}_\eta^{2d}(0).
\]
Indeed, \(\|uz^\top\|_F = \sqrt{\sum_{i=1}^2\sum_{j=1}^d (u_i z_j)^2} = \sqrt{\left(\sum_{i=1}^2 u_i^2\right)\left(\sum_{j=1}^d z_j^2\right)}
=\|u\|\,\|z\|= \|z\|\), while every nonzero matrix \(Z\in\mathcal C_{2,d}\) can be written as \(Z=uz^\top\) with \(u\in S^1\). Therefore $g(K)=[-\eta,\eta]^{2q-1}\times \overline{\mathbb{B}}_\eta^d(0)\times
\left(\mathcal{C}_{2,d}\cap\overline{\mathbb{B}}_\eta^{2d}(0)\right).$ In particular, $\mathsf{W}:=
(-\eta,\eta)^{2q-1}\times\mathbb{B}_\eta^d(0)\times
\left(\mathcal{C}_{2,d}\cap\mathbb{B}_\eta^{2d}(0)\right)$ is an open neighborhood of \(0\) in
\(Y:= \mathbb{R}^e\times \mathcal{C}_{2,d}\), and \(\mathsf{W} \subset g(K)\).

Finally, choose \(p_0=(0,0,0,0,u_0)\in K\) with arbitrary \(u_0\in S^1\). Then \(f(p_0) = M_0\) and \(g(p_0) = 0\), so \(h(M_0) = 0\). Let $\mathsf{U}_0:= \mathcal{A}_3^{\mathrm{orth}} \cap \mathbb{B}_\varepsilon(M_0)$. Since \(\mathsf{U}_0\subset f(K)\), the set \(h(\mathsf{U}_0)\) is open in \(g(K)\) and contains \(0\). Hence \(\mathsf{W}_0 := h(\mathsf{U}_0) \cap \mathsf{W}\) is an open neighborhood of \(0\) in \(Y\): \(h(\mathsf{U}_0)\) being open in \(g(K)\) means \(h(\mathsf{U}_0) = \mathsf{O} \cap g(K)\) for some open \(\mathsf{O} \subset Y\), so \(\mathsf{W}_0 = \mathsf{O} \cap \mathsf{W}\) is open in \(Y\), and it contains \(0\) because \(0 \in h(\mathsf{U}_0) \cap \mathsf{W}\). Its preimage \(\mathsf U:=h^{-1}(\mathsf W_0)\) is open in \(f(K)\) and contained in the open set \(\mathsf U_0\); hence it is open in \(\mathcal A_3^{\mathrm{orth}}\), and the restriction $h|_\mathsf{U}:\mathsf{U}\longrightarrow \mathsf{W}_0$ is a homeomorphism. Consequently \((\mathcal{A}_3^{\mathrm{orth}},M_0)\) is locally homeomorphic to \((\mathbb{R}^e\times\mathcal{C}_{2, d},0)\), with \(e=2q-1+d\), \(d = \tfrac{q(q-1)}{2}\), and \(\mathcal{C}_{2, d} = \{Z\in\mathbb{R}^{2\times d}:\operatorname{rank} Z \le 1\}\). Since being a topological manifold at a point is a local property and
\[
(\mathcal{A}_3^{\mathrm{orth}},M_0)
\cong_{\mathrm{loc}}
(\mathbb{R}^e\times\mathcal{C}_{2,d},(0,0)),
\]
it is enough to study sufficiently small open neighborhoods of \((0,0)\) in
\(\mathbb{R}^e\times\mathcal{C}_{2,d}\). For simplicity, we consider the case when \(d\) is odd (equivalently, $q \bmod 4 \in \{2,3\}$), and we conclude the absence of an induced manifold structure by the following lemma. 

\begin{lemma}\label{lem:C2d_not_manifold_odd}
Let \(d\ge3\) be odd, \(e\ge0\), and $\mathcal C_{2,d}:=\{Z\in\mathbb R^{2\times d}:\operatorname{rank}Z\le1\}$. Then \(\mathbb R^e\times\mathcal C_{2,d}\) is not a topological manifold at \((0,0)\).
\end{lemma}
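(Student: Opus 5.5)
We argue by contradiction using local homology. The cone $\mathcal C_{2,d}$ consists of $0$ together with $\mathcal C^*:=\{uz^\top : u\in S^1,\ z\in\mathbb R^d\setminus\{0\}\}$. The map $(u,z)\mapsto uz^\top$ is a double cover $S^1\times(\mathbb R^d\setminus\{0\})\to\mathcal C^*$, since $(u,z)$ and $(-u,-z)$ have the same image. Hence $\mathcal C^*$ is the quotient of $S^1\times S^{d-1}\times(0,\infty)$ by the free involution $(u,w)\mapsto(-u,-w)$. So $\mathcal C_{2,d}$ is the open cone over the compact $d$-manifold
\[
L:=(S^1\times S^{d-1})/\{\pm1\}.
\]
Suppose $\mathbb R^e\times\mathcal C_{2,d}$ were a topological manifold near $(0,0)$. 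Away from the origin it is an $(e+d+1)$-manifold, so by invariance of domain its dimension would be $n=e+d+1$.

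\textbf{Step 1 (local homology at the cone point).} By excision and the long exact sequence of the contractible cone $cL$,
\[
H_k(\mathbb R^e\times cL,\ \mathbb R^e\times cL\setminus\{(0,0)\})\cong H_k\bigl(\mathbb R^e\times cL,\ (\mathbb R^e\times cL)\setminus\{0\}\bigr).
\]
The complement $(\mathbb R^e\times cL)\setminus\{0\}$ deformation retracts onto the join $S^{e-1}*L$, which is $\Sigma^e L$. Therefore the local homology is $\tilde H_{k-1}(\Sigma^eL)=\tilde H_{k-1-e}(L)$. For the space to be an $n$-manifold at the origin, $L$ must be a homology $(d)$-sphere: $\tilde H_j(L)=0$ for $j\ne d$, and $\tilde H_d(L)\cong\mathbb Z$. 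When $e=0$ we use the cone directly and get the same conclusion.

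\textbf{Step 2 (compute $H_1(L)$; this is the main obstacle).} The plan is to show $H_1(L;\mathbb Z)\neq0$, or at least $H_1(L;\mathbb Q)\neq 0$. Since the involution is free, $L$ is a manifold with $\pi_1$ determined by the covering $S^1\times S^{d-1}\to L$ with deck group $\mathbb Z/2$. For $d\ge3$, $S^{d-1}$ is simply connected, so $\pi_1(S^1\times S^{d-1})=\mathbb Z$ and $\pi_1(L)$ is an extension of $\mathbb Z/2$ by $\mathbb Z$.

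The projection $L\to S^1/\{\pm1\}\cong S^1$, $[u,w]\mapsto[u]$, is a fibre bundle with fibre $S^{d-1}$. Its monodromy is the antipodal map of $S^{d-1}$, so $\pi_1(L)\cong\pi_1(S^1)=\mathbb Z$. Hence $H_1(L)\cong\mathbb Z\neq0$. Since $d\ge3$ forces $1\neq d$, this contradicts Step 1.

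The oddness of $d$ makes the antipodal map of $S^{d-1}$ orientation preserving, so the bundle is orientable and $L\cong S^1\times S^{d-1}$ up to homology. Via the Wang sequence this gives a consistent check: $H_*(L)$ agrees with $H_*(S^1\times S^{d-1})$, and in particular $H_1=\mathbb Z$, $H_{d-1}=\mathbb Z$. This step needs care in verifying the bundle structure and the monodromy, but it is standard.

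\textbf{Conclusion.} The local homology at $(0,0)$ has nonzero groups in degree $e+2$ (coming from $H_1(L)$) as well as in the top degree. No manifold has local homology of this form, so $\mathbb R^e\times\mathcal C_{2,d}$ is not a topological manifold at $(0,0)$.
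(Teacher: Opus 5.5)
Your proof is correct in its main line, but it takes a different route from the paper. The paper argues by orientability. If $\mathbb{R}^e\times\mathcal{C}_{2,d}$ were a manifold at $(0,0)$, the punctured neighbourhood $\mathbb{B}^e_\delta(0)\times(V_\rho\setminus\{0\})\cong\mathbb{B}^e_\delta(0)\times(0,\rho)\times L_d$ would be an open subset of a Euclidean space, hence orientable. But $L_d\cong(S^1\times S^{d-1})/\{\pm1\}$ is nonorientable for odd $d$, which the paper imports from Davis's criterion for projective product spaces. You instead use that the link of a manifold point must be a homology sphere. You detect the failure through $\tilde H_1(L)\cong\mathbb{Z}$, obtained from the bundle $S^{d-1}\to L\to S^1/\{\pm1\}$ together with $\pi_1(S^{d-1})=0$. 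Your route never uses the parity of $d$, so it proves the lemma for every $d\ge3$; in the paper's application this would remove the restriction $q \bmod 4\in\{2,3\}$. It also replaces the external orientability criterion by standard local homology. The paper's route needs only orientability of open subsets of $\mathbb{R}^n$, but it genuinely requires $d$ odd: for even $d$ the space $L_d$ is orientable and the obstruction disappears.

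Your side remark on parity is false and should be deleted. The antipodal map of $S^{d-1}\subset\mathbb{R}^d$ has degree $(-1)^d$, so for odd $d$ it is orientation-\emph{reversing}; this is exactly the fact the paper's proof rests on. Hence $L$ is nonorientable, and the Wang sequence gives $H_d(L)=0$ and $H_{d-1}(L)\cong\mathbb{Z}/2$, not the homology of $S^1\times S^{d-1}$. Consequently, your conclusion that the local homology is nonzero ``in the top degree'' is wrong for odd $d$: it vanishes there, and a $\mathbb{Z}/2$ appears in degree $e+d$ instead. None of this affects the contradiction, which needs only $\tilde H_1(L)\cong\mathbb{Z}$ in local degree $e+2\neq e+d+1$.

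A smaller slip: the points $(x,0)$ with $x\neq0$ are translates of $(0,0)$ in the $\mathbb{R}^e$ direction and no better behaved. So ``away from the origin it is an $(e+d+1)$-manifold'' is inaccurate as stated. Invariance of domain only needs the dense open subset $\{Z\neq0\}$, which is indeed an $(e+d+1)$-manifold.
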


\begin{proof}
Suppose, to the contrary, that \(\mathbb R^e\times\mathcal C_{2,d}\) is a topological manifold at \((0,0)\). Then there exists an open coordinate neighborhood \(\mathbf U\ni(0,0)\) homeomorphic to an open subset of some Euclidean space. For sufficiently small \(\delta,\rho>0\),
\[
\mathbb{B}_\delta^e(0)\times V_\rho\subset\mathbf U,\qquad V_\rho:=\mathcal C_{2,d} \cap \mathbb{B}_\rho^{2d}(0).
\]
Note that 
\(
V_\rho\setminus\{0\}
=
\mathcal C_{2,d}\cap \left(\mathbb{B}_\rho^{2d}(0)\setminus\{0\}\right)
\) is also open (as an open set with the closed point \(\{0\}\) removed). Hence $\mathbb{B}_\delta^e(0)\times \left(V_\rho\setminus\{0\}\right)\subset\mathbf U$
is itself homeomorphic to an open subset of a Euclidean space and is therefore orientable.

Now put \(L_d:=\mathcal C_{2,d}\cap S^{2d-1}\). Since \(\mathcal C_{2,d}\) is a cone, radial coordinates give a homeomorphism
\[
V_\rho\setminus\{0\}\cong(0,\rho)\times L_d,\qquad
Z\longmapsto\left(\|Z\|_F,\frac{Z}{\|Z\|_F}\right),
\]
with inverse \((r,Y)\mapsto rY\). Moreover, define the continuous map \(\Phi: S^1 \times S^{d-1} \to L_d\), \((u,v) \mapsto u v^\top\). We now prove that it is surjective. Let \(Z \in L_d\), then \(\operatorname{rank} Z = 1\) and \(\|Z\|_F = 1\). Since \(Z\) has rank one, there exist nonzero vectors \(a \in \mathbb R^2\) and \(b \in \mathbb R^d\) such that \(Z = a b^\top\). Consider $u=\frac a{\|a\|}\in S^1, v=\frac b{\|b\|}\in S^{d-1}$. Because $\|Z\|_F=\|ab^\top\|_F=\|a\|\,\|b\|=1$, we obtain $uv^\top=\frac{ab^\top}{\|a\|\,\|b\|} = Z$.
Thus \(\Phi\) is surjective. Let
\[
\pi:S^1\times S^{d-1}\to \left(S^1\times S^{d-1}\right)/\sim,
\qquad
\pi(u,v)=[u,v],
\]
where \((u,v)\sim(-u,-v)\). Since $\Phi(-u,-v)=(-u)(-v)^\top=uv^\top=\Phi(u,v),$ the map \(\Phi\) is constant on the fibres of \(\pi\). Hence, by the universal property of the quotient topology \cite[Theorem~A.30]{Lee}, there exists a unique continuous map
\[
\overline{\Phi}:\left(S^1\times S^{d-1}\right)/\sim\ \longrightarrow L_d
\]
such that \(\Phi=\overline{\Phi}\circ\pi\); explicitly,
$\overline{\Phi}([u,v])=uv^\top$. The map \(\overline{\Phi}\) is surjective because \(\Phi\) is surjective. To prove injectivity, suppose that $uv^\top=u'v'^\top$. Multiplying on the right by \(v\) gives \(u(v^\top v)=u'(v'^\top v)\). Since \(v\in S^{d-1}\), \(v^\top v=\|v\|^2=1\), so \(u=(v'^\top v)u'\). Taking norms and using \(\|u\|=\|u'\|=1\) yields \(1=|v'^\top v|\). By Cauchy-Schwarz, \(|v'^\top v|\le\|v'\|\,\|v\|=1\), and equality holds iff \(v'\) and \(v\) are linearly dependent. Since both are unit vectors, this means \(v'=\pm v\): indeed, linear dependence gives \(v'=\lambda v\), and taking norms yields \(1 = \|v'\| = |\lambda|\,\|v\| = |\lambda|\), so \(\lambda = \pm1\). Hence \(v'^\top v=\pm1\), and from \(u=(v'^\top v)u'\) we get \(u=\pm u'\) with the same sign. Therefore \((u',v')=(u,v)\) or \((u',v')=(-u,-v)\), proving that $\overline{\Phi}$ on the quotient is injective.

Since \(S^1\times S^{d-1}\) is compact, its quotient \(\left(S^1\times S^{d-1}\right)/\sim\) is compact, while \(L_d\subset\mathbb R^{2\times d}\) is Hausdorff. Therefore \(\overline{\Phi}\) is a homeomorphism by \cite[Lemma~A.52(d)]{Lee}. Consequently, \(L_d\) is homeomorphic to the quotient $\left(S^1\times S^{d-1}\right)/\left((u,v)\sim(-u,-v)\right)$. This quotient, along with much more general objects, is studied in \cite{Davis2009}; in his notation, it is \(P_{(1,d-1)}\). By \cite[Corollary~3.8]{Davis2009}, \(P_{\bar{n}}\) is orientable if and only if \(|\bar{n}|+r\) is even. Here \(\bar{n}=(1,d-1)\), so \(|\bar{n}|=d\) and \(r=2\); since \(d\) is odd, \(L_d\) is nonorientable. Therefore
\[
B_\delta^e(0)\times\left(V_\rho\setminus\{0\}\right)
\cong B_\delta^e(0)\times(0,\rho)\times L_d
\]
is nonorientable \cite[Section 3.3, Exercise 5]{Hatcher2002}. This contradicts the fact that it is homeomorphic to an open subset of Euclidean space, since it possesses an orientation \cite[Proposition~15.11]{Lee} and orientation is a homeomorphism invariant \cite[Theorem~3.26 \& Section~2.1, page~108]{Hatcher2002}. Hence \(\mathbb R^e\times\mathcal C_{2,d}\) is not a topological manifold at \((0,0)\).
\end{proof}

To summarize the argument, we combine two local descriptions of the same neighbourhood of \(M_0\). We have already proved that there exist neighborhoods \(\mathsf{U}_0\) of \(M_0\) and \(\mathsf{W}_0\) of \((0,0)\) such that \(\mathsf{U}_0\cong \mathsf{W}_0\subset \mathbb{R}^e\times \mathcal{C}_{2,d}\). Suppose, for contradiction, that \(\mathcal{A}_3^{\mathrm{orth}}\) is a manifold at \(M_0\). Then there is another neighbourhood \(\mathsf{U}_1\) homeomorphic to an open subset of \(\mathbb{R}^s\). Replacing \(\mathsf{U}_0\) by \(\mathsf{U}_0\cap \mathsf{U}_1\), both descriptions apply simultaneously. The image of this intersection in \(\mathbb{R}^e\times\mathcal{C}_{2,d}\) contains \(\mathbb{B}_\delta^e(0) \times V_\rho\) for small \(\delta,\rho>0\). After deleting the zero matrix, we obtain $\mathbb{B}_\delta^e(0)\times(V_\rho\setminus\{0\})
\cong
\mathbb{B}_\delta^e(0)\times(0,\rho)\times L_d.$ For odd \(d\), \(L_d\cong P_{(1,d-1)}\) is nonorientable, so this open set is nonorientable. Its preimage in \(\mathcal{A}_3^{\mathrm{orth}}\) is an open subset of \(\mathsf{U}_1\), hence homeomorphic to an open subset of \(\mathbb{R}^s\), which is orientable. Contradiction. Thus \(\mathcal{A}_3^{\mathrm{orth}}\) is not a topological manifold at \(M_0\).

\section{\texorpdfstring{Geodesics in \(\mathcal{GS}\)-orthogonal matrices for \(k_1 \ge b_2\) and \(m=2\)}{Geodesics in GS-orthogonal matrices for k1 >= b2 and m=2}}
\label{appendix:geodesics}
The fact that \(\Phi\) is a covering map for \(\mathcal{A}_m^{\mathrm{SO}}\) under the condition \(k_1 \ge b_2\) is established in \cite[Theorem~3 and Appendix~B]{OF}. We now prove that $\Phi$ is a local isometry with respect to the standard product metric on $G = \mathcal{B}_2 \times \mathcal{B}_1$ (where \(\mathcal{B}_i = [\mathrm{SO}(b_i)]^{k_i}\)) and the ambient Frobenius metric on $\mathcal{A}_2^{\mathrm{orth}}$. Using this fact, we prove that the geodesics on the manifold of \(\mathcal{GS}\)-orthogonal matrices are projections of geodesics of the group \(G\), where the geodesics decompose into blockwise geodesics and are explicitly known. For this, we need the following proposition and definitions.

\begin{definition}[pages 330-331 in \cite{Lee},  Proposition, Definition 1.106 in \cite{GHL}]
Let $F: M \to N$ be a smooth map and $g$ a Riemannian metric on $N$. The \emph{pullback metric} $F^*g$ on $M$ is defined by \((F^*g)_p(v,w) = g_{F(p)}(\mathrm{d}F_p(v), \mathrm{d}F_p(w)), p \in M,\; v,w \in T_pM.\)
\end{definition}

\begin{definition}[Definition 2.5 in \cite{GHL}, page 24 in \cite{Leeriem} or Definition 1.4.5 in \cite{Jost17}] \label{def:local_isometry}
A smooth map $F: (M,g_M) \to (N,g_N)$ between Riemannian manifolds is a local isometry if for every $p \in M$ the differential $\mathrm{d}F_p: T_pM \to T_{F(p)}N$ is a linear isometry, i.e., $F^*g_N = g_M$.
\end{definition}

\begin{definition}[Definition 1.83 in \cite{GHL} or page 106 in \cite{Lee}]
Let \( M \) and \( \tilde{M} \) be two manifolds. A map \( p : \tilde{M} \to M \) is a smooth covering map if:
\begin{itemize}
\item \( p \) is smooth and surjective,

\item for every point \( x \in M \), there exists a neighborhood \( U \) of \( x \) in \( M \) such that \( p^{-1}(U) \) is a disjoint union \( \bigcup_{i \in I} U_i \) of open subsets of \( \tilde{M} \), and for each \( i \in I \), the restriction \( p: U_i \to U \) is a diffeomorphism.
\end{itemize}
\end{definition}

\begin{definition}[Definition 2.17 in \cite{GHL} or page 27 in \cite{Leeriem}] \label{def:riemannian_covering_map}
Let \( (M, g) \) and \( (\tilde{M}, h) \) be two Riemannian manifolds. A map \( p: \tilde{M} \to M \) is a Riemannian covering map if it is a smooth covering map and a local isometry.
\end{definition}

\begin{proposition}[Proposition 2.81 in \cite{GHL}  or Lemma 11.6 in \cite{Leeriem}] \label{prop:geodesics_projection}
Let \( p: (N, h) \to (M, g) \) be a Riemannian covering map. The geodesics of \( (M, g) \) are the projections of the geodesics of \( (N, h) \), and the geodesics of \( (N, h) \) are the liftings of those of \( (M, g) \).
\end{proposition}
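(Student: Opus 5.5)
The statement to be proved is Proposition~\ref{prop:geodesics_projection}: for a Riemannian covering \(p:(N,h)\to(M,g)\), geodesics of \(M\) are projections of geodesics of \(N\), and geodesics of \(N\) are lifts of geodesics of \(M\). The plan has two halves, each resting on a standard fact. First, local isometries map geodesics to geodesics. Second, smooth covering maps have the unique path-lifting property.

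\textbf{Projection.} Let \(\tilde\gamma\) be a geodesic of \(N\); we show \(\gamma=p\circ\tilde\gamma\) is a geodesic of \(M\). Geodesicity is local, so fix a parameter \(t_0\). Since \(p\) is a covering map and \(dp\) is a linear isometry (Definition~\ref{def:local_isometry}), there is a neighbourhood \(U_i\) of \(\tilde\gamma(t_0)\) on which \(p|_{U_i}:U_i\to U\) is a diffeomorphism. This restriction satisfies \(p^*g=h\), so it is an isometry onto the open set \(U\). Isometries preserve the Levi-Civita connection, because the connection is determined by the metric through the Koszul formula. Hence they preserve the equation \(D_t\dot\gamma=0\). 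One can also see this variationally: an isometry preserves the length and energy of curves, so it maps critical curves to critical curves. Therefore \(\gamma\) is a geodesic near \(t_0\), and hence everywhere.

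\textbf{Lifting.} Let \(\gamma:I\to M\) be a geodesic and choose \(\tilde x_0\in p^{-1}(\gamma(t_0))\). By the path-lifting property of coverings, there is a unique continuous lift \(\tilde\gamma\) with \(\tilde\gamma(t_0)=\tilde x_0\). Locally this lift equals \((p|_{U_i})^{-1}\circ\gamma\), so it is smooth. Because \((p|_{U_i})^{-1}\) is a local isometry, the previous paragraph shows that \(\tilde\gamma\) is locally a geodesic of \(N\), and hence a geodesic. A cleaner alternative argument goes as follows. Take the \(N\)-geodesic \(\sigma\) with \(\sigma(t_0)=\tilde x_0\) and \(\dot\sigma(t_0)=(dp_{\tilde x_0})^{-1}\dot\gamma(t_0)\). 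By the first part, \(p\circ\sigma\) is a geodesic of \(M\) with the same initial data as \(\gamma\). Geodesic uniqueness then gives \(p\circ\sigma=\gamma\) on the common domain, so \(\sigma\) is the lift.

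\textbf{Main obstacle.} The delicate point is the domain of definition in the alternative argument, since \(\sigma\) might a priori live on a shorter interval than \(\gamma\). The path-lifting argument above avoids this issue entirely, because the lift is defined on all of \(I\). For that reason I would present the lifting-based argument and use the initial-data comparison only to identify the lift as the geodesic through \(\tilde x_0\) with the prescribed velocity.
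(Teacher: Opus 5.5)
Your proof is correct and is essentially the standard argument behind the references the paper cites (the paper itself gives no proof beyond the citation). Projection works because each sheet restriction of \(p\) is an isometry, so it preserves the Levi-Civita connection and hence geodesics, which is a local property; lifting works by path-lifting followed by the same local argument. Together your two halves cover both inclusions in each of the two claimed correspondences.
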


For \(\mathcal{B}_i = [\mathrm{SO}(b_i)]^{k_i}\), we endow each \(\mathrm{SO}(b_i)\) with the standard bi‑invariant metric induced by the Frobenius inner product \(\langle X,Y\rangle = \operatorname{tr}(X^\top Y)\). The product Riemannian metric on the Lie group $G = \mathcal{B}_2 \times \mathcal{B}_1$ is given by the direct sum of these metrics. For any two tangent vectors $X = (B_2 C_2, B_1 C_1)$ and $Y = (B_2 D_2, B_1 D_1)$ at a point $(B_2, B_1) \in G$, the metric is explicitly given by:
\begin{equation} \label{eq:product_metric}
g_{(B_2, B_1)}(X, Y) = \sum_{i=1}^{k_2} \operatorname{tr}\left( (C_2^{(i)})^\top D_2^{(i)} \right) + \sum_{j=1}^{k_1} \operatorname{tr}\left( (C_1^{(j)})^\top D_1^{(j)} \right).
\end{equation}
For this metric, we show that the group action \(\Phi\left((B_2, B_1), x\right) = B_2 x B_1^{\top}\) is a Riemannian covering map. By Proposition~\ref{prop:geodesics_projection}, this allows us to obtain the geodesics of \(\mathcal{GS}\)-orthogonal matrices as projections of block-wise geodesics in \(G\). 
As shown in \cite[Appendix~B]{OF}, under the condition \(k_1 \ge b_2\), the map \(\Phi\) is a smooth covering map, which guarantees the existence of a unique Riemannian metric on \(G\) making \(\Phi\) a Riemannian covering map. Below, we rigorously prove that the standard product Frobenius metric \eqref{eq:product_metric} is precisely this unique metric, thereby establishing that \(\Phi\) is a local isometry.

\begin{proposition}
\label{prop:phi_isometry}
Assume $k_1 \ge b_2$. Then the map $\Phi: G \to \mathcal{A}_2^{\mathrm{orth}}$, $\Phi(B_2,B_1) = B_2 P_1 B_1^\top$, is a local isometry, where $G = \mathcal{B}_2 \times \mathcal{B}_1$ is endowed with the product of the standard Frobenius metrics, and $\mathcal{A}_2^{\mathrm{orth}}$ inherits the ambient Frobenius metric.
\end{proposition}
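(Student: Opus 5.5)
Our plan is to verify directly that $\mathrm{d}\Phi_{(B_2,B_1)}$ preserves inner products at every point $(B_2,B_1)\in G$. A tangent vector at $(B_2,B_1)$ has the form $(B_2C_2,\,B_1C_1)$ with $C_i\in\mathfrak{so}_{\text{block}}(b_i,k_i)$. Differentiating $\Phi(B_2,B_1)=B_2P_1B_1^\top$ gives
\[
\mathrm{d}\Phi_{(B_2,B_1)}(B_2C_2,B_1C_1)=B_2C_2P_1B_1^\top + B_2P_1C_1^\top B_1^\top = B_2\bigl(C_2 - P_1C_1P_1^\top\bigr)P_1B_1^\top ,
\]
using $C_1^\top=-C_1$. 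Since $B_2$ and $P_1B_1^\top$ are orthogonal, the Frobenius norm of the image equals $\|C_2-P_1C_1P_1^\top\|_F$. Because $\mathrm{d}\Phi$ is linear, the polarization identity reduces the claim to showing
\[
\|C_2-P_1C_1P_1^\top\|_F^2=\|C_2\|_F^2+\|C_1\|_F^2,
\]
where the right-hand side is exactly the metric \eqref{eq:product_metric} evaluated on the vector with itself.

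Expanding the square, and using that conjugation by a permutation preserves the Frobenius norm, it remains to show the cross term $\langle C_2,P_1C_1P_1^\top\rangle$ vanishes. This is precisely the disjoint-support argument of Section~\ref{subsec:proj_tangent} (Case 1). By Corollary~\ref{corollary:similarity_PS}, applied via Lemma~\ref{similarity_PS} with the roles of $P_1,P_1^\top$ chosen to match the conjugation $P_1(\cdot)P_1^\top$, the matrix $\widehat C_1:=P_1C_1P_1^\top$ is a grid of diagonal $k_1\times k_1$ blocks. Equivalently, by Lemma~\ref{lemma:overlap_indices}, its nonzero entries $(i,j)$ satisfy $i\equiv j\pmod{k_1}$.

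The support of $C_2$ lies in the diagonal $b_2\times b_2$ blocks, so $|i-j|\le b_2-1<k_1$ there when $k_1\ge b_2$. Hence the two supports can only meet on the main diagonal, where both matrices vanish by skew-symmetry, and the cross term is $0$. Thus $\mathrm{d}\Phi$ is a linear isometry onto its image, in particular injective.

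Finally, the image of $\mathrm{d}\Phi$ must be all of $T_{\Phi(B_2,B_1)}\mathcal{A}_2^{\mathrm{orth}}$, and this holds because $\Phi$ is a submersion onto the orbit. Either the orbit map of the Lie group action is onto the tangent space of the orbit, or one compares dimensions using the covering property from \cite[Appendix~B]{OF}. With that, Definition~\ref{def:local_isometry} is satisfied, since $\Phi^*g_{\mathrm{Frob}}$ equals the product metric \eqref{eq:product_metric}.

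The only delicate point is bookkeeping: the statement uses $B_1^\top$, whereas Section~\ref{subsec:proj_tangent} uses $B_1$, so I must check that the conjugation direction ($P_1$ versus $P_1^\top$) still yields the mod-$k_1$ diagonal pattern. If it instead gives a mod-$b_1$ pattern, I would rerun the argument with the equivalent condition $k_2\ge b_1$ noted in the paper.
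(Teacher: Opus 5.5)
Your proposal is correct and follows the paper's proof essentially step for step. Like the paper, you compute $\mathrm{d}\Phi$, strip the orthogonal factors to reduce to $\|C_2-P_1C_1P_1^\top\|_F^2$, kill the cross term by the disjoint-support argument of Case~1, and conclude by polarization. Your bookkeeping worry resolves in your favor: with $P_1=P_{(b_1,k_1)}$, the entry $(C_1)_{rb_1+s,\,rb_1+s'}$ lands at position $(sk_1+r,\,s'k_1+r)$, so the pattern is indeed mod $k_1$ and the fallback is never needed. Your extra surjectivity remark is a harmless refinement that the paper leaves implicit.
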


\begin{proof}
It suffices to verify the equality of norms at each point, since the equality of the metrics follows by polarization:
$$
\begin{aligned}
g_N(\mathrm{d}F_p(v), \mathrm{d}F_p(w)) &= \frac{1}{4}\left(\|\mathrm{d}F_p(v) + \mathrm{d}F_p(w)\|_{g_N}^2 - \|\mathrm{d}F_p(v) - \mathrm{d}F_p(w)\|_{g_N}^2\right) \\
&= \frac{1}{4}\left(\|\mathrm{d}F_p(v + w)\|_{g_N}^2 - \|\mathrm{d}F_p(v - w)\|_{g_N}^2\right) \\
&= \frac{1}{4}\left(\|v + w\|_{g_M}^2 - \|v - w\|_{g_M}^2\right) = g_M(v, w).
\end{aligned}
$$
Let $(B_2,B_1) \in G$ and take a tangent vector \(X = (B_2 C_2,\; B_1 C_1) \in T_{(B_2,B_1)} G,\) where $C_2 \in \mathfrak{so}_{\text{block}}(b_2, k_2)$ and $C_1 \in \mathfrak{so}_{\text{block}}(b_1, k_1)$ are block‑diagonal skew‑symmetric matrices. 
Define $\Omega_2 := B_2 C_2 B_2^\top$ and $\Omega_1 := B_1 C_1 B_1^\top$. The differential of $\Phi$ acts as \(\mathrm{d}\Phi_{(B_2,B_1)}(B_2 C_2,\; B_1 C_1) = (B_2 C_2) P_1 B_1^{\top} + B_2 P_1 (B_1 C_1)^{\top}.\) Since $(B_1 C_1)^{\top} = -C_1 B_1^{\top}$, and $C_1 B_1^{\top} = B_1^{\top} \Omega_1$ (because $C_1 = B_1^{\top} \Omega_1 B_1$), we obtain \(\mathrm{d}\Phi(X) = B_2 C_2 P_1 B_1^{\top} - B_2 P_1 C_1 B_1^{\top}.\)
Now compute the squared Frobenius norm. Left and right multiplication by orthogonal matrices are isometries, hence
\[
\|\mathrm{d}\Phi(X)\|_F^2 = \| B_2^{\top} \mathrm{d}\Phi(X) B_1 P_1^\top\|_F^2 = \| C_2 - P_1 C_1 P_1^\top \|_F^2 = \| C_2\|_F^2 + \|C_1\|_F^2.
\]
The last equality follows by reasoning analogous to that in Section \ref{subsec:proj_tangent}. Also,
\[
\|X\|_G^2 = \| B_2 C_2 \|_F^2 + \| B_1 C_1 \|_F^2 =  \sum_{i=1}^{k_2} \|C_2^{(i)}\|_F^2 + \sum_{j=1}^{k_1} \|C_1^{(j)}\|_F^2 = \|C_2\|_F^2 + \|C_1\|_F^2.
\]
Thus $\|\mathrm{d}\Phi(X)\|_F^2 = \|X\|_G^2 $ for every tangent vector $X$. Polarizing this identity yields $\Phi^* g = h$, i.e., $\Phi$ is a local isometry.
\end{proof}

\begin{corollary}
Being a local isometry --- and, even more strongly, a Riemannian covering map --- is a powerful condition. From it follow the preservation of curve lengths (see \cite[Exercise 13.24]{Lee}), the invariance of the curvature tensor (see \cite[Lemma 7.2]{Leeriem}), and other analogous statements.
\end{corollary}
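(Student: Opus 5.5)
The plan is to derive every listed consequence from Proposition~\ref{prop:phi_isometry}, together with the covering property of $\Phi$ for $k_1 \ge b_2$ established in \cite[Appendix~B]{OF}. The first step is to upgrade the local isometry to a local diffeomorphism. By Proposition~\ref{prop:phi_isometry}, $\mathrm{d}\Phi_{(B_2,B_1)}$ preserves norms, so it is injective. Surjectivity onto $T_{\Phi(B_2,B_1)}\mathcal{A}_2^{\mathrm{orth}}$ follows because $\Phi$ is a submersion onto its orbit; alternatively, it follows from the dimension count in \cite[Appendix~B]{OF}. Hence $\mathrm{d}\Phi$ is a linear isometry between tangent spaces of equal dimension. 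The inverse function theorem then gives, around each point of $G$, an open set $U$ on which $\Phi|_U : U \to \Phi(U)$ is a diffeomorphism satisfying $(\Phi|_U)^* g = h$, i.e.\ a genuine isometry between open Riemannian submanifolds. Combined with the smooth covering property, this makes $\Phi$ a Riemannian covering map in the sense of Definition~\ref{def:riemannian_covering_map}.

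The second step handles curve lengths. For a piecewise smooth curve $\gamma:[a,b]\to G$, the chain rule gives $(\Phi\circ\gamma)'(t) = \mathrm{d}\Phi_{\gamma(t)}(\gamma'(t))$. The norm identity $\|\mathrm{d}\Phi(X)\|_F = \|X\|_G$ from the proof of Proposition~\ref{prop:phi_isometry} then shows that the length integrands agree pointwise. Therefore $L(\Phi\circ\gamma) = L(\gamma)$, which is \cite[Exercise 13.24]{Lee}. This immediately yields that $\Phi$ does not increase Riemannian distance.

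The third step concerns the curvature tensor and the remaining ``analogous statements'' (Levi-Civita connection, geodesics, sectional curvature). Here I would use the local isometries $\Phi|_U$ from the first step. The Levi-Civita connection is uniquely characterised by being torsion-free and metric-compatible, and both properties are preserved under pullback by a diffeomorphism $F$ with $F^*h=g$. Hence $F$ intertwines the connections, so $F^*R_h = R_g$, which is \cite[Lemma 7.2]{Leeriem}. Since curvature is defined pointwise, the local statement suffices. The geodesic correspondence is then Proposition~\ref{prop:geodesics_projection}. Concretely, geodesics of $\mathcal{A}_2^{\mathrm{orth}}$ through $\Phi(B_2,B_1)$ are $t\mapsto \Phi(B_2 e^{tC_2}, B_1 e^{tC_1})$, since the product bi-invariant metric on $G$ has blockwise one-parameter subgroups as geodesics.

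I expect the main obstacle to be the first step, namely establishing that $\mathrm{d}\Phi$ is onto, so that the isometric immersion is really a local diffeomorphism onto $\mathcal{A}_2^{\mathrm{orth}}$. My first approach is to note that the image of $\mathrm{d}\Phi$ is exactly the tangent space of the orbit $\Phi(G)$, which equals $\mathcal{A}_2^{\mathrm{orth}}$ by the orbit description in Section~\ref{sec:manifold_conditions}. If that is insufficient, I would fall back on the dimension equality from \cite[Appendix~B]{OF}, which holds under the same hypothesis $k_1 \ge b_2$.
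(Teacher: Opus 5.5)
Your proposal is correct and follows the paper's route. The paper gives no separate argument beyond citing the standard textbook consequences of Proposition~\ref{prop:phi_isometry} (preservation of curve lengths, and invariance of the Levi-Civita connection and hence of curvature under local isometries), and you write those arguments out; your first step, using the orbit structure or the dimension count of \cite[Appendix~B]{OF} to show that $\mathrm{d}\Phi$ is bijective, is a worthwhile addition, because length preservation needs only $\Phi^*g=h$ whereas curvature invariance needs $\Phi$ to be a local diffeomorphism.
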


We also give a simpler proof that \(\Phi\) is a Riemannian covering map, using similar statements but a more direct argument that avoids dependence on \cite{OF}.

\begin{definition}[Definition 2.101 in \cite{GHL}, page 108 in \cite{Leeriem} or Definition 1.7.1 in \cite{Jost17}] \label{def:geodesically_complete}
A Riemannian manifold \((M,g)\) is said to be geodesically complete if every geodesic can be extended to a geodesic defined on all of \(\mathbb{R}\).
\end{definition}

\begin{theorem}[Hopf-Rinow (see Theorem 6.13 in \cite{Leeriem}, Theorem 2.103 \& Corollary 2.105 in \cite{GHL} or Theorem 1.7.1 in \cite{Jost17})] \label{thm:Hopf-Rinow}
A connected Riemannian manifold is geodesically complete if and only if it is complete as a metric space.  
\end{theorem}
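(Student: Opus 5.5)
The plan is to prove the two implications separately, using only standard local facts about the exponential map: normal and uniformly normal neighborhoods, the Gauss lemma, and the fact that radial geodesics minimize inside a geodesic ball. Throughout, \(d\) denotes the Riemannian distance on the connected manifold \((M,g)\). It is a genuine metric inducing the manifold topology, and it satisfies \(d(\gamma(s),\gamma(t))\le|s-t|\) along any unit-speed curve \(\gamma\).

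\emph{Metric completeness \(\Rightarrow\) geodesic completeness.} Let \(\gamma:[0,b)\to M\) be a maximal unit-speed geodesic and suppose \(b<\infty\). For \(t_n\uparrow b\), the estimate \(d(\gamma(t_n),\gamma(t_m))\le|t_n-t_m|\) makes \((\gamma(t_n))\) a Cauchy sequence, so it converges to some \(p\in M\), and the limit does not depend on the chosen sequence. Now take a uniformly normal neighborhood \(W\ni p\) with radius \(\varepsilon>0\): for every \(q\in W\), \(\exp_q\) is a diffeomorphism on the \(\varepsilon\)-ball of \(T_qM\). Choose \(t_0\) with \(b-t_0<\varepsilon/2\) and \(\gamma(t_0)\in W\). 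Then \(s\mapsto\exp_{\gamma(t_0)}(s\,\dot\gamma(t_0))\) is a geodesic defined for \(|s|<\varepsilon\). By uniqueness of geodesics it agrees with \(\gamma\) on the overlap, so it extends \(\gamma\) beyond \(b\), contradicting maximality.

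\emph{Geodesic completeness \(\Rightarrow\) metric completeness.} Fix \(p\); by hypothesis \(\exp_p\) is defined on all of \(T_pM\). The key lemma is that every \(q\in M\) is joined to \(p\) by a geodesic of length \(r:=d(p,q)\).

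To prove the lemma, take a small geodesic sphere \(S_\delta\) around \(p\) and pick \(x_0\in S_\delta\) minimizing \(d(\cdot,q)\) over this compact set. Write \(x_0=\exp_p(\delta v)\) with \(|v|=1\), and set \(\gamma(t)=\exp_p(tv)\). Every curve from \(p\) to \(q\) crosses \(S_\delta\), which gives \(d(p,q)=\delta+d(x_0,q)\).

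Let \(T=\{t\in[0,r]: d(\gamma(t),q)=r-t\}\). This set is closed by continuity and contains \(\delta\). We need to show \(\sup T=r\). If \(t_0\in T\) with \(t_0<r\), repeat the sphere argument at \(\gamma(t_0)\) with a small radius \(\delta'\): pick \(x'\) on that sphere with \(d(\gamma(t_0),q)=\delta'+d(x',q)\). The triangle inequality then gives \(d(p,x')\ge t_0+\delta'\). On the other hand, the broken curve consisting of \(\gamma|_{[0,t_0]}\) followed by the radial segment to \(x'\) has length exactly \(t_0+\delta'\), so it is length-minimizing. A minimizing piecewise smooth curve is an unbroken geodesic. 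Hence \(x'=\gamma(t_0+\delta')\), so \(t_0+\delta'\in T\), and therefore \(r\in T\), i.e.\ \(\gamma(r)=q\).

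I expect this lemma to be the main obstacle. The delicate point is the corner-smoothing step, where a minimizing broken geodesic has no corner. I would handle it with the first variation formula, or by a local argument inside a geodesically convex ball around the corner.

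Given the lemma, \(\overline{B_R(p)}\subset\exp_p\bigl(\overline{B_R(0)}\bigr)\), and the right-hand side is compact as the continuous image of a compact set. So closed bounded subsets of \(M\) are compact. A Cauchy sequence is bounded, so it lies in such a compact set, has a convergent subsequence, and hence converges. Thus \((M,d)\) is complete.
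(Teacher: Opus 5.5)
Your proof is correct, and it is the standard argument from the references the paper cites; the paper itself does not prove Hopf--Rinow. As in \cite{Leeriem}, one direction extends a geodesic through a uniformly normal neighborhood of its limit point, and the other uses the lemma that $\exp_p$ reaches every $q$ along a geodesic of length $d(p,q)$, which makes closed bounded sets compact. The only compressed step is the end of the lemma, which should read: set $t^\ast=\max T$ (attained because $T$ is closed), and observe that $t^\ast<r$ would give $t^\ast+\delta'\in T$, a contradiction.
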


\begin{proposition}[Lemma 11.6 in \cite{Leeriem} or Proposition 2.106 in \cite{GHL}] \label{prop:Riemannian_covering_map}
Let \( p : (M, g) \to (N, h) \) be a local isometry. If \((M, g)\) is geodesically complete, then \( p \) is a Riemannian covering map and \((N, h)\) is also geodesically complete.
\end{proposition}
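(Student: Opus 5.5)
The strategy is the standard lifting argument for local isometries out of a complete manifold. First I show that $p$ is surjective. Then I show that every geodesic through a point of $N$ lifts to a geodesic of $M$. Finally I use this to build evenly covered normal neighbourhoods. Throughout I assume, as in the intended application, that $M$ and $N$ are connected and of the same dimension. The second assumption is automatic, since $\mathrm{d}p_x$ is a linear isometry and hence bijective.

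\textbf{Step 1: geodesics and completeness.} Because $p$ is a local isometry, it is a local diffeomorphism, and it maps geodesics to geodesics; this is the local statement underlying Proposition~\ref{prop:geodesics_projection}. It also preserves lengths of curves, so $d_N(p(x),p(y))\le d_M(x,y)$. Now fix $x\in M$ and set $y=p(x)$. For $v\in T_yN$ let $w=(\mathrm{d}p_x)^{-1}v$. The curve $t\mapsto p(\exp_x(tw))$ is a geodesic with initial velocity $v$, and it is defined for all $t\in\mathbb R$ by completeness of $M$. By uniqueness of geodesics, $\exp_y$ is therefore defined on all of $T_yN$, and
\[
p\circ\exp_x=\exp_y\circ \mathrm{d}p_x .
\]

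\textbf{Step 2: surjectivity.} This step is the main obstacle, because $N$ is not yet known to be complete. The first idea is to apply Hopf--Rinow (Theorem~\ref{thm:Hopf-Rinow}) at the point $y$: when $\exp_y$ is defined on all of $T_yN$, every point of $N$ can be joined to $y$ by a minimizing geodesic. This pointwise form of Hopf--Rinow needs only that $\exp_y$ is globally defined at one point, not that $N$ is complete. Granting it, take any $z\in N$ and write $z=\exp_y(v)$. Then $z=p(\exp_x(\mathrm{d}p_x^{-1}v))$, so $p$ is onto. Since every point of $N$ is now an image point, Step~1 applies at every point of $N$. Hence $N$ is geodesically complete, which is the second claim of the proposition.

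\textbf{Step 3: evenly covered neighbourhoods.} Fix $y\in N$ and choose $\varepsilon$ smaller than the injectivity radius at $y$. Let $U=B_\varepsilon(y)$, a normal ball, and for each $x\in p^{-1}(y)$ let $U_x=B_\varepsilon(x)$. I then check the following.
\begin{enumerate}
\item The restriction $p|_{U_x}\colon U_x\to U$ is a bijection. By Step~1 we have $p\circ\exp_x=\exp_y\circ\mathrm{d}p_x$ on the $\varepsilon$-ball of $T_xM$. The map $\exp_y$ is injective on the $\varepsilon$-ball, so $\exp_x$ is injective there as well. Hence $U_x$ is the diffeomorphic image of that ball, and $p|_{U_x}$ is a bijective local diffeomorphism, hence a diffeomorphism.
\item The preimage is covered: $p^{-1}(U)=\bigcup_x U_x$. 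Take $q\in p^{-1}(U)$. Let $\gamma$ be the radial geodesic from $p(q)$ back to $y$. Lift $\gamma$ to the geodesic starting at $q$, which exists for all time by completeness of $M$. Its endpoint $x$ satisfies $p(x)=y$, and $d(q,x)<\varepsilon$, so $q\in U_x$.
\item The sets $U_x$ are pairwise disjoint. Suppose $q\in U_x\cap U_{x'}$. The radial geodesics from $q$ to $x$ and to $x'$ both project onto the unique radial geodesic from $p(q)$ to $y$. Both start at $q$, so by uniqueness of lifts through the local diffeomorphism they coincide, and therefore $x=x'$.
\end{enumerate}
These three facts show that $p$ is a smooth covering map. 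Since it is also a local isometry, it is a Riemannian covering map in the sense of Definition~\ref{def:riemannian_covering_map}.
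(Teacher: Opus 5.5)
Your proof is correct. The paper gives no proof of its own and only cites \cite[Lemma 11.6]{Leeriem} and \cite[Proposition 2.106]{GHL}; your argument is essentially the standard one from that literature. You lift geodesics via $p\circ\exp_x=\exp_y\circ\mathrm{d}p_x$, obtain surjectivity and completeness of $N$ from the pointwise Hopf--Rinow theorem, and show that a normal ball of radius below the injectivity radius is evenly covered by the $\varepsilon$-balls around the fibre points. You were also right to assume $N$ connected. The printed statement omits this hypothesis, but it is genuinely needed: the inclusion of a complete manifold into one component of a disconnected $N$ is a local isometry that is not surjective.
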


Since the group \(G\) is complete as a metric space (as a Cartesian product of complete (compact) spaces) and is connected, Theorem~\ref{thm:Hopf-Rinow} implies that \(G\) is geodesically complete (see Definition~\ref{def:geodesically_complete}). Moreover, the group action \(\Phi\) is a local isometry by Proposition~\ref{prop:phi_isometry}. Consequently, by Proposition~\ref{prop:Riemannian_covering_map}, the projections of geodesics of \(G\) are geodesics in the orbit, i.e., in the manifold of \(\mathcal{GS}\)-orthogonal matrices. It remains to show that the artificial transposition (introduced to ensure a well‑defined Lie group action) has no actual effect. Given matrices \(L_1 P R_1 = L_1 P (R_1^\top)^\top\) and \(L_2 P R_2 = L_2 P (R_2^\top)^\top\), an extra transposition is required after connecting \(R_1^\top\) and \(R_2^\top\). As shown in \cite[Proposition 10]{OF}, this is equivalent to simply connecting the blocks.

\begin{remark}
The inclusion of fixed orthogonal boundary matrices (i.e., permutations $P_{\text{in}}$ and $P_{\text{out}}$) yields a global isometry $\widetilde{\Phi}(B_2,B_1) = P_{\text{out}} B_2 P_1 B_1^\top P_{\text{in}}$. Consequently, by the same reasoning, when \(k_1 \geq b_2\) the geodesics in this space are also obtained from blockwise geodesics.
\end{remark}

\section{Hierarchical and Higher-order \texorpdfstring{\(\mathcal{GS}\)}{GS}-Decompositions}
\label{appendix:hierarchical_gs}

Suppose that each block \(V\) inside a block‑diagonal factor \(B_i\) is itself parameterized as a smaller \(\mathcal{GS}\) matrix (with the same \(m\) for every block). For example, for \(m=2\) we might write \(V = \widetilde{B}_2 \widetilde{P} \widetilde{B}_1\). Substituting such expansions back into every block of the global matrix algebraically flattens the nested structure into a standard, higher‑order \(\mathcal{GS}\) decomposition of increased depth \(m\). The inner permutations \(\widetilde{P}\) systematically assemble into global block‑permutation matrices, which are themselves permutation matrices. However, note that if each \(\widetilde{P}\) is a Perfect Shuffle matrix, the resulting global permutation is a block‑perfect shuffle, which globally is not a standard perfect shuffle matrix, while the inner block‑diagonal matrices concatenate into global block‑diagonal factors.

This flattening principle is particularly interesting when applied to the $m=3$ CS-decomposition from Appendix \ref{appendix:dimension_m3} ($q=2$). If we recursively decompose the outer \(N/2 \times N/2\) blocks (provided \(N\) is a multiple of 4) using the same \(m=3\) pattern, we have several options: we may apply this decomposition only to \(B_3\), only to \(B_1\), or to both. Observe that at each step we obtain several new decompositions of the orthogonal group; however, as demonstrated in Remark~\ref{rem:dimensional_deficiency_m3}, no excess degrees of freedom are added. Specifically, for a matrix of size \(N = 2^k p\), the choice of which block-diagonal factors to recursively decompose determines different decompositions. When \(N = 2^k\), we can descend all the way down to base blocks of size \(2 \times 2\), yielding an alternation of block‑diagonal matrices (with \(2 \times 2\) blocks) and permutation matrices that possess a block‑perfect shuffle structure. These decompositions are memory‑heavy and impractical for learning orthogonal matrices, but they resemble Givens matrices and may find applications in other domains.

Now we demonstrate \(\mathcal A_m^{\mathrm{orth}} \neq \mathcal A_m \cap \mathrm{O}(N)\) for factorizations with $m = 4$ factors. Specifically, for $m=4$, we construct a matrix $Q \in \mathcal{A}_4 \cap \mathrm{O}(N)$ that does not belong to $\mathcal{A}_4^{\mathrm{orth}}$. We consider the minimal case: $N=6$, with $m=4$ factors. Each factor consists of three blocks of size $2 \times 2$. We fix the permutation matrix between every layer to be the standard Perfect Shuffle $P = P_{(2,3)}$. 

We will construct a counterexample in a partially explicit way. By definition, $Q \in \mathcal{A}_4^{\mathrm{orth}}$ admits a factorization $Q = U_4 P U_3 P U_2 P U_1$, where $U_i \in [\mathrm{O}(2)]^3$. Observe that by isolating the two inner layers, we have:
\begin{equation*}
\begin{pmatrix}
(U_4^\top)_{11}\,Q_{11}\,(U_1^\top)_{11} &
(U_4^\top)_{11}\,Q_{12}\,(U_1^\top)_{22} &
(U_4^\top)_{11}\,Q_{13}\,(U_1^\top)_{33} \\
(U_4^\top)_{22}\,Q_{21}\,(U_1^\top)_{11} &
(U_4^\top)_{22}\,Q_{22}\,(U_1^\top)_{22} &
(U_4^\top)_{22}\,Q_{23}\,(U_1^\top)_{33} \\
(U_4^\top)_{33}\,Q_{31}\,(U_1^\top)_{11} &
(U_4^\top)_{33}\,Q_{32}\,(U_1^\top)_{22} &
(U_4^\top)_{33}\,Q_{33}\,(U_1^\top)_{33}
\end{pmatrix} = P U_3 P U_2 P =: V.
\end{equation*}

Because the diagonal blocks \((U_4^\top)_{ii}\) and \((U_1^\top)_{jj}\) are orthogonal matrices, we use the conjugation relation $Q_{ij} = U_4^{(i)} \, V_{ij} \, U_1^{(j)}.$ This two-sided orthogonal equivalence will be our main tool. Because each \(2\times 2\) orthogonal block is expressed via sines and cosines, the blocks are quite strongly related. Making this precise is somewhat technical; the details are given below.

We now explicitly compute the $2 \times 2$ blocks of the core matrix $V = P U_3 P U_2 P$, where $U_3 = \operatorname{diag}(W_1, W_2, W_3), \, U_2 = \operatorname{diag}(Y_1, Y_2, Y_3)$ with $W_i, Y_i \in \mathrm{O}(2)$. We parametrize $2 \times 2$ orthogonal blocks as follows
\[
W_i = \begin{pmatrix} a_i & b_i \\ c_i & d_i \end{pmatrix}, \qquad Y_i = \begin{pmatrix} e_i & f_i \\ g_i & h_i \end{pmatrix}, \quad \text{for } i \in \{1,2,3\}.
\]
Note that \( |a_i| = |d_i| \), \( |b_i| = |c_i| \), \( |e_i| = |h_i| \), \( |f_i| = |g_i| \), $|a_i|^2+|b_i|^2=1, \, |e_i|^2+|f_i|^2=1$ (these identities follow directly from the orthogonality of the \(2\times2\) blocks).

For \(P=P_{(2,3)}\), the corresponding zero-based index map is $\sigma=(0,3,1,4,2,5).$ Using this permutation, the two matrices entering the product
\(V=(P U_3 P)(U_2P)\) are
\[
P U_3 P=
\begin{pmatrix}
a_1&0&b_1&0&0&0\\
0&b_2&0&0&a_2&0\\
0&0&0&a_3&0&b_3\\
c_1&0&d_1&0&0&0\\
0&d_2&0&0&c_2&0\\
0&0&0&c_3&0&d_3
\end{pmatrix},
\qquad
U_2P=
\begin{pmatrix}
e_1&0&f_1&0&0&0\\
g_1&0&h_1&0&0&0\\
0&f_2&0&0&e_2&0\\
0&h_2&0&0&g_2&0\\
0&0&0&e_3&0&f_3\\
0&0&0&g_3&0&h_3
\end{pmatrix}.
\]
Therefore,
\begin{equation} \label{eq:gs_m4_middle_part}
V=P U_3 P U_2 P=
\begin{pmatrix}
a_1e_1&b_1f_2&a_1f_1&0&b_1e_2&0\\
b_2g_1&0&b_2h_1&a_2e_3&0&a_2f_3\\
0&a_3h_2&0&b_3g_3&a_3g_2&b_3h_3\\
c_1e_1&d_1f_2&c_1f_1&0&d_1e_2&0\\
d_2g_1&0&d_2h_1&c_2e_3&0&c_2f_3\\
0&c_3h_2&0&d_3g_3&c_3g_2&d_3h_3
\end{pmatrix}.   
\end{equation}

Only three \(2\times2\) blocks of \(V\) are needed:
\begin{equation} \label{eq:V_block_description}
V_{13}=
\begin{pmatrix}
b_1e_2&0\\
0&a_2f_3
\end{pmatrix},
\qquad
V_{22}=
\begin{pmatrix}
0&b_3g_3\\
c_1f_1&0
\end{pmatrix},
\qquad
V_{31}=
\begin{pmatrix}
d_2g_1&0\\
0&c_3h_2
\end{pmatrix}.
\end{equation}

Since \(Q_{ij}=U_4^{(i)}V_{ij}U_1^{(j)}\) and the outer blocks are orthogonal, \(Q_{ij}Q_{ij}^\top\) and \(V_{ij}V_{ij}^\top\) are orthogonally similar. We will use this observation to derive a contradiction from these three blocks alone.

We now want to construct \(Q \in \mathcal{A}_4 \cap \mathrm{O}(6)\), in other words, we impose $Q Q^\top = B_4 C B_1 B_1^\top C^\top B_4^\top = I_6,$ which is equivalent to $C B_1 B_1^\top C^\top = B_4^{-1} B_4^{-\top}$. Introducing \( H := B_1 B_1^\top \) and \( G := B_4^{-1} B_4^{-\top} \) (so that \( B_1 = H^{1/2} \) and \( B_4 = G^{-1/2} \)), we can reformulate the problem as finding positive definite block-diagonal matrices \( H \) and \( G \) satisfying \( C H C^\top = G \).
In fact, such matrices are found by solving systems of equations (and inequalities); the solutions constitute an entire family. For the sake of brevity, we present only a single example of such matrices.

Let $A(s):=\begin{pmatrix}1&1\\1&s\end{pmatrix}$ and set
$B_2=\operatorname{diag}\left(A(-3),A(-1/3),A(1/3)\right)$ and $B_3=\operatorname{diag}\left(A(1/3),A(-1/3),A(-3)\right)$.
Then substituting into \eqref{eq:gs_m4_middle_part}, we obtain
\[
C:=PB_3PB_2P=
\begin{pmatrix}
1&1&1&0&1&0\\
1&0&-3&1&0&1\\
0&-\frac13&0&1&1&\frac13\\
1&\frac13&1&0&\frac13&0\\
-\frac13&0&1&1&0&1\\
0&-\frac13&0&-3&1&-1
\end{pmatrix}.
\]
Define
\[
H_1=\begin{pmatrix}\frac{7}{4}&-3\\-3&\frac{21}{4}\end{pmatrix},\qquad
H_2=\begin{pmatrix}\frac{1}{4}&\frac{1}{3}\\\frac{1}{3}&\frac{1}{2}\end{pmatrix},\qquad
H_3=\begin{pmatrix}\frac{3}{4}&-1\\-1&\frac{3}{2}\end{pmatrix},
\]
and \(H:=\operatorname{diag}(H_1,H_2,H_3)\). $H_i$ are positive definite, and
\[
CHC^\top=G:=\operatorname{diag}(G_1,G_2,G_3),
\]
where
\[
G_1=\begin{pmatrix}2&-\frac{8}{3}\\-\frac{8}{3}&4\end{pmatrix},\quad
G_2=\begin{pmatrix}\frac{4}{3}&\frac{8}{9}\\\frac{8}{9}&\frac{2}{3}\end{pmatrix},\quad
G_3=\begin{pmatrix}\frac{28}{9}&-\frac{16}{3}\\-\frac{16}{3}&\frac{28}{3}\end{pmatrix}.
\]
\(G\) is also positive definite and invertible. Set $B_1:=H^{1/2}, B_4:=G^{-1/2}, Q:=B_4CB_1.$ It remains to show \(Q\notin\mathcal{A}_4^{\mathrm{orth}}\). Note that
$C_{13}=I_2, C_{22}=\begin{pmatrix}0&1\\1&0\end{pmatrix}, C_{31}=-\frac{1}{3}I_2.$ Since \(Q_{ij}=G_i^{-1/2}C_{ij}H_j^{1/2}\),
\[
Q_{13}Q_{13}^\top=Q_{22}Q_{22}^\top=\frac{3}{8}I_2,\qquad
Q_{31}Q_{31}^\top=\frac{1}{16}I_2.
\]

Now assume \(Q=U_4PU_3PU_2PU_1\) with \(U_i\in[\mathrm{O}(2)]^3\), and set \(V:=PU_3PU_2P\). Since \(Q_{ij}=U_4^{(i)}V_{ij}U_1^{(j)}\), we have 
$Q_{ij}Q_{ij}^\top = U_4^{(i)} \, V_{ij} V_{ij}^\top \, (U_4^{(i)})^\top = \mathrm{const * } I_2$, i.e.
$V_{13}V_{13}^\top=V_{22}V_{22}^\top=\frac{3}{8}I_2, 
V_{31}V_{31}^\top=\frac{1}{16}I_2.$ Using \eqref{eq:V_block_description}, we obtain
\[
|b_1 e_2|=|b_3 g_3| = |a_2 f_3| = |c_1 f_1| =\sqrt{\frac{3}{8}},\qquad
|c_3 h_2|= |d_2 g_1|= \frac{1}{4}.
\]
But orthogonality of \(W_3,Y_2\) gives $|b_3|=|c_3|, |e_2|=|h_2|,$ while \(|b_1|,|g_3| \le 1\). Hence we obtain a contradiction
\[
\frac{3}{8} = |b_1 e_2|\,|b_3 g_3|
= |b_1|\,|e_2|\,|b_3|\,|g_3|
= |b_1|\,|h_2|\,|c_3|\,|g_3|
=
|b_1 g_3|\,|c_3 h_2|
\le
\frac{1}{4},
\]
Therefore $Q \notin \mathcal{A}_4^{\mathrm{orth}}$, i.e.
$\mathcal{A}_4^{\mathrm{orth}} \neq \mathcal{A}_4 \cap \mathrm{O}(6).$ For conciseness, the explicit matrix multiplications are omitted; verification of the claimed properties can be found in the file \texttt{gs\_testing.ipynb}.

We now demonstrate how a decomposition consisting of \(m\) matrices can be extended to one consisting of \(m+1\) matrices while preserving the same properties. Although this construction is somewhat artificial, it demonstrates that different variants may occur in higher-order decompositions.

Let $P_{m} := \Pi_{12} =
\begin{pmatrix}
0&I_p&0\\
I_p&0&0\\
0&0&I_{N-2p}
\end{pmatrix} \neq I_N$ (with $N=pq$) be the permutation matrix interchanging the first two block coordinates. Define an $(m+1)$-factor family by setting \(P_m=\Pi_{12}\) and requiring the new factor \(B_{m+1}\) to have the same block structure as \(B_m\). Then $\mathcal A_{m+1}
= \Pi_{12}\mathcal A_m, \mathcal A_{m+1}^{\mathrm{orth}}
= \Pi_{12}\mathcal A_m^{\mathrm{orth}}.$ Moreover, \(\mathcal A_{m+1}^{\mathrm{orth}}\) is a manifold
if and only if \(\mathcal A_m^{\mathrm{orth}}\) has the same property.

We now prove this claim. Let $B_m = \operatorname{diag} \left(A^{(1)},A^{(2)},\ldots,A^{(q)} \right)$
be an arbitrary admissible leftmost factor. Conjugation by \(\Pi_{12}\) simply interchanges its first two blocks. Indeed,
on the first two block coordinates,
\[
\begin{pmatrix}
0&I_p\\
I_p&0
\end{pmatrix}
\begin{pmatrix}
A^{(1)}&0\\
0&A^{(2)}
\end{pmatrix}
\begin{pmatrix}
0&I_p\\
I_p&0
\end{pmatrix}
=
\begin{pmatrix}
A^{(2)}&0\\
0&A^{(1)}
\end{pmatrix},
\]
whereas all remaining blocks are unchanged. Hence \\ $\Pi_{12}^{\top}B_m\Pi_{12} = \operatorname{diag} \left(A^{(2)},A^{(1)}, A^{(3)},\ldots,A^{(q)}\right),$ which has exactly the same block structure as \(B_m\). Now consider an arbitrary matrix in the extended family $A_{m+1} = B_{m+1}\Pi_{12}B_mP_{m-1}\cdots P_1B_1.$ $B_{m+1}\Pi_{12} B_m = \Pi_{12} \left(\Pi_{12}^{\top}B_{m+1}\Pi_{12}\right)B_m$ since \(\Pi_{12}^2=I_N\).
Both matrices inside the parentheses and \(B_m\) are block diagonal with the same \(p\times p\) block structure. Their product therefore has the
same structure. Denoting this product by \(\widetilde B_m\), we obtain
\[
A_{m+1} =
\Pi_{12}\widetilde B_mP_{m-1}\cdots P_1B_1
\in
\Pi_{12}\mathcal A_m.
\]
Thus $\mathcal A_{m+1}\subseteq\Pi_{12}\mathcal A_m.$ Conversely, for every $A=B_m P_{m-1}\cdots P_1B_1\in\mathcal A_m,$ we may choose \(B_{m+1}=I_N\). Then $\Pi_{12}A = B_{m+1}\Pi_{12}B_mP_{m-1}\cdots P_1B_1
\in\mathcal A_{m+1}.$ Hence $\mathcal A_{m+1}=\Pi_{12}\mathcal A_m.$

Exactly the same argument applies to orthogonal factors. Indeed, if
\(B_{m+1}\) and \(B_m\) are block-diagonal orthogonal matrices, then $\left(\Pi_{12}^{\top}B_{m+1}\Pi_{12} \right)B_m$ is again block-diagonal and orthogonal. Therefore $\mathcal A_{m+1}^{\mathrm{orth}} = \Pi_{12}\mathcal A_m^{\mathrm{orth}}.$
Since \(\Pi_{12}\) is orthogonal, $\Pi_{12}A\in\mathrm{O}(N) \Longleftrightarrow A\in\mathrm{O}(N),$ and consequently $\mathcal A_{m+1}\cap\mathrm{O}(N)
= \Pi_{12}\left(\mathcal A_m\cap\mathrm{O}(N)\right).$
Together with $\mathcal A_{m+1}^{\mathrm{orth}} = \Pi_{12}\mathcal A_m^{\mathrm{orth}},$ this proves the claimed equivalence.

Finally, left multiplication $M\longmapsto\Pi_{12}M$
is a linear isometry of \(\mathbb R^{N\times N}\), since
$\|\Pi_{12}M-\Pi_{12}M_0\|_F = \|M-M_0\|_F.$

Hence it is an ambient diffeomorphism mapping
\(\mathcal A_m^{\mathrm{orth}}\) onto
\(\mathcal A^{\mathrm{orth}}_{m+1} \) and \(\mathcal A_m \cap\mathrm{O}(N)\) onto \(\mathcal A_{m+1} \cap\mathrm{O}(N)\). Therefore their local topological and smooth structures are identical.

\begin{corollary}
For every \(m \ge 4\), there exist decompositions with $\mathcal A_{m}^{\mathrm{orth}} = \mathcal A_{m}\cap \mathrm{O}(N),$ and others where the equality fails. The same flexibility holds for the manifold structure of $\mathcal A_{m}^{\mathrm{orth}}$ and $\mathcal A_{m}\cap \mathrm{O}(N)$.
\end{corollary}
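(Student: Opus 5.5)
The plan is to derive everything from the $\Pi_{12}$-extension construction established just above. It shows that appending $P_m=\Pi_{12}$ and a new factor $B_{m+1}$ with the same block structure as $B_m$ gives
\[
\mathcal A_{m+1}^{\mathrm{orth}}=\Pi_{12}\mathcal A_m^{\mathrm{orth}},
\qquad
\mathcal A_{m+1}\cap\mathrm O(N)=\Pi_{12}\bigl(\mathcal A_m\cap\mathrm O(N)\bigr).
\]
Since $M\mapsto\Pi_{12}M$ is an ambient linear isometry, the extension preserves three properties simultaneously: whether the equality $\mathcal A^{\mathrm{orth}}=\mathcal A\cap\mathrm O(N)$ holds, whether $\mathcal A^{\mathrm{orth}}$ is a (topological or smooth, embedded) manifold, and whether $\mathcal A\cap\mathrm O(N)$ is one.

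The new leftmost factor $B_{m+1}$ has the same block structure as $B_m$. Hence the construction can be iterated whenever $B_m$ has at least two blocks, which is required for $\Pi_{12}$ to make sense. By induction, any property held by a base decomposition with $m_0$ factors therefore propagates to some decomposition with $m$ factors for every $m\ge m_0$. The proof thus reduces to exhibiting base cases with $m_0\le 4$.

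\textbf{Equality.} For equality I take the base case $m_0=3$. Proposition~\ref{prop:m3} gives $\mathcal A_3\cap\mathrm O(N)=\mathcal A_3^{\mathrm{orth}}$ for arbitrary permutations and block sizes, and any choice whose leftmost factor has at least two blocks will do. Extending once and iterating yields equality for every $m\ge4$.

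\textbf{Failure of equality.} For the failure I take $m_0=4$, namely the explicit $N=6$ counterexample above. There $Q\in\mathcal A_4\cap\mathrm O(6)$ but $Q\notin\mathcal A_4^{\mathrm{orth}}$. Its leftmost factor consists of three $2\times2$ blocks, so $\Pi_{12}$ (with $p=2$) is admissible. The extension then gives $\Pi_{12}Q\in(\mathcal A_5\cap\mathrm O(6))\setminus\mathcal A_5^{\mathrm{orth}}$, and so on for all $m\ge4$.

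\textbf{Manifold structure.} For the manifold property I use two further base cases.

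\begin{itemize}
\item \emph{Positive case.} Take $m_0=3$ with $q=2$. By the CS decomposition, $\mathcal A_3^{\mathrm{orth}}=\mathcal A_3\cap\mathrm O(N)=\mathrm O(N)$, which is a manifold. Here the leftmost factor has $q=2$ blocks, so the extension applies.
\item \emph{Negative case.} Take $m_0=3$ with perfect shuffles, $p=2$ and $q\bmod 4\in\{2,3\}$. The argument culminating in Lemma~\ref{lem:C2d_not_manifold_odd} shows that $\mathcal A_3^{\mathrm{orth}}$ is not a topological manifold at $M_0$. By Proposition~\ref{prop:m3} the same holds for $\mathcal A_3\cap\mathrm O(N)$. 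The leftmost factor $B_3$ has $q\ge3$ blocks of size $2$, so again the extension applies.
\end{itemize}

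Iterating from $m_0=3$ in both cases produces, for every $m\ge4$ (indeed $m\ge3$), decompositions in each class. The non-manifold property transfers because the isometry maps the point $M_0$ and its neighbourhoods homeomorphically.

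\textbf{Main obstacle.} The only delicate point is bookkeeping: checking that each iterated family is a legitimate instance of Definition~\ref{GS_main_definition}. This requires that the block sizes remain compatible, and that the leftmost factor has at least two blocks, so that $\Pi_{12}\neq I_N$ and conjugation by $\Pi_{12}$ preserves the block structure. I would verify this once for a general base and note that it is inherited at every step. Everything else is a direct transfer through the isometry $M\mapsto\Pi_{12}M$.
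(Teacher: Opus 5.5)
Your proposal is correct and follows essentially the same route as the paper. The corollary is obtained by pushing the base cases through the $\Pi_{12}$-extension, which acts as an ambient linear isometry: the $m=3$ results (Proposition~\ref{prop:m3}, the CS decomposition for $q=2$, the non-manifold point $M_0$ for $p=2$) and the $N=6$, $m=4$ counterexample. Your bookkeeping that the leftmost factor must have at least two blocks, and that this is inherited under iteration, makes explicit a detail the paper leaves implicit.
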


\begin{remark}
Given \(m\), permutation matrices and block sizes, the proof or disproof of topological properties, the sufficiency of orthogonal blocks and related questions -- becomes cumbersome. It is unclear whether there is a way to simplify the arguments.
\end{remark}

\begin{remark}
\label{rem:gs_generalizations}
We note that we have repeatedly relied on the fact that the matrix \(P^\top (\cdot) P\) (where \(P\) is a Perfect Shuffle permutation) possesses a block structure with diagonal blocks. This property, however, is not exclusive to the Perfect Shuffle: for instance, one may first shuffle the elements between the blocks and then apply the Perfect Shuffle. Furthermore, the CS decomposition (and hence the hierarchical \(\mathcal{GS}\) decompositions) also holds in the unitary case, and Proposition~\ref{prop:m3}, the counterexamples, and so on, extend naturally to some other permutation matrices and/or unitary blocks.
\end{remark}

\section{Invertible blocks} \label{appendix:invertible}
Following the orthogonal-block case \cite{OF}, we can define a group action that yields an immersed submanifold (see Problem 21‑17 in \cite{Lee}), which is not necessarily embedded. Although it may still be embeddable, our tools only guarantee an immersion. This means that the topology induced on the set may differ from that of \(\mathrm{GL}(N)\); nevertheless, it remains a submanifold. Also observe that the set \(P_1^\top \operatorname{diag}(\lambda_1, \dots, \lambda_N) P_1 = \operatorname{diag}(\lambda_{\pi(1)}, \dots, \lambda_{\pi(N)})\) is contained in \(\mathcal{B}_2^{\times} \). This implies that the dimension of \(\text{Stab}_G(P_1)\) is at least \(N\). Hence, under the condition \(k_1 \ge b_2\), the perfect shuffle matrix also maximizes the dimension of the decomposition set in the case of invertible blocks.

This means that for \(m = 2\) we can also run RGD as in the orthogonal-block case, but such an extension lies beyond the scope of this paper.

\section{Generalization to Other Compact Matrix Groups}
\label{appendix:compact_groups}

The Riemannian framework extends naturally to some other compact Lie groups with only minor modifications. For \(\mathrm{SO}(b)\), the tangent space and gradient projection are identical to the orthogonal case \(\mathrm{O}(b)\); the determinant \(+1\) is preserved by several retractions provided the initial point lies in \(\mathrm{SO}(b)\). For the unitary group \(\mathrm{U}(b)\), the Riemannian metric uses the real part of the Frobenius inner product, and the orthogonal projection replaces skew‑symmetric matrices with skew‑Hermitian ones: \(\operatorname{skew}(A)=\frac{1}{2}(A-A^*)\). Note that in \(\mathrm{U}(b)\), an overlap is always present, the \(\mathrm{U}(b)\) case uses the same algorithmic structure; these modifications are straightforward and do not alter the method's core efficiency or structural design. Mathematical and implementation details are available in the library's source code.

\bibliographystyle{siamplain}
\bibliography{references}
\end{document}